
\documentclass[10pt]{article} 

\usepackage[accepted]{rlj} 
\usepackage{graphicx} 
\usepackage{amsfonts}
\usepackage{amsthm}
\usepackage{amssymb}
\newtheorem{theorem}{Theorem}

\newtheorem{assumption}{Assumption}
\hypersetup{
    colorlinks=true, 
    linkcolor=blue,  
    filecolor=magenta, 
    urlcolor=cyan,    
    citecolor=blue    
}
\usepackage{breqn} 
\usepackage{subcaption}
\usepackage{algorithm}
\usepackage[noend]{algorithmic}
\newcommand{\argmax}{\mathop{\rm arg~max}\limits}
\newcommand{\argmin}{\mathop{\rm arg~min}\limits}
%

\setcounter{topnumber}{5}
\setcounter{bottomnumber}{5}
\setcounter{totalnumber}{10}
%
\definecolor{scarlet}{RGB}{255, 36, 0}   
\definecolor{violet}{RGB}{143, 0, 255}   

\title{Which Experiences Are Influential for RL Agents?\\ Efficiently Estimating The Influence of Experiences}

\setrunningtitle{Which Experiences Are Influential for RL Agents?}


\author{Takuya Hiraoka\textsuperscript{1,2}, Guanquan Wang\textsuperscript{2,3}, Takashi Onishi\textsuperscript{1,2},\\ Yoshimasa Tsuruoka\textsuperscript{2,3}}


\emails{ \{takuya-h1, takashi.onishi\}@nec.com, \\ \{guanquan-wang@g.ecc, tsuruoka@logos.t\}.u-tokyo.ac.jp}

\affiliations{
$^{1}$\textbf{NEC Corporation}\\ 
$^{2}$\textbf{National Institute of Advanced Industrial Science and Technology}\\ 
$^{3}$\textbf{The University of Tokyo}
}

\contribution{
For the first time, we propose a method that efficiently (i) estimates the influence of individual experiences (i.e., data) on the performance (e.g., empirical returns) of an RL agent and (ii) disables that influence when necessary (Section~\ref{sec:proposed_method}). 

\textbf{Why is this contribution valuable?} 
In many RL settings, we must manage experiences of different quality levels. 
For example, in an off-policy RL setting, experiences collected from multiple policies—ranging from random to near-optimal—are used to learn policies or Q-functions. 
When experiences of different quality are intermixed, the ability to estimate their influence on performance and disable any harmful influences is highly beneficial for many purposes. 
For instance, (i) if an RL agent’s performance is degraded by specific detrimental experiences, disabling their influence can help improve the agent’s overall performance. 
(ii) In safety-critical applications (e.g., human-in-the-loop robotics or autonomous driving), this ability may ensure safety by disabling the influence of experiences that degrade safety performance before deployment. 
In addition, (iii) in memory-intensive scenarios like image-based RL, where each experience consumes substantial computational memory, this ability may enable efficient management of experiences by screening out less useful experiences. 
Finally, (iv) when refining RL task design, analyzing influential experiences may provide valuable insights for improving reward functions or state representations, leading to better task design.\\ 
}
{
(i) No prior work has addressed the efficient estimation and disabling of the influence of experiences in the online RL context. 
(ii) As a first step, this paper focuses on verifying the proposed method’s effectiveness within single-task off-policy RL settings (MuJoCo and DMC) (Section~\ref{sec:experiments} and \ref{sec:application}, and Appendix~\ref{app:adversarial_dmc}). 
}

\keywords{reinforcement learning, data influence estimation} 

\summary{
In reinforcement learning (RL) with experience replay, experiences stored in a replay buffer influence the RL agent's performance. 
Information about how these experiences influence the agent's performance is valuable for various purposes, such as identifying experiences that negatively influence underperforming agents. 
One method for estimating the influence of experiences is the leave-one-out (LOO) method. 
However, this method is usually computationally prohibitive. 
In this paper, we present Policy Iteration with Turn-over Dropout (PIToD), which efficiently estimates the influence of experiences. 
We evaluate how correctly PIToD estimates the influence of experiences and its efficiency compared to LOO. 
We then apply PIToD to amend underperforming RL agents, i.e., we use PIToD to estimate negatively influential experiences for the RL agents and to delete the influence of these experiences. 
We show that RL agents' performance is significantly improved via amendments with PIToD. 
Our code is available at: \url{https://github.com/TakuyaHiraoka/Which-Experiences-Are-Influential-for-RL-Agents}
}

\begin{document}

\makeCover  
\maketitle  

\begin{abstract}
In reinforcement learning (RL) with experience replay, experiences stored in a replay buffer influence the RL agent's performance. 
Information about how these experiences influence the agent's performance is valuable for various purposes, such as identifying experiences that negatively influence underperforming agents. 
One method for estimating the influence of experiences is the leave-one-out (LOO) method. 
However, this method is usually computationally prohibitive. 
In this paper, we present Policy Iteration with Turn-over Dropout (PIToD), which efficiently estimates the influence of experiences. 
We evaluate how correctly PIToD estimates the influence of experiences and its efficiency compared to LOO. 
We then apply PIToD to amend underperforming RL agents, i.e., we use PIToD to estimate negatively influential experiences for the RL agents and to delete the influence of these experiences. 
We show that RL agents' performance is significantly improved via amendments with PIToD. 
Our code is available at: \url{https://github.com/TakuyaHiraoka/Which-Experiences-Are-Influential-for-RL-Agents}
\end{abstract}

\section{Introduction}\label{sec:introduction}
In reinforcement learning (RL) with experience replay, the performance of an RL agent is influenced by experiences. 
Experience replay~\citep{lin1992self} is a data-generation mechanism indispensable in modern off-policy RL methods~\citep{mnih2015human,hessel2018rainbow,haarnoja2018softa,NEURIPS2020}. 
It allows an RL agent to learn from past experiences. 
These experiences influence the RL agent's performance (e.g., cumulative reward)~\citep{fedus2020revisiting}. 
Estimating how each experience influences the RL agent's performance could provide useful information for many purposes. 
For example, we could improve the RL agent's performance by identifying and deleting negatively influential experiences. 
The capability to estimate the influence of experience will be crucial, as RL is increasingly applied to tasks where agents must learn from experiences of diverse quality (e.g., a mixture of experiences from both expert and random policies)~\citep{fu2020d4rl,yu2020meta,agarwal2022reincarnating,TWiRLSmith2023,liu2024offlinesaferl,tirumala2024replay}. 

However, estimating the influence of experiences with feasible computational cost is not trivial. 
One might consider estimating it by a leave-one-out (LOO) method (left part of Figure~\ref{fig:Summary}), which retrains an RL agent for each possible experience deletion. 
As we will discuss in Section~\ref{sec:problem_of_loo}, this method has quadratic time complexity and quickly becomes intractable due to the necessity of retraining. 

In this paper, we present PIToD, a policy iteration (PI) method that efficiently estimates the influence of experiences (right part of Figure~\ref{fig:Summary}). 
PI is a fundamental method for many RL methods (Section~\ref{sec:preliminaries}). 
PIToD is PI augmented with turn-over dropout (ToD)~\citep{kobayashi2020efficient} to efficiently estimate the influence of experiences without retraining an RL agent (Section~\ref{sec:proposed_method}). 
We evaluate how correctly PIToD estimates the influence of experiences and its efficiency compared to the LOO method (Section~\ref{sec:experiments}). 
We then apply PIToD to amend underperforming RL agents by identifying and deleting negatively influential experiences (Section~\ref{sec:application}). 
To our knowledge, our work is the first to: (i) estimate the influence of experiences on the performance of RL agents with feasible computational cost, and (ii) modify RL agents' performance simply by deleting influential experiences. 
\begin{figure}[t!]
\begin{center}
\includegraphics[clip, width=0.99\hsize]{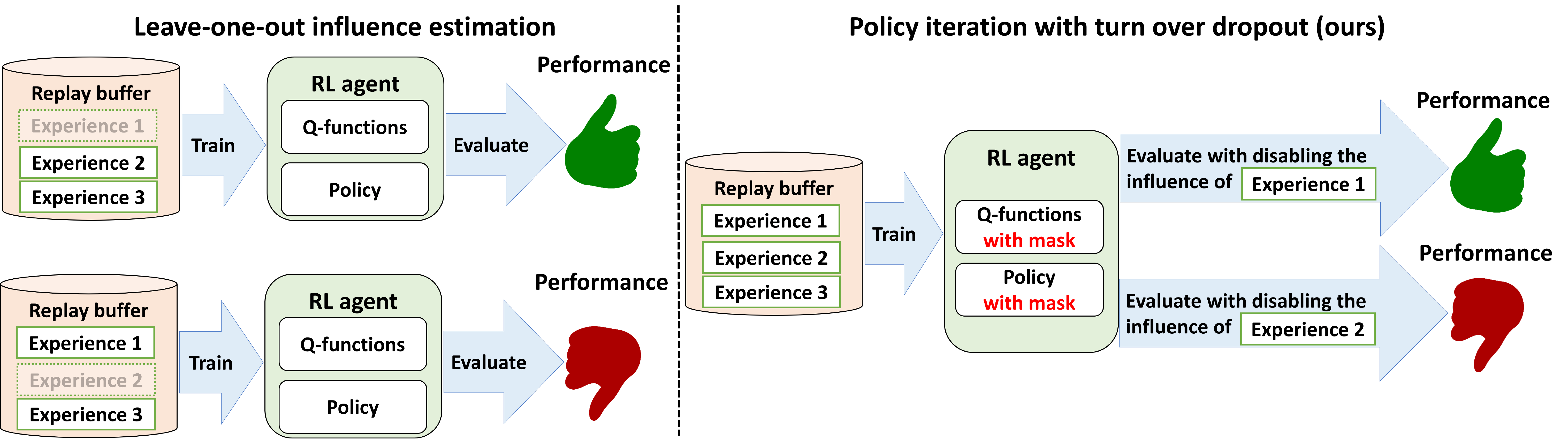}
\end{center}
\vspace{-0.7\baselineskip}
\caption{Leave-one-out (LOO) influence estimation method (left part) and our method (right part). 
LOO estimates the influence of experiences by retraining an RL agent for each experience deletion. 
In contrast, our method estimates the influence of experiences without retraining. 
}
\label{fig:Summary}
\end{figure}

\section{Preliminaries}\label{sec:preliminaries}
In Section~\ref{sec:proposed_method}, we will introduce our PI method for estimating the influence of experiences in an RL problem. 
As preliminaries, we explain the RL problem, PI, and influence estimation. 

\textbf{Reinforcement learning (RL).} 
RL addresses the problem of an agent learning to act in an environment. 
The environment provides the agent with a state $s$. The agent responds by selecting an action $a$, and then the environment provides a reward $r$ and the next state $s'$. 
This interaction between the agent and environment continues until the agent reaches a terminal state. 
%
The agent aims to find a policy $\pi: \mathcal{S} \times \mathcal{A} \rightarrow [0, 1]$ that maximizes the cumulative reward (return). 
A Q-function $Q: \mathcal{S} \times \mathcal{A} \rightarrow \mathbb{R}$ is used to estimate the expected return. 

\textbf{Policy iteration (PI).} 
PI is a method for solving RL problems. 
It updates the policy and Q-function by iteratively performing policy evaluation and improvement. 
Various implementations of policy evaluation and improvement have been proposed (e.g., \citet{lillicrap2015continuous,fujimoto2018addressing,haarnoja2018softa}). 
In this paper, we adopt the policy evaluation and improvement used in Deep Deterministic Policy Gradient (DDPG)~\citep{lillicrap2015continuous}\footnote{
We adopt the policy evaluation and improvement used in DDPG to keep our explanation concise. Note that in our actual experiments (e.g., Section~\ref{sec:experiments}), we use an implementation based on the policy evaluation and improvement used in Soft Actor-Critic (SAC)~\citep{haarnoja2018softa}, which extends the DDPG formulation with entropy regularization and clipped double Q-functions.}. 
In policy evaluation in DDPG, the Q-function $Q_{\phi}: \mathcal{S} \times \mathcal{A} \rightarrow \mathbb{R}$, parameterized by $\phi$, is updated as: 
\begin{dmath}
    \phi \leftarrow \phi - \nabla_{\phi} \mathbb{E}_{(s, a, r, s') \sim \mathcal{B}, ~a' \sim \pi_\theta(\cdot | s' )} \left[ \left( r + \gamma Q_{\bar{\phi}}(s', a') - Q_{\phi}(s, a) \right)^2 \right], \label{eq:Obj_Qfuncs} 
\end{dmath}
where $\mathcal{B}$ is a replay buffer containing the collected experiences, and $Q_{\bar{\phi}}$ is a target Q-function. 
In policy improvement in DDPG, policy $\pi_\theta$, parameterized by $\theta$, is updated as: 
\begin{dmath}
    \theta \leftarrow \theta + \nabla_{\theta} \mathbb{E}_{s \sim \mathcal{B}, ~a_\theta \sim \pi_\theta(\cdot | s)} \left[ Q_{\phi}(s, a_\theta) \right]. \label{eq:Obj_Policy} 
\end{dmath}

\textbf{Estimating the influence of experiences.} 
Given the policy and Q-functions updated through PI, we aim to estimate the influence of experiences on performance. 
Formally, letting $e_i$ be the $i$-th experience contained in the replay buffer $\mathcal{B}$, we evaluate the influence of $e_i$ as 
\begin{dmath}
    L\left(Q_{\phi, \mathcal{B}\backslash{\{e_i\}}}, \pi_{\theta, \mathcal{B}\backslash{\{e_i\}}} \right) - L\left( Q_{\phi, \mathcal{B}}, \pi_{\theta, \mathcal{B}} \right), \label{eq:influence}
\end{dmath}
where $L$ is a metric for evaluating the performance of the Q-function and policy, 
$Q_{\phi, \mathcal{B}}$ and $\pi_{\theta, \mathcal{B}}$ are the Q-function and policy updated with all experiences contained in $\mathcal{B}$, 
and $Q_{\phi, \mathcal{B}\backslash{\{e_i\}}}$ and $\pi_{\theta, \mathcal{B}\backslash{\{e_i\}}}$ are those updated with $\mathcal{B}$ excluding $e_i$. 
%
$L$ is defined according to the focus of our experiments. 
In this paper, we define $L$ as policy and Q-function loss for the experiments in Section~\ref{sec:experiments}, and as empirical return and Q-estimation bias for the applications in Section~\ref{sec:application}.

\section{Leave-one-out (LOO) influence estimation}\label{sec:problem_of_loo}
What method can be used to estimate the influence of experiences? 
%
One straightforward method is based on the LOO algorithm (Algorithm~\ref{alg1:Loo}). 
This algorithm estimates the influence of experiences by retraining the RL agent's components (i.e., policy and Q-functions) for each experience deletion. 
Specifically, it retrains the policy $\pi_{\theta'}$ and Q-function $Q_{\phi'}$ using $\mathcal{B} \backslash \{ e_i \}$ through $I$ policy iterations (lines 4--6). 
Here, $I$ equals the number of policy iterations required for training the original policy $\pi_{\theta}$ and Q-function $Q_{\phi}$. 
After retraining the components, the influence of $e_i$ is evaluated using Eq.~\ref{eq:influence_LOO} with $\pi_{\theta'}$, $Q_{\phi'}$ and $\pi_{\theta}$, $Q_{\phi}$ (line 7). 
\begin{algorithm*}[t!]
\caption{Leave-one-out influence estimation for policy iteration}
\label{alg1:Loo}
\begin{algorithmic}[1]
\STATE \textbf{given} replay buffer $\mathcal{B}$, learned parameters $\phi, \theta$, and number of policy iteration $I$. 
\FOR{$e_{i} \in \mathcal{B}$} 
    \STATE Initialize temporal parameters $\phi'$ and $\theta'$. 
    \FOR{$I$ iterations}
        \STATE Update $Q_{{\phi}'}$ with $\mathcal{B} \backslash \{ e_i \}$ (policy evaluation). 
        \STATE Update $\pi_{\theta'}$ with $\mathcal{B} \backslash \{ e_i \}$ (policy improvement). 
    \ENDFOR
    \STATE Evaluate the influence of $e_{i}$ as 
    \begin{dmath}
        L\left(Q_{\phi'}, \pi_{\theta'} \right)  - L\left( Q_{\phi}, \pi_{\theta} \right). \label{eq:influence_LOO} \nonumber
    \end{dmath}
\ENDFOR
\end{algorithmic}
\end{algorithm*}

However, in typical settings, Algorithm~\ref{alg1:Loo} becomes computationally prohibitive due to retraining. 
In typical settings (e.g., \citet{fujimoto2018addressing,haarnoja2018soft}), the size of the buffer $\mathcal{B}$ is small at the beginning of policy iteration and increases by one with each iteration. 
Consequently, the size of $\mathcal{B}$ is approximately equal to the number of iterations $I$ (i.e., $|\mathcal{B}| \approx I$). 
Since Algorithm~\ref{alg1:Loo} retrains the RL agent’s components through $I$ policy iterations for each $e_i$, the total number of policy iterations across the entire algorithm becomes $I^2$. 
The value of $I$ typically ranges between $10^3$ and $10^6$ (e.g., \citet{chen2021randomized,haarnoja2018soft}), which makes it difficult to complete all policy iterations in a realistic timeframe. 

In the next section, we will introduce a method to estimate the influence of experiences without retraining the RL agent's components. 

\section{Policy iteration with turn-over dropout (PIToD)}\label{sec:proposed_method}
In this section, we present \textbf{P}olicy \textbf{I}teration with \textbf{T}urn-\textbf{o}ver \textbf{D}ropout (PIToD), which estimates the influence of experiences without retraining. 
%
The concept of PIToD is shown in Figure~\ref{fig:Outline}, and an algorithmic description of PIToD is shown in Algorithm~\ref{alg2:PI_PIToD}. 
Inspired by ToD~\citep{kobayashi2020efficient}, PIToD uses masks and flipped masks to drop out the parameters of the policy and Q-function. 
Further details are provided in the following paragraphs. 
\begin{figure*}[t!]
\begin{center}
\includegraphics[clip, width=0.99\hsize]{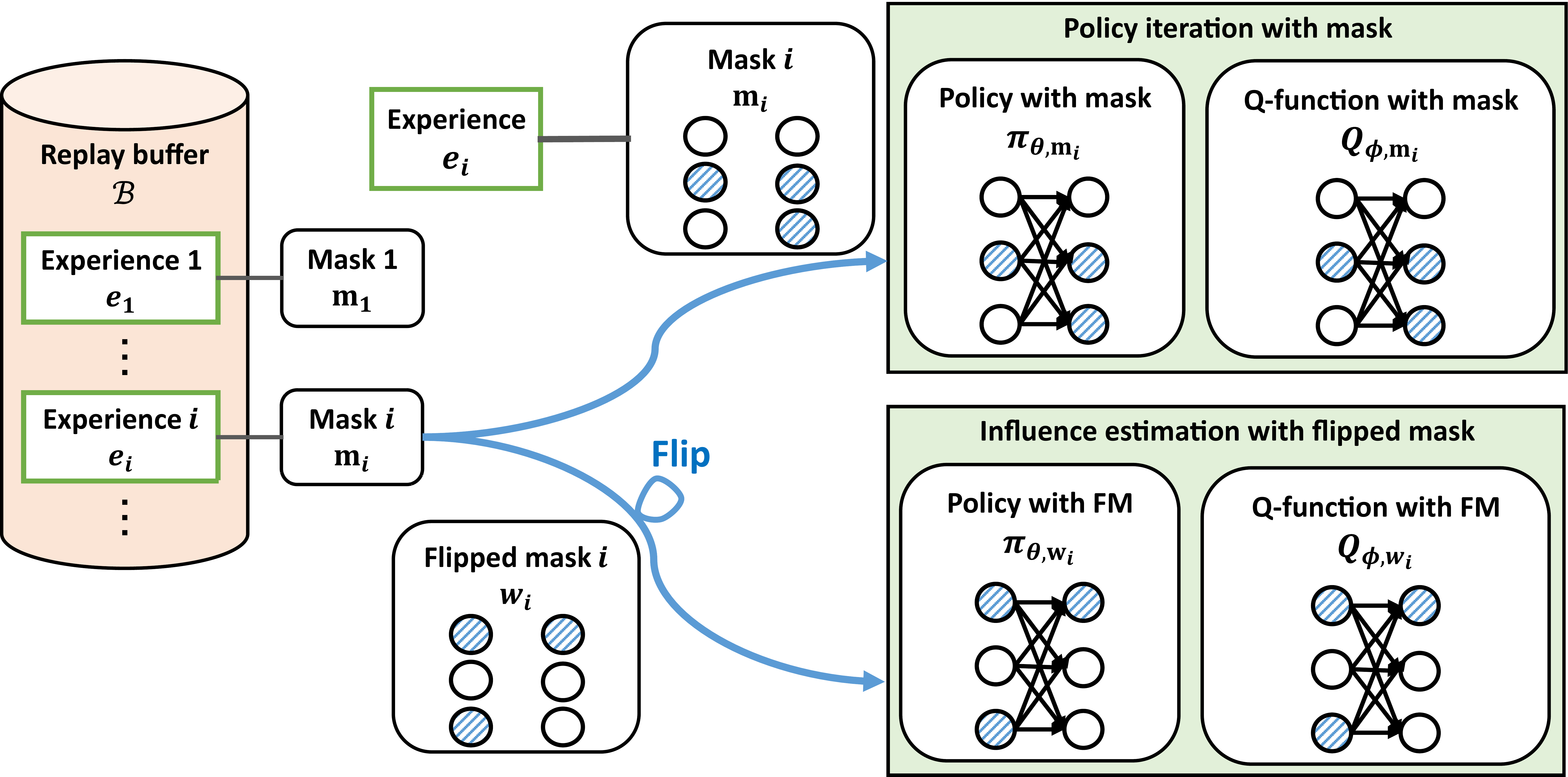}
\end{center}
\vspace{-0.7\baselineskip}
\caption{
The concept of PIToD. 
PIToD uses mask $\mathbf{m}_{i}$ and flipped mask $\mathbf{w}_i$. 
It applies $\mathbf{m}_{i}$ to the policy and Q-function for PI with $e_i$. 
Additionally, it applies $\mathbf{w}_{i}$ to the policy and Q-function for estimating the influence of $e_i$. 
}
\label{fig:Outline}
\end{figure*}

\textbf{Masks and flipped masks.} 
PIToD uses mask $\mathbf{m}_i$ and flipped mask $\mathbf{w}_i$, which are binary vectors uniquely associated with experience $e_i$. 
The mask $\mathbf{m}_i$ consists of elements randomly initialized to $0$ or $1$. 
$\mathbf{m}_i$ is used to drop out the parameters of the policy and Q-function during PI with $e_i$. 
Additionally, the flipped mask $\mathbf{w}_i$ is the negation of $\mathbf{m}_i$, i.e., $\mathbf{w}_i = \mathbf{1} - \mathbf{m}_i$. 
$\mathbf{w}_i$ is used to drop out the parameters of the policy and Q-function for estimating the influence of $e_i$. 

\textbf{Policy iteration with the mask (lines 5--6 in Algorithm~\ref{alg2:PI_PIToD}).} 
PIToD applies $\mathbf{m}_i$ to the policy and Q-function during PI with $e_i$. 
It executes PI with variants of policy evaluation (Eq.~\ref{eq:Obj_Qfuncs}) and improvement (Eq.~\ref{eq:Obj_Policy}) where masks are applied to the parameters of the policy and Q-function. 
The policy evaluation for PIToD is 
\begin{dmath}
    \phi \leftarrow \phi - \nabla_{\phi} \mathbb{E}_{e_i = (s, a, r, s', i) \sim \mathcal{B},~ a' \sim \pi_{\theta, \mathbf{m}_i}(\cdot | s' )} \left[ \left( r + \gamma Q_{\bar{\phi}, \mathbf{m}_i}(s', a') - Q_{\phi, \mathbf{m}_i}(s, a) \right)^2 \right]. \label{eq:Obj_Qfuncs_ToD} 
\end{dmath}
The policy improvement for PIToD is 
\begin{dmath}
    \theta \leftarrow \theta + \nabla_{\theta} \mathbb{E}_{e_i = (s, i) \sim \mathcal{B}, ~ a_{\theta, \mathbf{m}_i} \sim \pi_{\theta, \mathbf{m}_i}(\cdot | s)} \left[ Q_{\phi, \mathbf{m}_i}(s, a_{\theta, \mathbf{m}_i}) \right]. \label{eq:Obj_Policy_ToD} 
\end{dmath}
Here, $Q_{\phi, \mathbf{m}_i}$ and $\pi_{\theta, \mathbf{m}_i}$ are the Q-function and policy to which the mask $\mathbf{m}_i$ is applied. 
In Eq.~\ref{eq:Obj_Qfuncs_ToD} and Eq.~\ref{eq:Obj_Policy_ToD}, for inputs from $e_i$, $Q_{\phi, \mathbf{m}_i}$ and $\pi_{\theta, \mathbf{m}_i}$ compute their outputs without using the parameters that are dropped out by $\mathbf{m}_i$. 
Thus, the parameters dropped out by $\mathbf{m}_i$ (i.e., the parameters obtained by applying $\mathbf{w}_i$) are expected not to be influenced by $e_i$. 
More theoretically, if $Q_{\phi, \mathbf{m}_i}$ and $\pi_{\theta, \mathbf{m}_i}$ are dominantly influenced by $e_i$, the parameters obtained by $\mathbf{w}_i$ are provably not influenced by $e_i$ (see Appendix~\ref{app:theory} for details). 
Based on this theoretical property, we estimate the influence of $e_i$ by applying $\mathbf{w}_i$ to policy and Q-functions (see the next paragraph for details). 
\begin{algorithm*}[t!]
\caption{Policy iteration with turn-over dropout (PIToD)}
\label{alg2:PI_PIToD}
\begin{algorithmic}[1]
\STATE Initialize policy parameters $\theta$, Q-function parameters $\phi$, and an empty replay buffer $\mathcal{B}$; Set influence estimation interval $I_{\text{ie}}$. 
\FOR{$i' = 0, ..., I$ iterations} 
    \STATE Take action $a \sim \pi_\theta(\cdot | s)$; Observe reward $r$ and next state $s'$. Define an experience using $i'$ as: $e_{i'} = (s, a, r, s', i')$; $\mathcal{B} \leftarrow \mathcal{B} \bigcup \{ e_{i'}\}$. 
    \STATE Sample experiences $\{ (s, a, r, s', i), ... \}$ from $\mathcal{B}$ (Here, $e_i = (s, a, r, s', i)$). 
    \STATE Update $\phi$ with gradient descent using
    \vspace{-0.7\baselineskip}\begin{equation}
        \nabla_\phi \sum_{(s, a, r, s', i)} \left( r + \gamma Q_{\bar{\phi}, \mathbf{m}_i}(s', a') - Q_{\phi, \mathbf{m}_i}(s, a) \right)^2, ~~ a' \sim \pi_{\theta, \mathbf{m}_i}(\cdot | s' ). \nonumber
    \end{equation}\vspace{-0.7\baselineskip}
    \STATE Update $\theta$ with gradient ascent using 
    \begin{equation}
        \nabla_\theta \sum_{(s, i)} Q_{\phi, \mathbf{m}_i}(s, a_{\theta, \mathbf{m}_i}), ~~~ a_{\theta, \mathbf{m}_i} \sim \pi_{\theta, \mathbf{m}_i}(\cdot | s). \nonumber 
    \end{equation}\vspace{-0.7\baselineskip}
    \IF{$ i' \% I_{\text{ie}} = 0 $}
        \STATE For $e_i \in \mathcal{B}$, estimate the influence of $e_i$ using  
        \begin{dmath*}
             L\left(Q_{\phi, \mathbf{w}_i}, \pi_{\theta, \mathbf{w}_i} \right) - L\left(Q_{\phi}, \pi_{\theta} \right) ~~~\text{or}~~~  L\left(Q_{\phi, \mathbf{w}_i}, \pi_{\theta, \mathbf{w}_i} \right) - L\left(Q_{\phi, \mathbf{m}_i}, \pi_{\theta, \mathbf{m}_i}\right). \nonumber
        \end{dmath*}
    \ENDIF
\ENDFOR
\end{algorithmic}
\end{algorithm*}

\textbf{Estimating the influence of an experience with the flipped mask (lines 7--8 in Algorithm~\ref{alg2:PI_PIToD}).} 
PIToD periodically estimates the influence of $e_i$ by applying $\mathbf{w}_i$ to the policy and Q-function. 
Specifically, PIToD estimates the influence of $e_i$ (Eq.~\ref{eq:influence}) as 
\begin{dmath}
    L\left(Q_{\phi, \mathbf{w}_i}, \pi_{\theta, \mathbf{w}_i} \right) - L\left(Q_{\phi}, \pi_{\theta} \right), 
    \label{eq:approxed_influence} 
\end{dmath}
where the first term is the performance when $e_i$ is deleted, and the second term is the performance with all experiences. 
$Q_{\phi, \mathbf{w}_i}$ and $\pi_{\theta, \mathbf{w}_i}$ are the Q-function and policy with dropout based on $\mathbf{w}_i$. 
In contrast, $Q_{\phi}$ and $\pi_{\theta}$ are the Q-function and policy without dropout. 
For the second term, if we want to highlight the influence of $e_i$ more significantly, the term can be evaluated by using the masked policy and Q-functions: $L\left(Q_{\phi, \mathbf{m}_i}, \pi_{\theta, \mathbf{m}_i}\right)$. 
The influence is estimated every $I_{\text{ie}}$ iterations (line 7 in Algorithm~\ref{alg2:PI_PIToD}). 
These influence estimations by PIToD do not require retraining for each experience deletion, unlike the LOO method. 

\textbf{Implementation details for PIToD.} 
For the experiments in Sections~\ref{sec:experiments} and \ref{sec:application}, each mask element is initialized by drawing from a discrete uniform distribution over $\{0, 1\}$ to minimize overlap between the masks (see Appendix~\ref{sec:analyzing_overlap_in_mask} for details). 
Additionally, we implemented PIToD based on Soft Actor-Critic~\citep{haarnoja2018soft} for these experiments (see Appendix~\ref{sec:practical_implementation} for details). 
We also introduce a practical implementation design for applying the masks to the policy and Q-function (see Appendix \ref{sec:practical_implementation} for details). 

\section{Evaluations for PIToD}\label{sec:experiments}
In the previous section, we introduced PIToD, a method that efficiently estimates the influence of experiences. 
In this section, we evaluate how it correctly estimates the influence  (Section~\ref{sec:eval_self_influence}) and its computational efficiency (Section~\ref{sec:eval_computational_time}). 

\subsection{How correctly does PIToD estimate the influence of experiences? Evaluations with self-influence}\label{sec:eval_self_influence}
In this section, we evaluate how correctly PIToD estimates the influence of experiences by focusing on their self-influence. 
Self-influence~\citep{kobayashi2020efficient,thakkar-etal-2023-self,bejan-etal-2023-make} is the influence of an experience on prediction performance using that same experience. 
We define self-influences on policy evaluation and on policy improvement. 
The self-influence of an experience $e_i := (s, a, r, s', i)$ on policy evaluation is 
\begin{eqnarray}
    && L_{\text{pe}, i}(Q_{\phi, \mathbf{w}_i}) -  L_{\text{pe}, i}(Q_{\phi, \mathbf{m}_i}). \label{eq:self_infl_q_func} 
\end{eqnarray}
Here, $L_{\text{pe}, i}$ represents the temporal difference error based on $e_i$, and it is defined as 
\begin{eqnarray}
    && L_{\text{pe}, i}(Q) = \left( r + \gamma  Q_{\bar{\phi}, \mathbf{m}_i}(s', a') - Q(s, a) \right)^2, ~~ a' \sim \pi_{\theta, \mathbf{m}_i}(\cdot | s'). \nonumber
\end{eqnarray}
The value of Eq.~\ref{eq:self_infl_q_func} is expected to be positive. 
$Q_{\phi, \mathbf{m}_i}$ is optimized by PIToD to minimize $L_{\text{pe}, i}$ (cf. line 5 in Algorithm~\ref{alg2:PI_PIToD}), while $Q_{\phi, \mathbf{w}_i}$ is not. 
Therefore, $L_{\text{pe}, i}(Q_{\phi, \mathbf{m}_i}) \leq L_{\text{pe}, i}(Q_{\phi, \mathbf{w}_i})$, implying that  
\begin{eqnarray}
    && \underbrace{L_{\text{pe}, i}(Q_{\phi, \mathbf{w}_i}) -  L_{\text{pe}, i}(Q_{\phi, \mathbf{m}_i})}_{\text{Eq.~\ref{eq:self_infl_q_func} }} \geq 0. \label{eq:expected_val_self_infl_q_func}
\end{eqnarray}
The self-influence of $e_i$ on policy improvement is 
\begin{eqnarray}
    && L_{\text{pi}, i}(\pi_{\theta, \mathbf{w}_i}) -  L_{\text{pi}, i}(\pi_{\theta, \mathbf{m}_i}). \label{eq:self_infl_policy_func}
\end{eqnarray}
Here, $L_{\text{pi}, i}$ represents the Q-value estimate based on $e_i$, and it is defined as 
\begin{eqnarray}
    && L_{\text{pi}, i}(\pi) =  Q_{\phi, \mathbf{m}_i}(s, a'), ~~~~ a' \sim \pi(\cdot | s). \nonumber
\end{eqnarray}
The value of Eq.~\ref{eq:self_infl_policy_func} is expected to be negative. 
$\pi_{\theta, \mathbf{m}_i}$ is optimized by PIToD to maximize $L_{\text{pi}, i}$ (cf. line 6 in Algorithm~\ref{alg2:PI_PIToD}), while $\pi_{\theta, \mathbf{w}_i}$ is not. 
Therefore, $L_{\text{pi}, i}(\pi_{\theta, \mathbf{m}_i}) \geq L_{\text{pi}, i}(\pi_{\theta, \mathbf{w}_i})$, which implies that 
\begin{eqnarray}
    && \underbrace{L_{\text{pi}, i}(\pi_{\theta, \mathbf{w}_i}) -  L_{\text{pi}, i}(\pi_{\theta, \mathbf{m}_i})}_{\text{Eq.~\ref{eq:self_infl_policy_func} }} \leq 0. \label{eq:expected_val_self_infl_policy_func}
\end{eqnarray}

We evaluate whether PIToD has correctly estimated the influence of experiences based on whether Eq.~\ref{eq:expected_val_self_infl_q_func} and Eq.~\ref{eq:expected_val_self_infl_policy_func} are satisfied~\footnote{Here, ``correct'' refers to whether the estimated influence has the theoretically expected sign, not to its exact magnitude. This means that estimates satisfying the inequalities are considered non-contradictory, rather than guaranteed to be ground-truth correct.}. 
We periodically evaluate the ratio of experiences for which PIToD has correctly estimated self-influence in the MuJoCo environments~\citep{todorov2012mujoco}. 
The MuJoCo tasks for this evaluation are Hopper, Walker2d, Ant, and Humanoid. 
In this evaluation, 5000 policy iterations (i.e., lines 3--6 of Algorithm~\ref{alg2:PI_PIToD}) constitute one epoch, with 125 epochs allocated for Hopper and 300 epochs for the others. 
At each epoch, we perform the following steps:
(i) for each experience in the replay buffer, we check whether Eq.~\ref{eq:self_infl_q_func} and Eq.~\ref{eq:self_infl_policy_func} satisfy Eq.~\ref{eq:expected_val_self_infl_q_func} and Eq.~\ref{eq:expected_val_self_infl_policy_func}, respectively; and 
(ii) we record the ratio of experiences that satisfy these equations. 

Evaluation results (Figure~\ref{fig:raio_of_experiences_with_pos_neg_influence}) show that the ratio of experiences whose self-influence (Eq.~\ref{eq:self_infl_q_func} and Eq.~\ref{eq:self_infl_policy_func}) is correctly estimated exceeds the chance rate of 0.5. 
For self-influence on policy evaluation (Eq.~\ref{eq:self_infl_q_func}), the ratio of correctly estimated experiences (i.e., those satisfying Eq.~\ref{eq:expected_val_self_infl_q_func}) is higher than 0.9 across all environments. 
Furthermore, for self-influence on policy improvement (Eq.~\ref{eq:self_infl_policy_func}), the ratio of correctly estimated experiences (i.e., those satisfying Eq.~\ref{eq:expected_val_self_infl_policy_func}) exceeds 0.7 in Hopper, 0.8 in Walker2d and Ant, and 0.9 in Humanoid. 
These results suggest that PIToD estimates the influence of experiences more correctly than random estimation. 

\textbf{Supplementary analysis.} 
How are experiences that exhibit significant self-influence distributed? 
Figure~\ref{fig:distribution_of_self_influences} shows the distribution of self-influence across experiences. 
From the figure, we see that in policy evaluation, the self-influence of older experiences (with smaller normalized experience indices) becomes more significant as the epoch progresses. 
This tendency can be seen as a primacy bias~\citep{pmlr-v162-nikishin22a}, suggesting that the RL agent is overfitting more significantly to the experiences of the early stages of learning. 
Conversely, for policy improvement, we do not observe a clear tendency for primacy bias. 
These observations may indicate that policy improvement is less prone to causing overfitting to older experiences than policy evaluation. 
\begin{figure*}[t!]
\begin{minipage}{1.0\hsize}
\includegraphics[clip, width=0.49\hsize]{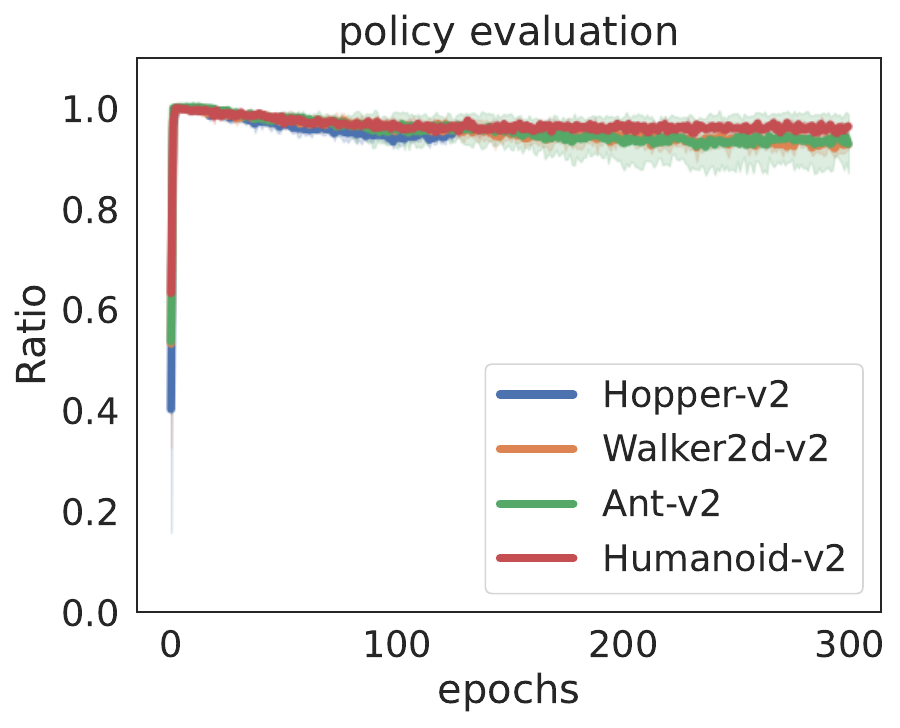}
\includegraphics[clip, width=0.49\hsize]{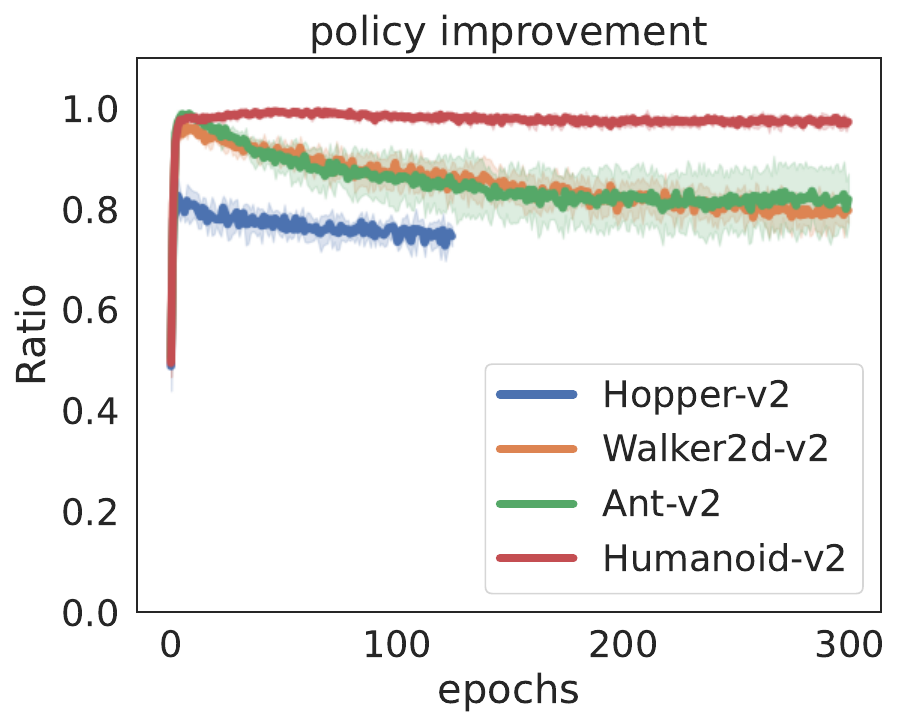}
\end{minipage}
\vspace{-0.7\baselineskip}
\caption{
The ratio of experiences for which PIToD correctly estimated self-influence. 
The left-hand figure displays this ratio in policy evaluation cases, where a self-influence value is expected to be positive (i.e., Eq.~\ref{eq:self_infl_q_func} $\geq 0$). 
The right-hand figure displays the ratio in policy improvement cases, where a self-influence value  is expected to be negative (i.e., Eq.~\ref{eq:self_infl_policy_func} $\leq 0$). 
In both figures, the vertical axis represents the ratio of correctly estimated experiences, and the horizontal axis shows the number of epochs. 
Each line represents the mean of ten trials, and the shaded region represents one standard deviation around the mean. 
The figure shows that the ratio of correctly estimated experiences surpasses the chance rate of 0.5. 
Note that ``correctly estimated'' here means that the estimated self-influence has the expected sign (positive or negative), based on Eq.~\ref{eq:self_infl_q_func} and Eq.~\ref{eq:self_infl_policy_func}.  
It does not imply exact correctness with respect to the true influence magnitude. 
}
\label{fig:raio_of_experiences_with_pos_neg_influence}
\end{figure*}
\begin{figure*}[t!]
\begin{minipage}{1.0\hsize}
\includegraphics[clip, width=0.245\hsize]{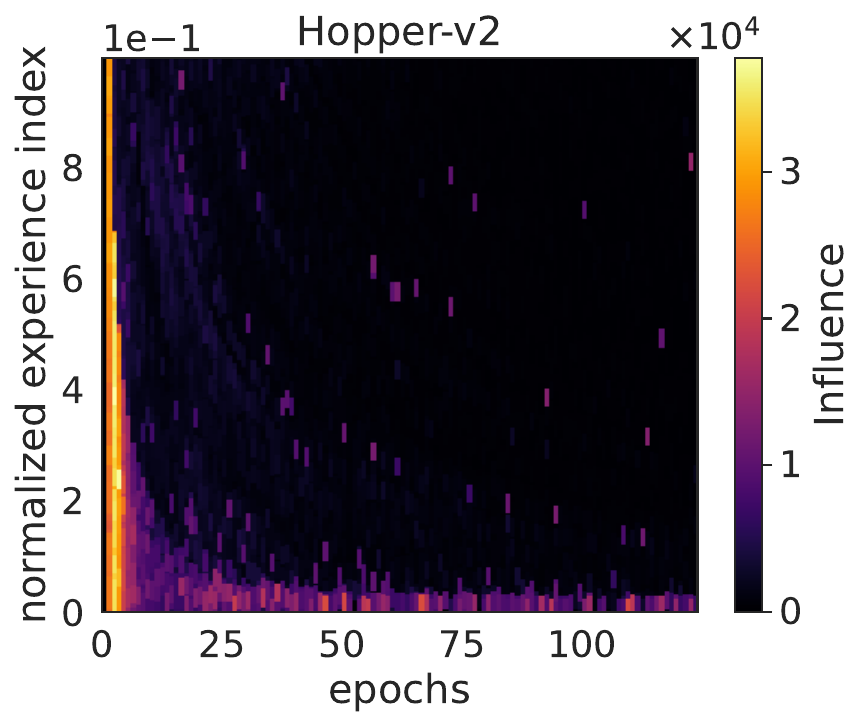}
\includegraphics[clip, width=0.245\hsize]{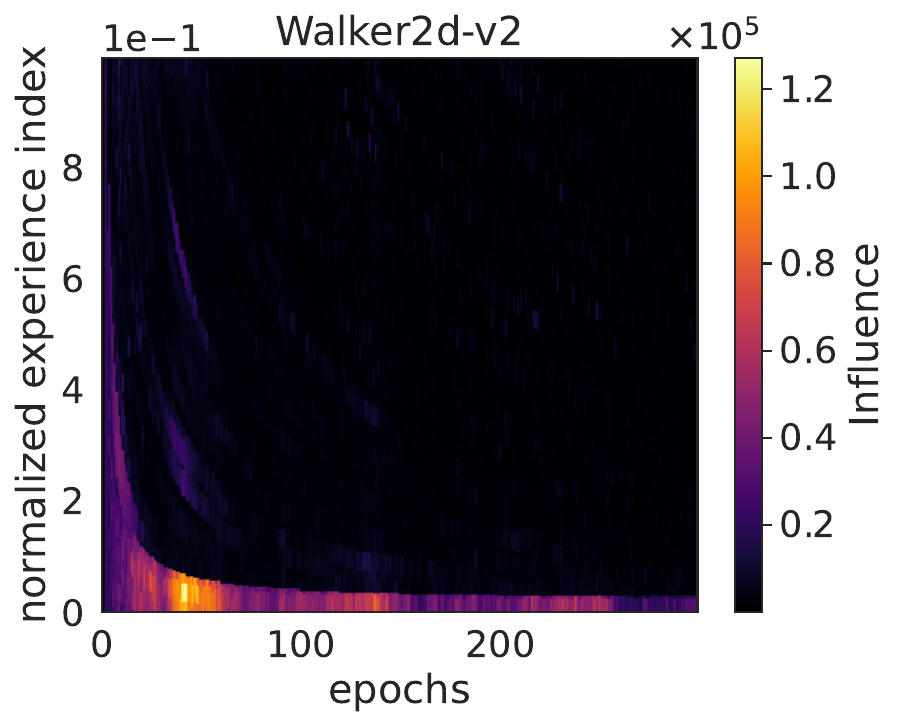}
\includegraphics[clip, width=0.245\hsize]{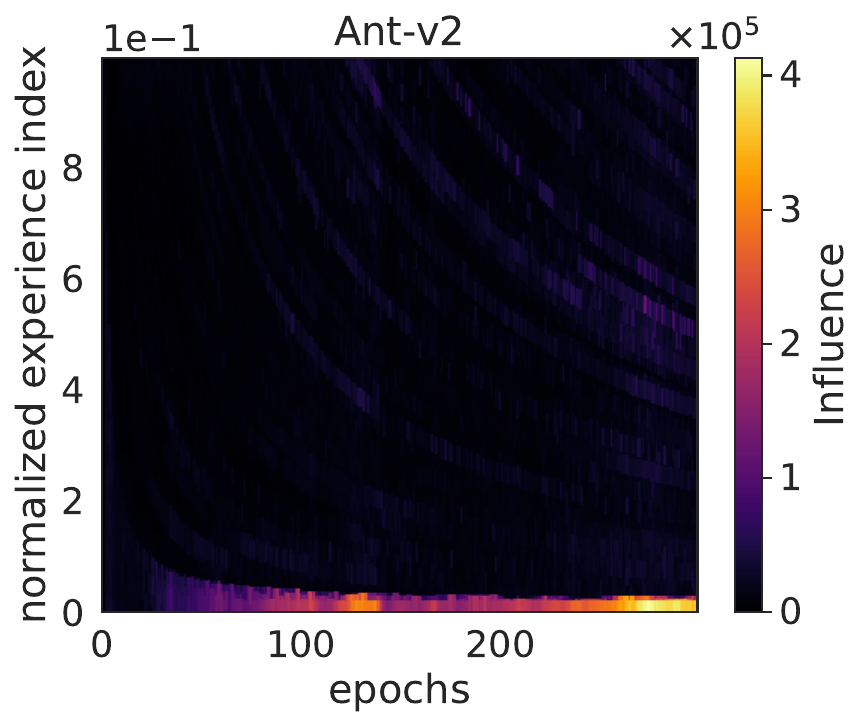}
\includegraphics[clip, width=0.245\hsize]{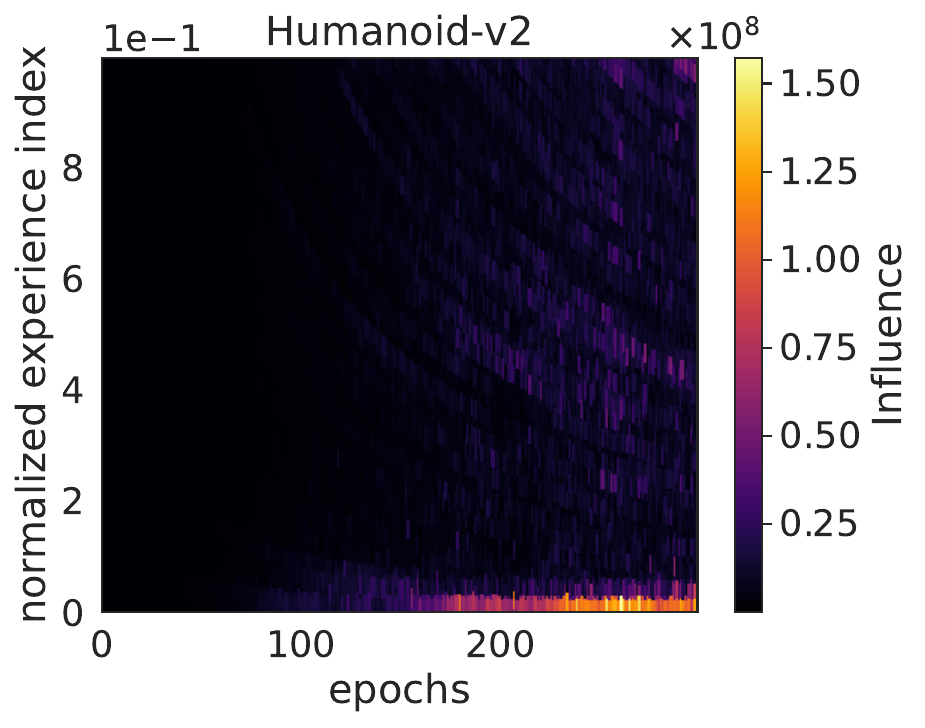}
\subcaption{Distribution of self-influence on policy evaluation (Eq.~\ref{eq:self_infl_q_func}).}
\end{minipage}
\begin{minipage}{1.0\hsize}
\includegraphics[clip, width=0.245\hsize]{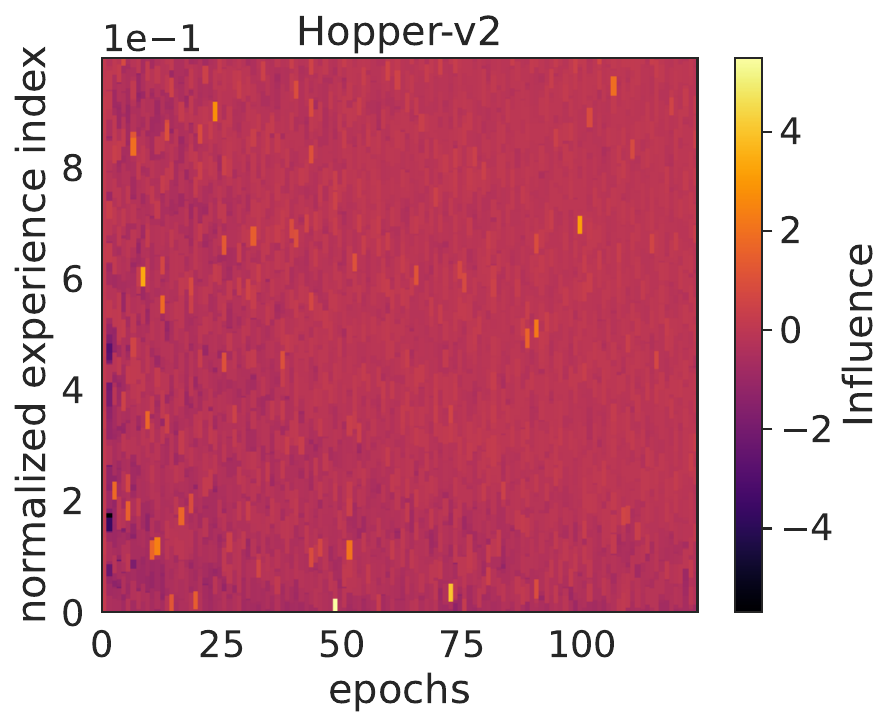}
\includegraphics[clip, width=0.245\hsize]{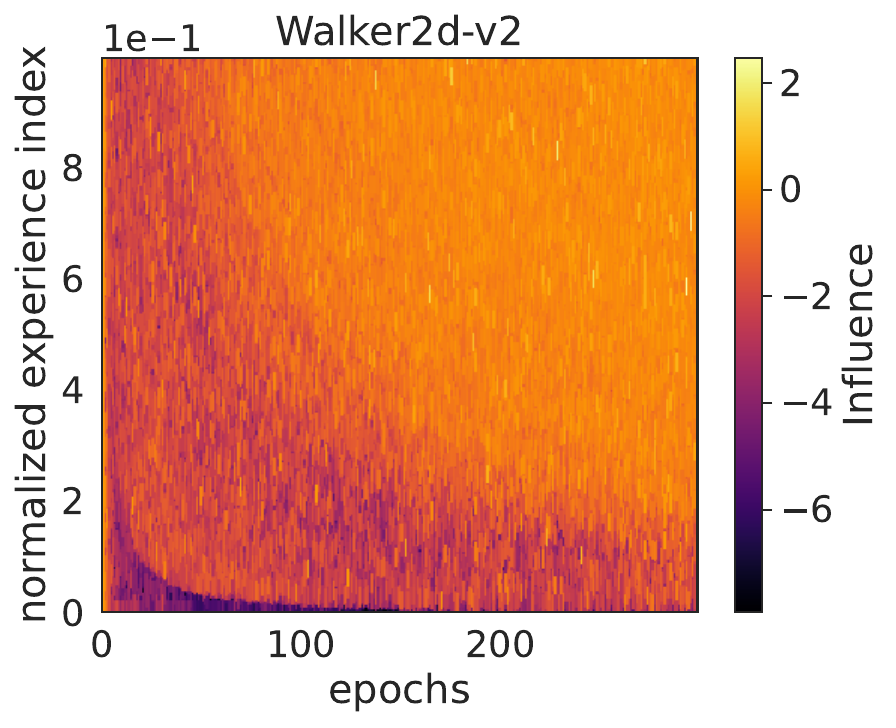}
\includegraphics[clip, width=0.245\hsize]{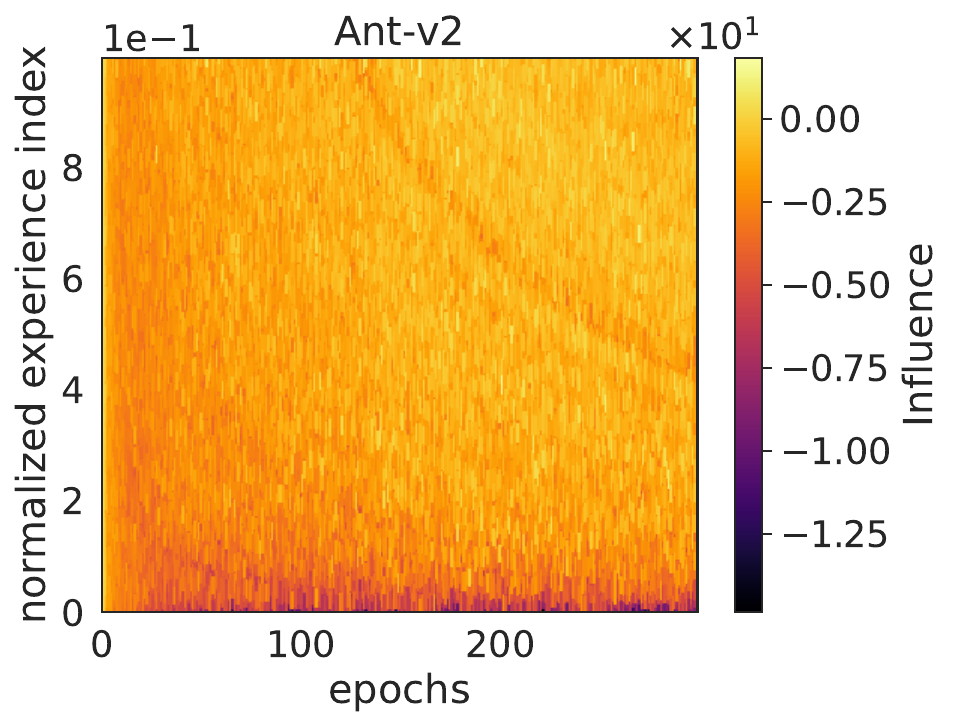}
\includegraphics[clip, width=0.245\hsize]{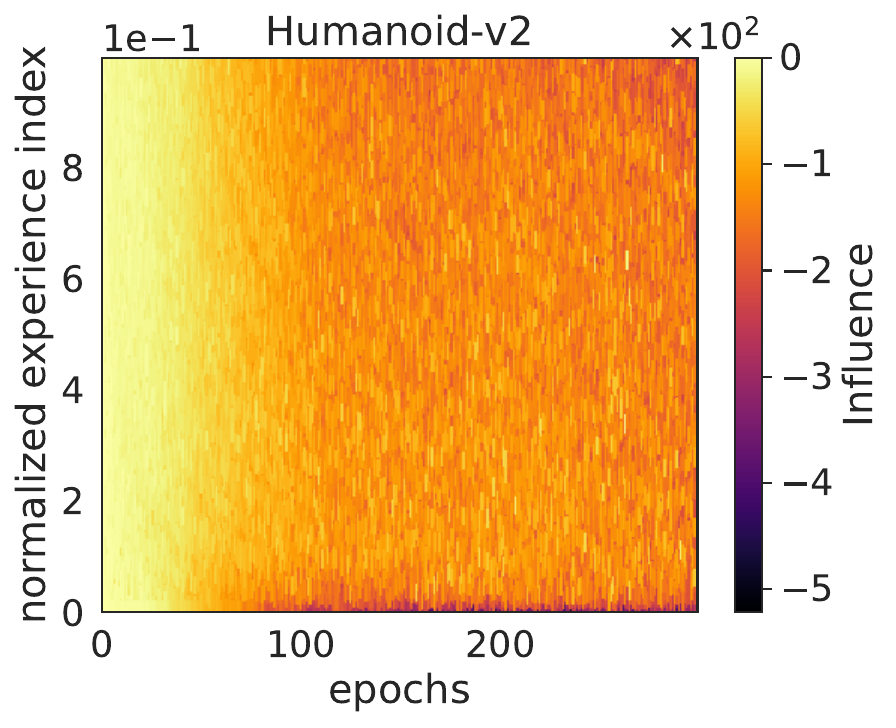}
\subcaption{Distribution of self-influence on policy improvement (Eq.~\ref{eq:self_infl_policy_func}).}
\end{minipage}
\vspace{-0.7\baselineskip}
\caption{
Distribution of self-influence on policy evaluation and policy improvement. 
The horizontal axis represents the number of epochs. 
The vertical axis represents the normalized experience index, which corresponds to the relative age of experiences stored in the replay buffer.
Specifically, at each epoch, the oldest experience is mapped to $0.0$ and the most recent to $1.0$.
As a result, the vertical axis is fully populated at all epochs, even though the number of stored experiences may vary across epochs. 
The color bar represents the value of self-influence. 
\textbf{Interpretation of this figure:} 
For example, if the value of self-influence for $e_i$ in policy evaluation cases is $2 \cdot 10^8$, this indicates that the value of $L_{\text{pe}, i}(Q_{\phi, \mathbf{w}_i})$ is $2 \cdot 10^8$ larger than that of $L_{\text{pe}, i}(Q_{\phi, \mathbf{m}_i})$. 
\textbf{Key insight:} In policy evaluation, experiences with high self-influence tend to concentrate on older ones (with smaller normalized experience indices) as the epochs progress. 
}
\label{fig:distribution_of_self_influences}
\end{figure*}

\subsection{How efficiently does PIToD estimate the influence of experiences? Computational time evaluation}\label{sec:eval_computational_time} 
We evaluate the computational time required for influence estimation with PIToD and compare it to the estimated time for LOO. 
To measure the computational time for PIToD, we run the method under the same settings as in the previous section and record its wall-clock time. 
For comparison, we also evaluate the estimated time required for influence estimation using LOO (Section~\ref{sec:problem_of_loo}). 
To estimate the time for LOO, we record the average time required for one policy iteration with PIToD and multiply this by the total number of policy iterations required for LOO~\footnote{The total number of policy iterations for LOO is $I^2$, as discussed in Section~\ref{sec:problem_of_loo}. 
However, in the practical implementation of PIToD used in our experiments, we divide the experiences in the buffer into groups of 5000 experiences and estimate the influence of each group (Appendix~\ref{sec:practical_implementation}). 
For a fair comparison with this implementation, we use $\frac{I^2}{5000}$ instead of $I^2$ as the total number of policy iterations for LOO.}. 

The evaluation results (Figure~\ref{fig:training_time}) show that PIToD significantly reduces computational time compared to LOO. 
The time required for LOO increases quadratically as epochs progress, taking, for example, more than $4 \cdot 10^7$ seconds ($\approx 462$ days) up to 300 epochs in Humanoid. 
In contrast, the time required for PIToD increases linearly, taking about $1.4 \cdot 10^5$ seconds ($\approx$ one day) for 300 epochs in Humanoid. 
\begin{figure*}[t!]
\begin{minipage}{0.99\hsize}
\includegraphics[clip, width=0.49\hsize]{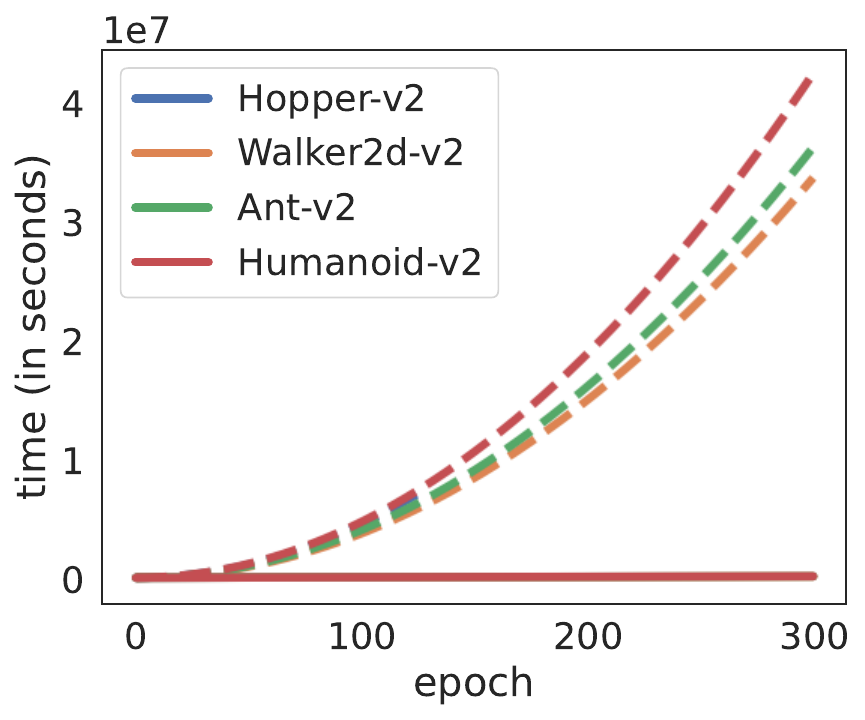}
\includegraphics[clip, width=0.49\hsize]{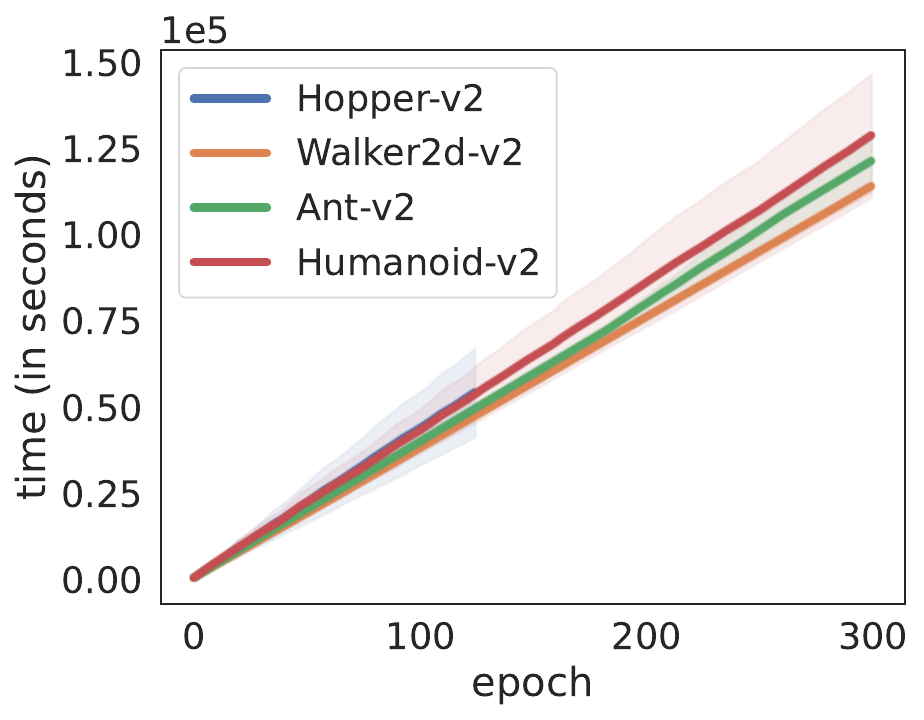}
\end{minipage}
\vspace{-0.7\baselineskip}
\caption{
Wall-clock time required for influence estimation by PIToD and LOO. 
The solid lines represent the mean time for PIToD, averaged over ten trials, 
and the dashed lines represent the estimated time for LOO.
The shaded regions indicate one standard deviation around the mean time for PIToD. 
The left figure shows the time for both PIToD and LOO, 
while the right figure shows the time for PIToD alone to more clearly illustrate its time requirements.  
The results show that the time required for LOO increases quadratically with the number of epochs, whereas the time required for PIToD increases linearly. 
}
\label{fig:training_time}
\end{figure*}

\section{Application of PIToD: amending policies and Q-functions by deleting negatively influential experiences}\label{sec:application}
In the previous section, we demonstrated that PIToD can correctly and efficiently estimate the influence of experiences. 
What scenarios might benefit from this capability? 
In this section, we demonstrate how PIToD can be used to amend underperforming policies and Q-functions. 

We amend policies and Q-functions by deleting experiences that negatively influence performance. 
We evaluate the performance of policies and Q-functions based respectively on returns and Q-estimation biases~\citep{fujimoto2018addressing,chen2021randomized}. 
The influence of an experience $e_i$ on the return, $L_{\text{ret}}$, is evaluated as follows: 
\begin{eqnarray}
    L_{\text{ret}}(\pi_{\theta, \mathbf{w}_i}) - L_{\text{ret}}(\pi_{\theta}), ~~\text{where}~ L_{\text{ret}}(\pi) = \mathbb{E}_{a_t \sim \pi(\cdot | s_t )} \left[ \sum_{t=0} \gamma^t r(s_t, a_t) \right]. \label{eq:influence_return} 
\end{eqnarray}
Here, $s_t$ is sampled from an environment. 
In our setup, $L_{\text{ret}}$ is estimated using Monte Carlo returns collected by rolling out policies $\pi_{\theta, \mathbf{w}_i}$ and $\pi_{\theta}$. 
The influence of $e_i$ on Q-estimation bias, $L_{\text{bias}}$, is evaluated as follows: 
\begin{dmath}
    L_{\text{bias}}(Q_{\phi, \mathbf{w}_i}) - L_{\text{bias}}(Q_{\phi}), \\
    ~~\text{where}~ L_{\text{bias}}(Q) = \mathbb{E}_{a_t \sim \pi_{\theta}(\cdot | s_t), a_{t'} \sim \pi_{\theta}(\cdot | s_{t'})} \left[ \sum_{t=0} \frac{ \left| Q(s_t, a_t) - \sum_{t'=t} \gamma^{t'}
 r(s_{t'}, a_{t'}) \right| }{ \left| \sum_{t'=t} \gamma^{t'} r(s_{t'}, a_{t'}) \right| } \right]. \label{eq:influence_bias} 
\end{dmath}
Here, $L_{\text{bias}}$ quantifies the discrepancy between the estimated and true Q-values using their L1 distance. 
Based on Eq.~\ref{eq:influence_return} and Eq.~\ref{eq:influence_bias}, we identify and delete the experience $e_{*}$ that has the strongest negative influence on them. 
We apply $\mathbf{w}_{*}$, which maximizes Eq.~\ref{eq:influence_return}, to the policy to delete $e_{*}$. 
Additionally, we apply $\mathbf{w}_{*}$, which minimizes Eq.~\ref{eq:influence_bias}, to the Q-function to delete $e_{*}$. 
The algorithmic description of our amendment process is presented in Algorithm~\ref{alg4:Amendment} in Appendix~\ref{app:alg_amend}. 

We evaluate the effect of the amendments on trials in which the policy and Q-function underperform. 
We run ten learning trials with the amendments (Algorithm~\ref{alg4:Amendment}) and evaluate (i) $L_{\text{ret}}(\pi_{\theta, \mathbf{w}_*})$ for the two trials in which the policy scores the lowest returns $L_{\text{ret}}(\pi_{\theta})$ and (ii) $L_{\text{bias}}(Q_{\phi, \mathbf{w}_*})$ for the two trials in which the Q-function scores the highest biases $L_{\text{bias}}(Q_{\phi})$~\footnote{We focus on the two worst-performing trials to more clearly assess whether the proposed amendments can improve underperforming cases. We use two trials, rather than just one, to allow for minimal variance estimation.}. 
The average scores of $L_{\text{ret}}(\pi_{\theta, \mathbf{w}_*})$ and $L_{\text{bias}}(Q_{\phi, \mathbf{w}_*})$ for these underperforming trials are shown in Figure~\ref{fig:cleansing_results}. 
The average scores of $L_{\text{ret}}(\pi_{\theta, \mathbf{w}_*})$ and $L_{\text{bias}}(Q_{\phi, \mathbf{w}_*})$ for all ten trials are shown in Figure~\ref{fig:cleansing_results_average_case} in Appendix~\ref{app:additional_results}. 

The results of the policy and Q-function amendments (Figure~\ref{fig:cleansing_results}) show that performance is improved through the amendments. 
From the policy amendment results (left part of Figure~\ref{fig:cleansing_results}), we see that the return ($L_{\text{ret}}$) is significantly improved in Hopper and Walker. 
For example, in Hopper, the return before the amendment (the blue dashed line) is approximately 1000, but after the amendment (the blue solid line), it exceeds 3000. 
Additionally, from the Q-function amendment results (right part of Figure~\ref{fig:cleansing_results}), we see that the Q-estimation bias ($L_{\text{bias}}$) is significantly reduced in Ant and Humanoid. 
For example, in Humanoid, the estimation bias of the Q-function before the amendment (the red dashed line) is approximately 30 during epochs 250--300, but after the amendment, it is reduced to approximately 20 (the red solid line). 
\begin{figure*}[t!]
\begin{minipage}{1.0\hsize}
\includegraphics[clip, width=0.49\hsize]{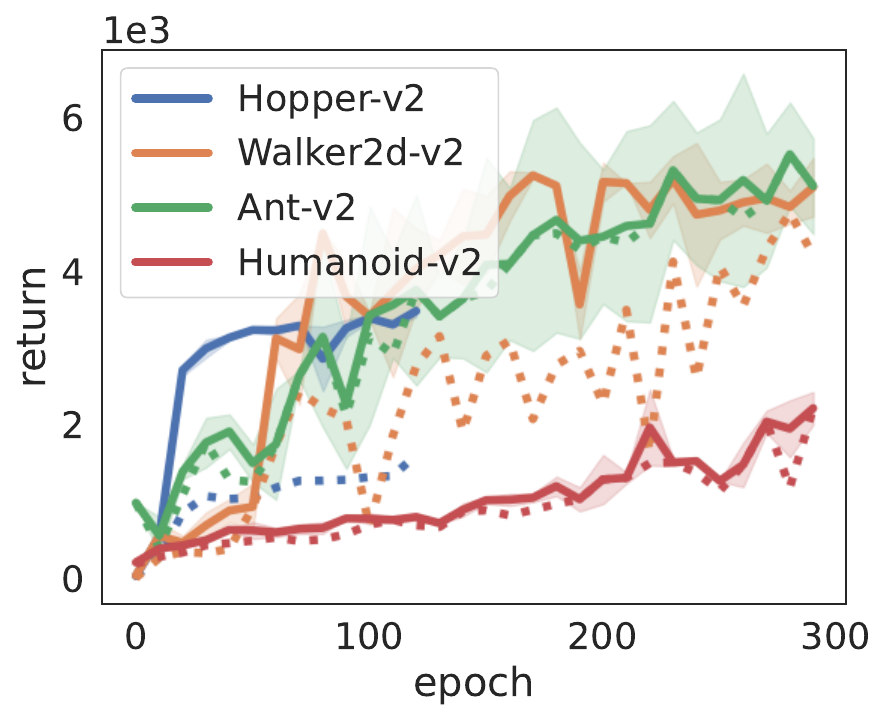}
\includegraphics[clip, width=0.49\hsize]{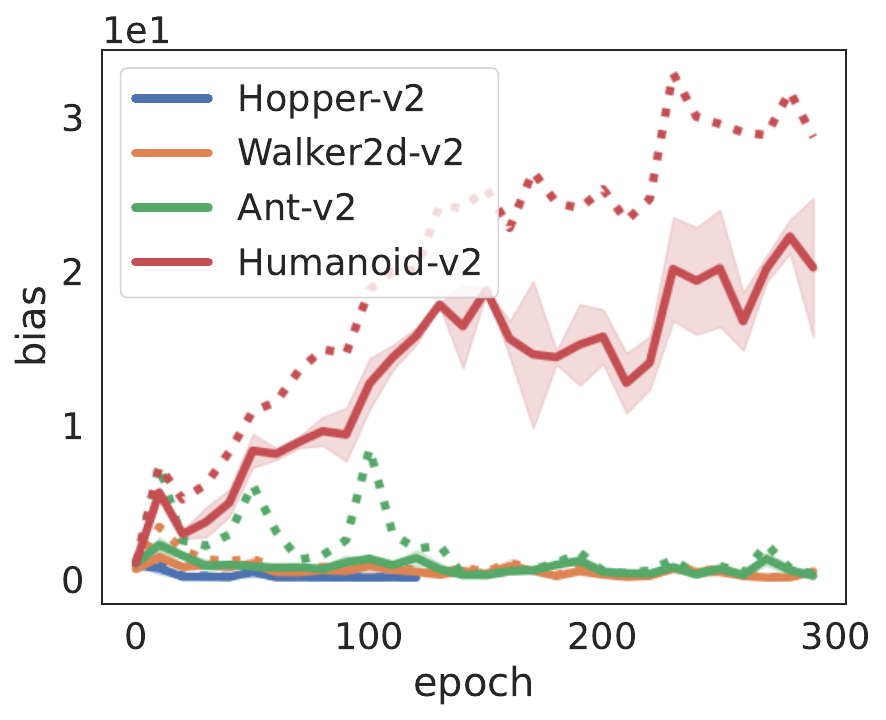}
\end{minipage}
\vspace{-0.7\baselineskip}
\caption{
Results of policy amendments (left) and Q-function amendments (right) in underperforming trials. 
The solid lines represent the post-amendment performances: return for the policy (left; i.e., $L_{\text{ret}}(\pi_{\theta, \mathbf{w}_*})$) and bias for the Q-function (right; i.e., $L_{\text{bias}}(Q_{\phi, \mathbf{w}_*})$). 
The dashed lines show the pre-amendment performances: return (left; i.e., $L_{\text{ret}}(\pi_{\theta})$) and bias (right; i.e., $L_{\text{bias}}(Q_{\phi})$). 
These results are averaged over two underperforming trials. 
The shaded regions around the solid lines represent one standard deviation around the mean. 
These figures demonstrate that the amendments improve returns in Hopper and Walker2d, and reduce biases in Ant and Humanoid.
}
\label{fig:cleansing_results}
\end{figure*}

What kinds of experiences negatively influence policy or Q-function performance?
\textbf{Policy performance:} Some experiences negatively influencing returns are associated with stumbling or falling. 
An example of such experiences in Hopper can be seen in this video: \url{https://github.com/user-attachments/assets/07d14535-bf16-4069-893a-f08f9ee9c7c7}
\textbf{Q-function performance:} Experiences negatively influencing Q-estimation bias tend to be older experiences. 
The lower part of Figure~\ref{fig:distribution_of_bias_return} in Appendix~\ref{app:additional_results} shows the distribution of influences on Q-estimation bias in each environment. 
For example, in the Humanoid environment, we observe that older experiences often have a negative influence (highlighted in darker colors). 

\textbf{Additional analyses and experiments.} 
We analyzed the correlation between experience influence values (Appendix~\ref{app:correlation_of_influence}). 
Additionally, we performed amendments using LOO (Appendix~\ref{sec:loo_prelim}). 
Furthermore, we performed amendments for other environments and RL agents using PIToD (Appendix ~\ref{app:adversarial_dmc} and Appendix~\ref{app:amend-droq-reset}). 

\section{Related work}\label{sec:related}
\textbf{Influence estimation in supervised learning.} 
Our research builds upon prior studies that estimate the influence of data within the supervised learning (SL) regime. 
In Section~\ref{sec:proposed_method}, we introduced our method for estimating the influence of data (i.e., experiences) in RL settings. 
Methods that estimate the influence of data have been extensively studied in the SL research community. 
Typically, these methods require SL loss functions that are twice differentiable with respect to model parameters (e.g., \citet{pmlr-v70-koh17a,yeh2018representer,hara2019data,koh2019accuracy,guo2020fastif,chen2021hydra,schioppa2022scaling}). 
However, these methods are not directly applicable to our RL setting, as such SL loss functions are unavailable. 
In contrast, turn-over dropout (ToD)~\citep{kobayashi2020efficient} estimates the influence without requiring differentiable SL loss functions. 
We extended ToD to RL settings (Sections~\ref{sec:proposed_method}, \ref{sec:experiments}, and \ref{sec:application}). 
For this extension of ToD, we provided a theoretical justification (Appendix~\ref{app:theory}) and considered practical implementations (Appendix~\ref{sec:practical_implementation}). 

\textbf{Influence estimation in off-policy evaluation (OPE)}. 
A few studies in the OPE community have focused on efficiently estimating the influence of experiences~\citep{gottesman2020interpretable,lobo2022data}. 
These studies are limited to estimating the influence on policy evaluation using nearest-neighbor or linear Q-functions. 
In contrast, our study estimates influence on a broader range of performance metrics (e.g., return or Q-estimation bias) using neural-network-based Q-functions and policies. 

\textbf{Prioritized experience replay (PER).}
In PER, the importance of experiences is estimated to prioritize experiences during experience replay. 
The importance of experiences is estimated based on criteria such as TD-error~\citep{Schaul2016,fedus2020revisiting} or on-policyness~\citep{novati2019remember,sun2020attentive}. 
Some readers might think that PER resembles our method. 
However, PER fundamentally differs from our method, as it cannot efficiently estimate or disable the influence of experiences in hindsight. 

\textbf{Interpretable RL.} 
Our method (Section~\ref{sec:proposed_method}) estimates the influence of experiences, thereby providing interpretability. 
Previous studies in the RL community have proposed interpretable methods based on symbolic (or relational) representation~\citep{dvzeroski2001relational,ijcai2018p675,lyu2019sdrl,garnelo2016towards,NIPS2017_fb89fd13,konidaris2018skills}, interpretable proxy models (e.g., decision trees)~\citep{degris2006learning,liu2019toward,coppens2019distilling,zhu2022extracting}, saliency explanation~\citep{zahavy2016graying,greydanus2018visualizing,mott2019towards,wang2020attribution,anderson2020mental}, and sparse kernel models~\citep{dao2018deep}~\footnote{For a comprehensive review of interpretable RL, see \citet{mliani2024explain}.}. 
Unlike these studies, our study proposes a method to estimate the influence of experiences on RL agent performance. 
This method helps us, for example, identify influential experiences when an RL agent performs poorly, as demonstrated in Section~\ref{sec:application}. 

\section{Conclusion and limitations}\label{sec:conclusion}
In this paper, we proposed PIToD, a policy iteration (PI) method that efficiently estimates the influence of experiences (Section~\ref{sec:proposed_method}). 
We demonstrated that PIToD (i) correctly estimates the influence of experiences (Section~\ref{sec:eval_self_influence}), and (ii) significantly reduces computational time compared to the leave-one-out (LOO) method (Section~\ref{sec:eval_computational_time}). 
Furthermore, we applied PIToD to identify and delete negatively influential experiences, which improved the performance of policies and Q-functions (Section~\ref{sec:application}). 

We believe that our work provides a solid foundation for understanding the relationship between experiences and RL agent performance. 
However, it has several limitations, which we discuss along with future directions in Appendix~\ref{sec:limitations_future_work}. 

\bibliographystyle{rlj}
\bibliography{icml2023}

\clearpage
\appendix


\clearpage
\section{Important theoretical property of PIToD}\label{app:theory}
In this section, we theoretically prove the following property of PIToD: 
``Assuming that the policy $\pi_{\theta}$ and the Q-function $Q_{\phi}$ are updated according to Algorithm~\ref{alg2:PI_PIToD}, the functions $Q_{\phi, \mathbf{w}_i}$ and $\pi_{\theta, \mathbf{w}_i}$, which use the flipped mask $\mathbf{w}_i$, are unaffected by the gradients associated with experience $e_i$.'' 
This property is important as it justifies the use of the flipped mask $\mathbf{w}_i$ to estimate the influence of $e_i$ in PIToD.

First, we define key terms for our theoretical proof:\\
\textbf{Experience}: 
We define an experience $e_i$ as $e_i = (s, a, r, s', i)$, where $s$ is the state, $a$ is the action, $r$ is the reward, $s'$ is the next state, and $i$ is a unique identifier. 
We also define another experience as $e_{i'}$, where $i'$ is a unique identifier.\\
\textbf{Parameters}: 
At the $j$-th iteration of Algorithm~\ref{alg2:PI_PIToD} (lines 3--6), we define the parameters of the Q-function and policy that are not dropped by the mask $\mathbf{m}_{i'}$ as $\phi_{j, \mathbf{m}_{i'}}$ and $\theta_{j, \mathbf{m}_{i'}}$, respectively. 
Additionally, we define the parameters dropped by $\mathbf{m}_{i'}$ as $\phi_{j, \mathbf{w}_{i'}}$ and $\theta_{j, \mathbf{w}_{i'}}$.\\
\textbf{Policy and Q-function}: We define the policy and Q-function, where all parameters except $\phi_{j, \mathbf{m}_{i'}}$ and $\theta_{j, \mathbf{m}_{i'}}$ are set to zero (i.e., dropped), as $Q_{\phi_{j, \mathbf{m}_{i'}}}$ and $\pi_{\theta_{j, \mathbf{m}_{i'}}}$. 
Similarly, the policy and Q-function, where all parameters except $\phi_{j, \mathbf{w}_{i'}}$ and $\theta_{j, \mathbf{w}_{i'}}$ are zero, are defined as $Q_{\phi_{j, \mathbf{w}_{i'}}}$ and $\pi_{\theta_{j, \mathbf{w}_{i'}}}$. 

Next, we introduce two assumptions required for our proof. 
The first assumption is for the policy and Q-function with masks. 
\begin{assumption}\label{ass:sparsity}
    $Q_{\phi_{j, \mathbf{m}_{i'}}}$ and $\pi_{\theta_{j, \mathbf{m}_{i'}}}$ can be replaced by $Q_{\phi_{j, \mathbf{m}_{i'}}'}$ and $\pi_{\theta_{j, \mathbf{m}_{i'}}'}$, whose parameters $\phi_{j, \mathbf{m}_{i'}}'$ and $\theta_{j, \mathbf{m}_{i'}}'$ satisfy the following gradient properties: 

    The property of $\phi_{j, \mathbf{m}_{i'}}'$ is as follows: 
    \begin{eqnarray}
         && \nabla_{\phi_{j, \mathbf{m}_{i'}}'} \left( r + \gamma Q_{\bar{\phi}_{j, \mathbf{m}_{i}}'}(s', a') - Q_{\phi_{j, \mathbf{m}_{i}}'}(s, a) \right)^2, ~~ a' \sim \pi_{\theta_{j, \mathbf{m}_i}'}(\cdot | s' ) \nonumber\\
         & = & \nabla _{\phi_{j, \mathbf{m}_{i'}}'} \left( r + \gamma Q_{\bar{\phi}_{j, \mathbf{m}_i}'}(s', a') - Q_{\phi_{j, \mathbf{m}_i}'}(s, a) \right)^2 \cdot \mathbb{I}(i=i'), ~~ a' \sim \pi_{\theta_{j, \mathbf{m}_i}'}(\cdot | s' ). \nonumber
    \end{eqnarray}
    Here, $\mathbb{I}$ is an indicator function that returns $1$ if the specified condition (i.e., $i = i'$) is true and 0 otherwise. 
    
    The property of $\theta_{j, \mathbf{m}_{i'}}'$ is as follows: 
    \begin{eqnarray}
            & & \nabla_{\theta_{j, \mathbf{m}_{i'}}'} Q_{\phi_{j+1, \mathbf{m}_{i}}'}(s, a), ~~~ a \sim \pi_{\theta_{j, \mathbf{m}_{i}}'}(\cdot | s) \nonumber\\
            & = & \nabla_{\theta_{j, \mathbf{m}_{i'}}'} Q_{\phi_{j+1, \mathbf{m}_{i}}'}(s, a) \cdot \mathbb{I}( i=i' ), ~~~ a \sim \pi_{\theta_{j, \mathbf{m}_{i}}'}(\cdot | s). \nonumber
    \end{eqnarray}
\end{assumption}
Intuitively, Assumption~\ref{ass:sparsity} can be interpreted as follows: ``$Q_{\phi_{j, \mathbf{m}_{i'}}}$ and $\pi_{\theta_{j, \mathbf{m}_{i'}}}$ are predominantly influenced by the experience $e_{i'}$ (i.e., the influence of other experiences is negligible).'' 

The second assumption is for $\phi_{j, \mathbf{w}_{i'}}$ and $\theta_{j, \mathbf{w}_{i'}}$: 
\begin{assumption}\label{ass:flip}
    For the gradient with respect to $\phi_{j, \mathbf{w}_{i'}}$, the following equation holds: 
    \begin{eqnarray}
         && \nabla_{\phi_{j, \mathbf{w}_{i'}}} \left( r + \gamma Q_{\bar{\phi}_{j, \mathbf{m}_{i}}}(s', a') - Q_{\phi_{j, \mathbf{m}_{i}}}(s, a) \right)^2, ~~ a' \sim \pi_{\theta_{j, \mathbf{m}_{i}}'}(\cdot | s') \nonumber\\
         &=& \nabla_{\phi_{j, \mathbf{w}_{i'}}} \left( r + \gamma Q_{\bar{\phi}_{j, \mathbf{m}_{i}}}(s', a') - Q_{\phi_{j, \mathbf{m}_{i}}}(s, a) \right)^2 \cdot \mathbb{I}( i \neq i' ), ~~ a' \sim \pi_{\theta_{j, \mathbf{m}_{i}}'}(\cdot | s'). \label{eq:ass_flip_no_grad_qfunc}
    \end{eqnarray}

    For the gradient with respect to $\theta_{j, \mathbf{w}_{i'}}$, the following equation holds: 
    \begin{eqnarray}
        && \nabla_{\theta_{j, \mathbf{w}_{i'}}} Q_{\phi_{j+1, \mathbf{m}_{i}}}(s, a), ~~~ a \sim \pi_{\theta_{j-1, \mathbf{m}_{i}}}(\cdot | s) \nonumber\\
        &=& \nabla_{\theta_{j, \mathbf{w}_{i'}}} Q_{\phi_{j+1, \mathbf{m}_{i}}}(s, a) \cdot \mathbb{I}( i \neq i' ), ~~~ a \sim \pi_{\theta_{j, \mathbf{m}_{i}}}(\cdot | s). \label{eq:ass_flip_no_grad_policy}
    \end{eqnarray}
\end{assumption}
Intuitively, Assumption~\ref{ass:flip} can be interpreted as follows: ``When parameters are updated using experience $e_i$, the parameters dropped out (i.e., $\phi_{j, \mathbf{w}_i}$ and $\theta_{j, \mathbf{w}_i}$) remain unaffected by gradients computed from $e_i$.'' 

Based on the above assumptions, we will derive the property of PIToD described at the beginning of this section~\footnote{``Assuming that the policy $\pi_{\theta}$ and the Q-function $Q_{\phi}$ are updated according to Algorithm~\ref{alg2:PI_PIToD}, the functions $Q_{\phi, \mathbf{w}_i}$ and $\pi_{\theta, \mathbf{w}_i}$, which use the flipped mask $\mathbf{w}_i$, are unaffected by the gradients associated with experience $e_i$.'' }. 
Some readers may think that Assumption~\ref{ass:flip} corresponds to this property. 
However, in addition to Assumption~\ref{ass:flip}, we must guarantee that the components used to create target signals for Eq.~\ref{eq:ass_flip_no_grad_qfunc} and Eq.~\ref{eq:ass_flip_no_grad_policy} (i.e., the components highlighted in red below) are also not influenced by $e_i$ when $i \neq i'$. 
Otherwise, even when $i \neq i'$, the parameters $\phi_{j, \mathbf{w}_{i}}$ and $\theta_{j, \mathbf{w}_{i}}$ could still be influenced indirectly through components affected by $e_i$. 
\begin{eqnarray}
     && \nabla_{\phi_{j, \mathbf{w}_{i'}}} \left( r + \gamma \textcolor{red}{ Q_{\bar{\phi}_{j, \mathbf{m}_{i}}}(s', a')} - Q_{\phi_{j, \mathbf{m}_{i}}}(s, a) \right)^2 \cdot \mathbb{I}( i \neq i' ), ~~ a' \sim \textcolor{red}{ \pi_{\theta_{j, \mathbf{m}_{i}}'}(\cdot | s')}. \nonumber
\end{eqnarray}
\begin{eqnarray}
    && \nabla_{\theta_{j, \mathbf{w}_{i'}}} \textcolor{red}{ Q_{\phi_{j+1, \mathbf{m}_{i}}}(s, a) } \cdot \mathbb{I}( i \neq i' ), ~~~ a \sim \pi_{\theta_{j, \mathbf{m}_{i}}}(\cdot | s). \nonumber
\end{eqnarray}
Based on Assumption~\ref{ass:sparsity}, we can ensure that these red-highlighted components are not influenced by $e_i$ when $i \neq i'$. 

Based on Assumption~\ref{ass:sparsity}, the following theorem holds: 
\begin{theorem}
    Given that, for $j > 0$, the parameters $\phi_{j, \mathbf{m}_{i'}}'$ and $\theta_{j, \mathbf{m}_{i'}}'$ are updated in the same way as the original parameters $\phi_{j, \mathbf{m}_{i'}}$ and $\theta_{j, \mathbf{m}_{i'}}$, according to Eq.~\ref{eq:Obj_Qfuncs_ToD} and Eq.~\ref{eq:Obj_Policy_ToD}, the following equation holds: 
    \begin{eqnarray}
        \phi_{j, \mathbf{m}_{i'}}' &\leftarrow& \phi_{j-1, \mathbf{m}_{i'}}' - \sum_{(s, a, r, s', i)} \nabla _{\phi_{j-1, \mathbf{m}_{i'}}'} \left( r + \gamma Q_{\bar{\phi}_{j-1, \mathbf{m}_{i}}'}(s', a') - Q_{\phi_{j-1, \mathbf{m}_{i}}'}(s, a) \right)^2 \cdot \mathbb{I}(i=i'), \nonumber\\ && ~~ a' \sim \pi_{\theta_{j-1, \mathbf{m}_{i}}'}(\cdot | s' ). \nonumber
    \end{eqnarray}
    \begin{eqnarray}
        \theta_{j, \mathbf{m}_{i'}}' \leftarrow \theta_{j-1, \mathbf{m}_{i'}}' - \sum_{(s, a, r, s', i)} \nabla_{\theta_{j-1, \mathbf{m}_{i'}}'} Q_{\phi_{j, \mathbf{m}_{i}}'}(s, a) \cdot \mathbb{I}( i = i' ), ~~~ a \sim \pi_{\theta_{j-1, \mathbf{m}_{i}}'}(\cdot | s).  \nonumber
    \end{eqnarray}
\end{theorem}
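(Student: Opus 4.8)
The plan is to prove the two update equations by induction on the iteration index $j$, using Assumption~\ref{ass:sparsity} as the engine that collapses the sum over sampled experiences down to the single term with $i = i'$. I would treat the Q-function update and the policy update together, since they have the same structure and the policy update's target signal $Q_{\phi_{j, \mathbf{m}_i}'}$ is produced by the Q-function update at the same step.

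First I would write down the generic PIToD update for the primed parameters, which by hypothesis is identical in form to Eq.~\ref{eq:Obj_Qfuncs_ToD} and Eq.~\ref{eq:Obj_Policy_ToD} applied to $\phi_{j-1, \mathbf{m}_{i'}}'$ and $\theta_{j-1, \mathbf{m}_{i'}}'$: the new parameter is the old one minus (or plus) the sum over all sampled tuples $(s,a,r,s',i)$ of the per-experience gradient. The only thing that needs justification is that, inside this sum, each per-experience gradient with respect to $\phi_{j-1, \mathbf{m}_{i'}}'$ (resp. $\theta_{j-1, \mathbf{m}_{i'}}'$) equals the same gradient multiplied by $\mathbb{I}(i = i')$. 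But this is exactly the pair of gradient identities asserted in Assumption~\ref{ass:sparsity}, stated at iteration index $j-1$ for the Q-function gradient and at the appropriate index for the policy gradient. Substituting these identities term-by-term into the sum yields precisely the two displayed equations in the theorem. The induction hypothesis is needed only to ensure that all the quantities appearing inside the gradients at step $j-1$ (namely $\phi_{j-1, \mathbf{m}_i}'$, $\theta_{j-1, \mathbf{m}_i}'$, and the target $\bar\phi_{j-1, \mathbf{m}_i}'$) are well-defined objects carrying the primed structure, so that Assumption~\ref{ass:sparsity} is applicable to them; the base case $j = 1$ follows from the same argument applied to the initialized parameters.

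The one subtlety I would be careful about is the ordering of the Q-function and policy updates within a single iteration: in Algorithm~\ref{alg2:PI_PIToD}, $\phi$ is updated first (line 5) and then $\theta$ is updated using the freshly updated $Q_\phi$ (line 6). This is why the policy-update equation in the theorem has the gradient of $Q_{\phi_{j, \mathbf{m}_i}'}$ — index $j$, not $j-1$ — while the actor sampling is still at $\theta_{j-1, \mathbf{m}_i}'$. So after proving the $\phi$-equation for iteration $j$, I would feed that result (together with the induction hypothesis on $\theta_{j-1}'$) into the policy gradient identity from Assumption~\ref{ass:sparsity}, rather than trying to prove both simultaneously; the proof is really a nested induction, first on $j$ and then within each $j$ on the two sub-updates.

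I expect the main obstacle to be purely bookkeeping rather than conceptual: keeping the mask indices, iteration indices, and target-network indices consistent across the Q-update and the policy-update, and making sure the version of Assumption~\ref{ass:sparsity} invoked at each step matches the indices actually appearing in Algorithm~\ref{alg2:PI_PIToD} (in particular handling the target Q-function $Q_{\bar\phi}$ correctly — either by treating $\bar\phi_{j, \mathbf{m}_i}'$ as a delayed copy of $\phi_{j, \mathbf{m}_i}'$ to which the same sparsity property applies, or by noting that the red-highlighted target components are handled by the separate argument already given in the text preceding the theorem). Once the index alignment is pinned down, the substitution of Assumption~\ref{ass:sparsity} into the update rule is immediate and the theorem follows.
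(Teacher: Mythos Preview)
Your core mechanism---substitute Assumption~\ref{ass:sparsity} term-by-term into the update sum---is exactly what the paper does, and that substitution alone is the entire proof. The induction on $j$, the nested sub-induction on the two sub-updates, and the worry about whether the primed objects at step $j-1$ are ``well-defined'' so that Assumption~\ref{ass:sparsity} applies are all unnecessary scaffolding: Assumption~\ref{ass:sparsity} is posited to hold at every iteration index $j$ outright, so there is nothing to propagate inductively. The paper's proof simply writes the generic update rule for $\phi_{j,\mathbf{m}_{i'}}'$ (resp.\ $\theta_{j,\mathbf{m}_{i'}}'$) as a sum over sampled tuples, invokes Assumption~\ref{ass:sparsity} once to insert the indicator $\mathbb{I}(i=i')$, and stops---two lines per equation, no base case, no handling of the target network $\bar\phi$ beyond what the assumption already provides. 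Your proposal would succeed, but it dresses a one-step substitution as a structural induction; strip the induction and you have the paper's argument verbatim.
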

\begin{proof}
    \begin{eqnarray}
        \phi_{j, \mathbf{m}_{i'}}' &\leftarrow& \phi_{j-1, \mathbf{m}_{i'}}' - \nabla _{\phi_{j-1, \mathbf{m}_{i'}}'} \sum_{(s, a, r, s', i)} \left( r + \gamma Q_{\bar{\phi}_{j-1, \mathbf{m}_{i}}'}(s', a') - Q_{\phi_{j-1, \mathbf{m}_{i}}'}(s, a) \right)^2, \nonumber\\ && ~~ a' \sim \pi_{\theta_{j-1, \mathbf{m}_{i}}'}(\cdot | s' ) \nonumber\\
         &\stackrel{(1)}{=}& \phi_{j-1, \mathbf{m}_{i'}}' - \sum_{(s, a, r, s', i)} \nabla _{\phi_{j-1, \mathbf{m}_{i'}}'} \left( r + \gamma Q_{\bar{\phi}_{j-1, \mathbf{m}_{i}}'}(s', a') - Q_{\phi_{j-1, \mathbf{m}_{i}}'}(s, a) \right)^2 \cdot \mathbb{I}(i=i'), \nonumber\\ && ~~ a' \sim \pi_{\theta_{j-1, \mathbf{m}_{i}}'}(\cdot | s' ) \nonumber
    \end{eqnarray}

    \begin{eqnarray}
        \theta_{j, \mathbf{m}_{i'}}' &\leftarrow& \theta_{j-1, \mathbf{m}_{i'}}' - \nabla_{\theta_{j-1, \mathbf{m}_{i'}}'} \sum_{(s, a, r, s', i)}  Q_{\phi_{j, \mathbf{m}_{i}}'}(s, a), ~~~ a \sim \pi_{\theta_{j-1, \mathbf{m}_{i}}'}(\cdot | s) \nonumber\\
        &\stackrel{(1)}{=}& \theta_{j-1, \mathbf{m}_{i'}}' - \sum_{(s, a, r, s', i)} \nabla_{\theta_{j-1, \mathbf{m}_{i'}}'} Q_{\phi_{j, \mathbf{m}_{i}}'}(s, a) \cdot \mathbb{I}( i=i' ), ~~~ a \sim \pi_{\theta_{j-1, \mathbf{m}_{i}}'}(\cdot | s) \nonumber
    \end{eqnarray}
    (1) Apply Assumption~\ref{ass:sparsity}. 
\end{proof}
This theorem implies that $Q_{\phi_{j, \mathbf{m}_{i'}}'}$ and $\pi_{\theta_{j, \mathbf{m}_{i'}}'}$ are dominantly influenced by the experience $e_{i'}$ for $j>0$. 
Thus, if the red-highlighted components above can be replaced with these components, we can say that $\phi_{j, \mathbf{w}_i}$ and $\theta_{j, \mathbf{w}_i}$ are not influenced by gradients depending on $e_i$ in both cases of $i = i'$ and $i \neq i'$. 
Below, we will show that such a replacement is doable. 

Based on Assumptions~\ref{ass:sparsity} and \ref{ass:flip}, the following theorem holds:
\begin{theorem}
For any $j > 0$, the parameters $\phi_{j, \mathbf{w}_{i'}}$ and $\theta_{j, \mathbf{w}_{i'}}$ in Algorithm~\ref{alg2:PI_PIToD} are updated as follows:
\begin{eqnarray}
     \phi_{j, \mathbf{w}_{i'}} & \leftarrow & \phi_{j-1, \mathbf{w}_{i'}} - \sum_{(s, a, r, s', i)}  \nabla_{\phi_{j-1, \mathbf{w}_{i'}}} \left( r + \gamma Q_{\bar{\phi}_{j-1, \mathbf{m}_{i}}'}(s', a') - Q_{\phi_{j-1, \mathbf{m}_{i}}}(s, a) \right)^2 \cdot \mathbb{I}( i \neq i' ), \nonumber\\ && ~~ a' \sim \pi_{\theta_{j-1, \mathbf{m}_{i}}'}(\cdot | s' ) \nonumber
\end{eqnarray}
\begin{eqnarray}
    \theta_{j, \mathbf{w}_{i'}} &\leftarrow& \theta_{j-1, \mathbf{w}_{i'}} - \sum_{(s, a, r, s', i)} \nabla_{\theta_{j-1, \mathbf{w}_{i'}}} Q_{\phi_{j, \mathbf{m}_{i}}'}(s, a) \cdot \mathbb{I}( i \neq i' ), ~~~ a \sim \pi_{\theta_{j-1, \mathbf{m}_{i}}}(\cdot | s) \nonumber
\end{eqnarray}
\end{theorem}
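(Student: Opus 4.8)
The plan is to establish the update rule for $\phi_{j, \mathbf{w}_{i'}}$ and $\theta_{j, \mathbf{w}_{i'}}$ by starting from the actual gradient steps that Algorithm~\ref{alg2:PI_PIToD} performs on the full parameter vectors and then projecting onto the coordinates selected by $\mathbf{w}_{i'}$. First I would write out the update for the full Q-function parameter $\phi$ at iteration $j$ (line 5 of Algorithm~\ref{alg2:PI_PIToD}), restrict attention to the sub-block of coordinates corresponding to $\mathbf{w}_{i'}$, and interchange the (finite) sum over sampled experiences with the gradient operator. This gives $\phi_{j, \mathbf{w}_{i'}} \leftarrow \phi_{j-1, \mathbf{w}_{i'}} - \sum_{(s,a,r,s',i)} \nabla_{\phi_{j-1, \mathbf{w}_{i'}}} \left( r + \gamma Q_{\bar\phi_{j-1, \mathbf{m}_i}}(s', a') - Q_{\phi_{j-1, \mathbf{m}_i}}(s, a) \right)^2$ with $a' \sim \pi_{\theta_{j-1, \mathbf{m}_i}}(\cdot | s')$.

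Next I would apply Assumption~\ref{ass:flip} (specifically Eq.~\ref{eq:ass_flip_no_grad_qfunc}) term-by-term to insert the indicator $\mathbb{I}(i \neq i')$: each summand's gradient with respect to the dropped block $\phi_{j-1, \mathbf{w}_{i'}}$ vanishes whenever $i = i'$. The remaining subtlety — and this is the step the excerpt has been building toward — is that the target signal $Q_{\bar\phi_{j-1,\mathbf{m}_i}}$ and the action $a'$ drawn from $\pi_{\theta_{j-1,\mathbf{m}_i}}$ that appear inside the surviving summands (those with $i \neq i'$) must themselves be replaceable by the ``primed'' versions $Q_{\bar\phi_{j-1,\mathbf{m}_i}'}$ and $\pi_{\theta_{j-1,\mathbf{m}_i}'}$ without changing the value, so that the resulting update genuinely involves only quantities the preceding theorem has certified to be uninfluenced by $e_i$. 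This replacement is legitimate because Assumption~\ref{ass:sparsity} lets us swap $Q_{\phi_{j,\mathbf{m}_i}}, \pi_{\theta_{j,\mathbf{m}_i}}$ for $Q_{\phi_{j,\mathbf{m}_i}'}, \pi_{\theta_{j,\mathbf{m}_i}'}$, and the first theorem then guarantees the primed versions are dominantly influenced by $e_i$ only — exactly the red-highlighted components discussed just above the statement. Combining these yields the first displayed update. The policy update for $\theta_{j, \mathbf{w}_{i'}}$ follows by the identical argument: write line 6 of Algorithm~\ref{alg2:PI_PIToD} restricted to the $\mathbf{w}_{i'}$ block, swap sum and gradient, apply Eq.~\ref{eq:ass_flip_no_grad_policy} to attach $\mathbb{I}(i \neq i')$, and invoke the first theorem to replace $Q_{\phi_{j,\mathbf{m}_i}}$ by $Q_{\phi_{j,\mathbf{m}_i}'}$ inside the surviving terms.

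I expect the main obstacle to be bookkeeping rather than deep mathematics: one must be careful that the indices on the target network ($j-1$ vs. $j$), the policy used to sample $a'$ or $a$ (whether it carries a prime, and whether its iteration index matches), and the Q-function whose gradient is being taken all line up exactly with the hypotheses of Assumptions~\ref{ass:sparsity} and \ref{ass:flip} and of the first theorem — the excerpt's own displayed equations are not entirely uniform on these indices, so the proof needs to either reconcile them or note that the discrepancies are immaterial under the stated assumptions. A secondary point to handle cleanly is that interchanging $\sum$ and $\nabla$ is trivially valid here (finite sum, linearity of differentiation), but it should be flagged as step $(1)$ in the same style as the first theorem's proof, with the subsequent index-insertion flagged as applying Assumption~\ref{ass:flip} and the target-signal replacement flagged as applying Assumption~\ref{ass:sparsity} together with the first theorem. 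Once the indices are pinned down, the proof is essentially two parallel three-line derivations mirroring the proof of the preceding theorem.
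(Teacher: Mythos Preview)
Your proposal is correct and follows essentially the same approach as the paper: start from the Algorithm~\ref{alg2:PI_PIToD} update restricted to the $\mathbf{w}_{i'}$ block, apply Assumption~\ref{ass:flip} to attach the indicator $\mathbb{I}(i\neq i')$, and then apply Assumption~\ref{ass:sparsity} to replace the target-signal components by their primed counterparts. The paper's proof is exactly the two parallel three-line derivations you anticipate, labeling the indicator step as ``(1) Apply Assumption~\ref{ass:flip}'' and the replacement step as ``(2) Apply Assumption~\ref{ass:sparsity}'' (it does not separately invoke Theorem~1 in the proof body, leaving that connection to the surrounding discussion).
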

\begin{proof}
For $\phi_{j, \mathbf{w}_{i'}}$, 
\begin{eqnarray}
     \phi_{j, \mathbf{w}_{i'}} & \leftarrow & \phi_{j-1, \mathbf{w}_{i'}} - \nabla_{\phi_{j-1, \mathbf{w}_{i'}}} \sum_{(s, a, r, s', i)} \left( r + \gamma Q_{\bar{\phi}_{j-1, \mathbf{m}_{i}}}(s', a') - Q_{\phi_{j-1, \mathbf{m}_{i}}}(s, a) \right)^2, \nonumber\\ && ~~ a' \sim \pi_{\theta_{j-1, \mathbf{m}_{i}}}(\cdot | s' ) \nonumber\\
     & \stackrel{(1)}{=} & \phi_{j-1, \mathbf{w}_{i'}} - \sum_{(s, a, r, s', i)} \nabla_{\phi_{j-1, \mathbf{w}_{i'}}} \left( r + \gamma Q_{\bar{\phi}_{j-1, \mathbf{m}_{i}}}(s', a') - Q_{\phi_{j-1, \mathbf{m}_{i}}}(s, a) \right)^2 \cdot \mathbb{I}( i \neq i' ), \nonumber\\ && ~~ a' \sim \pi_{\theta_{j-1, \mathbf{m}_{i}}}(\cdot | s' ) \nonumber\\
     & \stackrel{(2)}{=} & \phi_{j-1, \mathbf{w}_{i'}} - \sum_{(s, a, r, s', i)}  \nabla_{\phi_{j-1, \mathbf{w}_{i'}}} \left( r + \gamma Q_{\bar{\phi}_{j-1, \mathbf{m}_{i}}'}(s', a') - Q_{\phi_{j-1, \mathbf{m}_{i}}}(s, a) \right)^2 \cdot \mathbb{I}( i \neq i' ), \nonumber\\ && ~~ a' \sim \pi_{\theta_{j-1, \mathbf{m}_{i}}'}(\cdot | s' ) \nonumber
\end{eqnarray}
(1) Apply Assumption~\ref{ass:flip}. 
(2) Apply Assumption~\ref{ass:sparsity}. 

Similarly, for $\theta_{j, \mathbf{w}_{i'}}$, 
\begin{eqnarray}
    \theta_{j, \mathbf{w}_{i'}} &\leftarrow& \theta_{j-1, \mathbf{w}_{i'}} - \nabla_{\theta_{j-1, \mathbf{w}_{i'}}} \sum_{(s, a, r, s', i)}  Q_{\phi_{j, \mathbf{m}_{i}}}(s, a), ~~~ a \sim \pi_{\theta_{j-1, \mathbf{m}_{i}}}(\cdot | s) \nonumber\\
    & \stackrel{(1)}{=}& \theta_{j-1, \mathbf{w}_{i'}} - \sum_{(s, a, r, s', i)} \nabla_{\theta_{j-1, \mathbf{w}_{i'}}} Q_{\phi_{j, \mathbf{m}_{i}}}(s, a) \cdot \mathbb{I}( i \neq i' ), ~~~ a \sim \pi_{\theta_{j-1, \mathbf{m}_{i}}}(\cdot | s) \nonumber\\
    & \stackrel{(2)}{=}& \theta_{j-1, \mathbf{w}_{i'}} - \sum_{(s, a, r, s', i)} \nabla_{\theta_{j-1, \mathbf{w}_{i'}}} Q_{\phi_{j, \mathbf{m}_{i}}'}(s, a) \cdot \mathbb{I}( i \neq i' ), ~~~ a \sim \pi_{\theta_{j-1, \mathbf{m}_{i}}}(\cdot | s) \nonumber
\end{eqnarray}
\end{proof}
This theorem implies that:
\begin{itemize}
    \item[(i)] When $i = i'$, neither $\theta_{j, \mathbf{w}_{i'}}$ nor $\phi_{j, \mathbf{w}_{i'}}$ is influenced by gradients dependent on experience $e_{i'}$. 
    \item[(ii)] When $i \neq i'$, $\theta_{j, \mathbf{w}_{i'}}$ and $\phi_{j, \mathbf{w}_{i'}}$ are updated without depending on the components that might be influenced by $e_{i'}$. 
\end{itemize}
Therefore, we conclude that ``$Q_{\phi, \mathbf{w}_{i'}}$ and $\pi_{\theta, \mathbf{w}_{i'}}$, and consequently $Q_{\phi, \mathbf{w}_i}$ and $\pi_{\theta, \mathbf{w}_i}$, are not influenced by the gradients related to the experiences $e_{i'}$ and $e_i$, respectively.''

\section{Analyzing and minimizing overlap in elements of masks}\label{sec:analyzing_overlap_in_mask}
In our method (Section~\ref{sec:proposed_method}), each experience is assigned a mask. 
If there is significant overlap in the elements of different masks, one experience could significantly interfere with other experiences. 
In this section, we discuss (i) the expected overlap between the masks of experiences $e_i$ and $e_{i'}$ and (ii) the dropout rate that minimizes this overlap. 

For discussion, we introduce the following definitions and assumptions. 
We define the mask size as $M$, and the number of overlapping elements between masks as $m$. 
We assume that each mask element is independently initialized as 0 with probability $p$ (i.e., dropout rate) and 1 with probability $1 - p$. 

Below, we derive the probability and expected number of overlaps in the mask elements.\\
\textbf{Probability of $m$ overlaps.} 
First, we calculate the probability that a specific position in the masks of $e_i$ and $e_{i'}$ has the same value. 
The probability that both elements of the masks have 0 at the same position is $p \cdot p = p^2$. 
Similarly, the probability that both elements have 1 at the same position is $(1-p) \cdot (1-p) = (1-p)^2$. 
Therefore, the probability $q$ that the values at a specific position in the masks are the same is 
\begin{eqnarray}
    q = p^2 + (1-p)^2 = 2p^2 - 2p + 1. \label{eq:prob_single_overlap}
\end{eqnarray}
The probability that the masks have $m$ overlaps follows the binomial distribution: 
\begin{eqnarray}
    \binom{M}{m} q^m (1-q)^{M-m}  \label{eq:prob_m_overlaps}. 
\end{eqnarray}
\textbf{Expected number of overlaps.} 
Using Eq.\ref{eq:prob_single_overlap} and Eq.\ref{eq:prob_m_overlaps}, the expected number of overlaps is calculated as 
\begin{dmath}
    \sum_{k=0}^{M} k \binom{M}{k} q^k (1-q)^{M-k} = M q = M (2p^2 - 2p + 1). \label{eq_expected_k}
\end{dmath}
For better understanding, we show a plot of Eq.~\ref{eq_expected_k} values with respect to $p$ and $M$ in Figure~\ref{fig:expected_overrap}. 
\begin{figure}[h!]
\begin{center}
\includegraphics[clip, width=0.60\hsize]{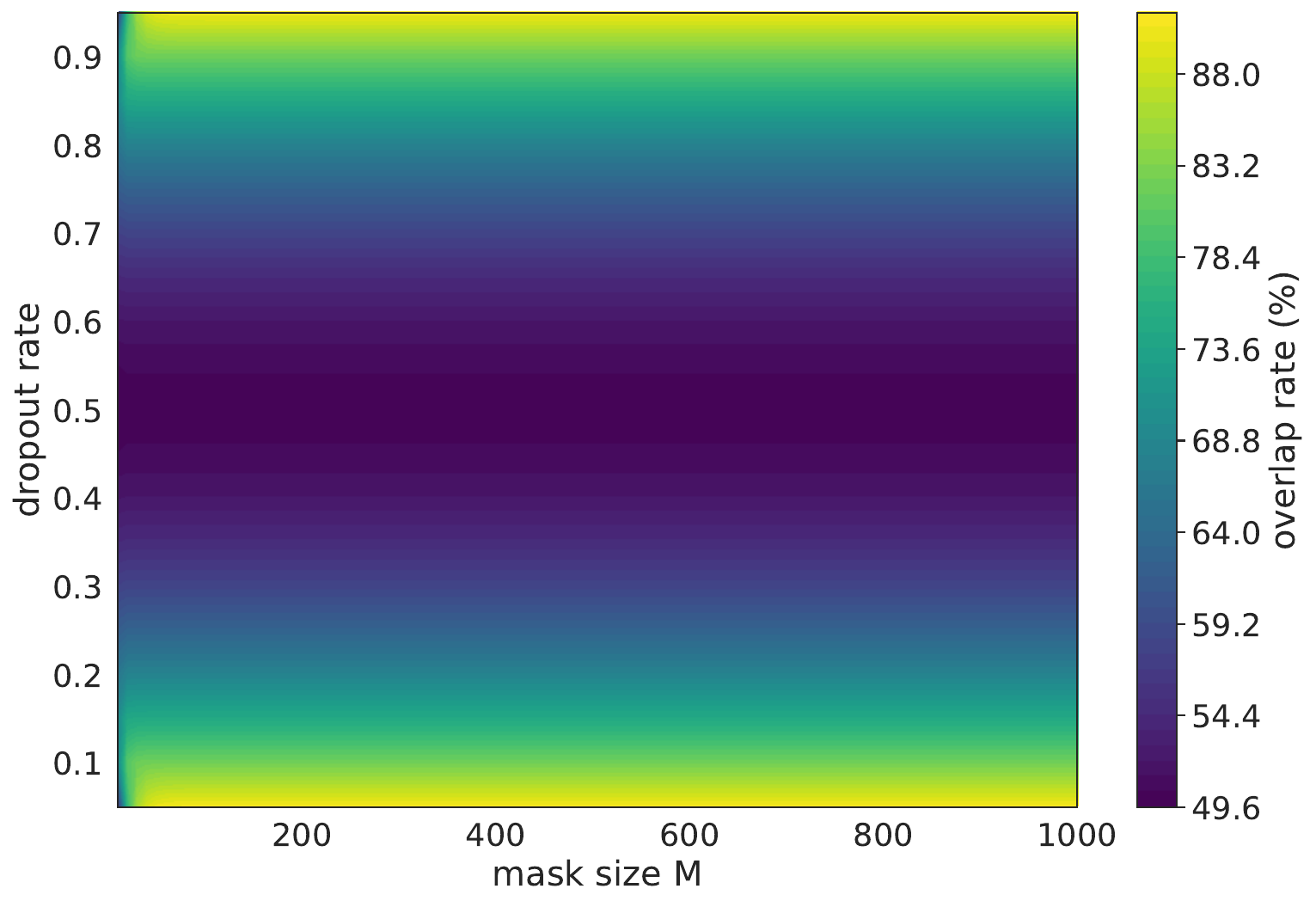}
\end{center}
\caption{The distribution of the expected number of overlaps (Eq.~\ref{eq_expected_k}) with respect to the dropout rate \( p \) and mask size \( M \). For clarity, we plot the expected overlap rate (\( m/M \)) instead of the expected number of overlaps \( m \).}
\label{fig:expected_overrap}
\end{figure}

\textbf{The dropout rate of $p=0.5$ minimizes the expected number of overlaps.} 
Since Eq.~\ref{eq_expected_k} is convex in $p$, the value of $p$ that minimizes the expected overlap is determined by solving $\frac{\mathrm{d} M (2p^2 - 2p + 1)}{\mathrm{d} p} = 0$. 
As a result, we find that $p = 0.5$ minimizes the expected overlap.
%
At \( p=0.5 \), the expected overlap between two masks is 50\%. 
Figure~\ref{fig:prob_overaprate_p05} shows the probability of the overlap rate \( m/M \) with \( p=0.5 \) for various values of \( M \). 
From this figure, we see that the probability of having a between 0-50\% overlap is very high, while the probability of having a between 50-100\% overlap is very low, regardless of the value of \( M \). 
\begin{figure}[h!]
\begin{center}
\includegraphics[clip, width=0.60\hsize]{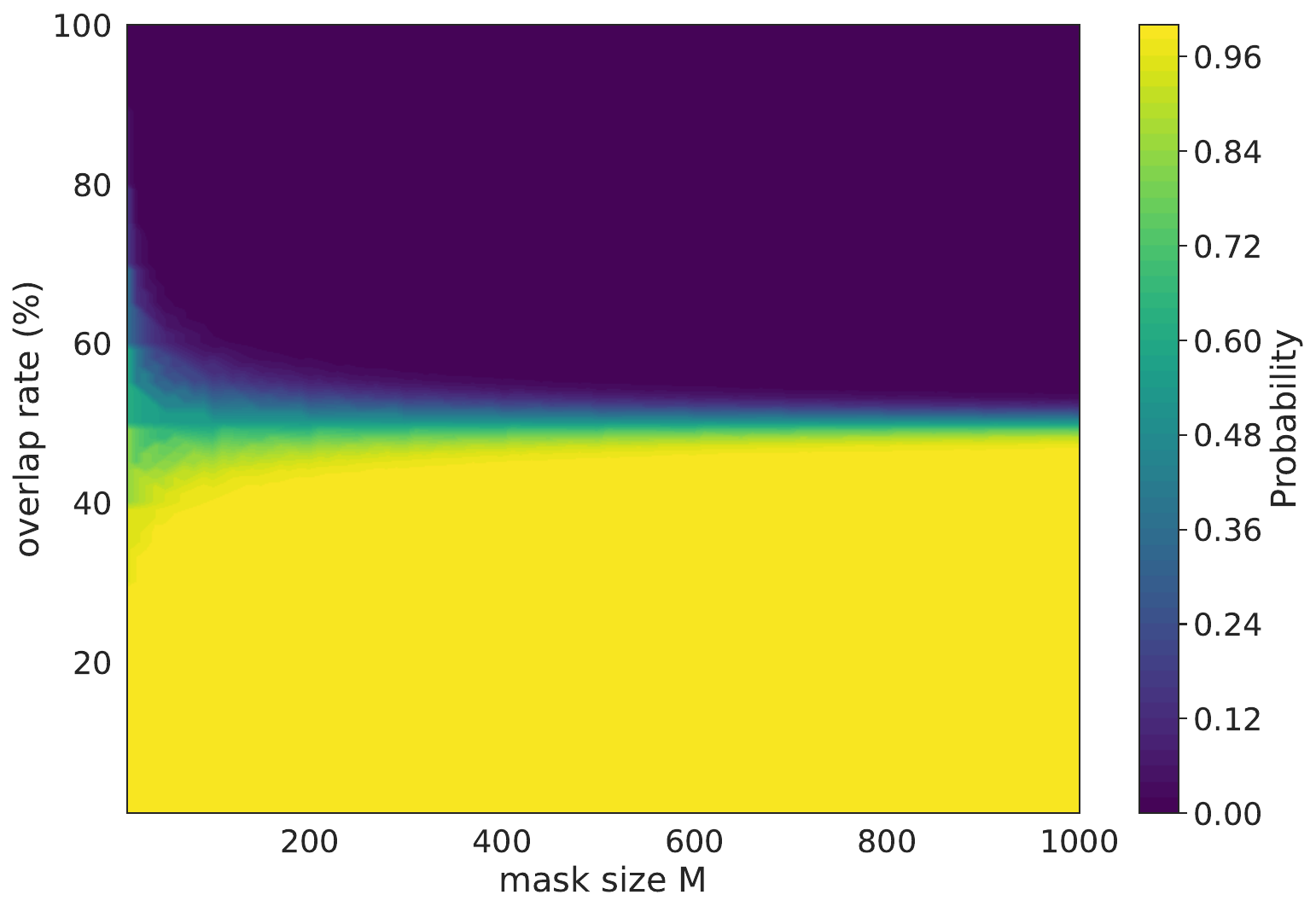}
\end{center}
\caption{The probability of the overlap rate $m/M$ with \( p = 0.5 \) for various values of \( M \).}
\label{fig:prob_overaprate_p05}
\end{figure}

\clearpage
\section{Practical implementation of PIToD for Section~\ref{sec:experiments} and Section~\ref{sec:application}}\label{sec:practical_implementation}
In this section, we describe the practical implementation of PIToD. 
Specifically, we explain (i) the soft actor-critic (SAC)~\citep{haarnoja2018soft} version of PI with a mask, (ii) group mask, and (iii) key implementation decisions to improve learning. 
This practical implementation is used in our experiments (Section~\ref{sec:experiments} and Section~\ref{sec:application}). 

\textbf{(i) SAC version of PI with a mask.} 
The SAC version of PI with a mask is presented in Algorithm~\ref{alg1:SACToD}. 
The mask is applied to the policy and Q-functions during policy evaluation (lines 5--8) and policy improvement (line 9). 
For the policy evaluation, two Q-functions $Q_{\phi_j}$, where $j \in \{1, 2\}$, are updated as: 
\begin{dmath}
    \phi_j \leftarrow \phi_j - \nabla_{\phi_j} \mathbb{E}_{e_i = (s, a, r, s', i) \sim \mathcal{B},~ a' \sim \pi_{\theta, \mathbf{m}_i}(\cdot | s' )} \left[ \left( r + \gamma \left( \min_{j' = 1, 2} Q_{\bar{\phi}_{j'}, \mathbf{m}_i}(s', a') - \alpha \log \pi_{\theta, \mathbf{m}_i}(a' | s') \right) \nonumber\\ ~~~~~~~~~~~~~~~~~~~~~~~~~~~~~~~~~~~~~~~~~~~~~~~~~~~~~~~~~ - Q_{\phi_j, \mathbf{m}_i}(s, a) \right)^2 \right]. \label{eq:Obj_Qfuncs_SACToD} 
\end{dmath}
This is a variant of Eq.~\ref{eq:Obj_Qfuncs} that uses clipped double Q-learning with two target Q-functions $Q_{\bar{\phi}_{j'}, \mathbf{m}_i}$ and entropy bonus $\alpha \log \pi_{\theta, \mathbf{m}_i}(a' | s')$. 
Additionally, for policy improvement, policy $\pi_\theta$ is updated as 
\begin{dmath}
    \theta \leftarrow \theta + \nabla_{\theta} \mathbb{E}_{e_i = (s, i) \sim \mathcal{B}, ~ a_{\theta, \mathbf{m}_i}, a \sim \pi_{\theta, \mathbf{m}_i}(\cdot | s)} \left[ \left( \frac{1}{2} \sum_{j=1}^{2} Q_{\phi_j, \mathbf{m}_i}(s, a_{\theta, \mathbf{m}_i}) - \alpha \log \pi_{\theta, \mathbf{m}_i}(a | s) \right) \right]. \label{eq:Obj_Policy_SACToD} 
\end{dmath}
This is a variant of Eq.~\ref{eq:Obj_Policy} that uses the entropy bonus. 
%
\begin{algorithm*}[t!]
\caption{SAC version of PI with \textcolor{green}{group mask} in PIToD}
\label{alg1:SACToD}
\begin{algorithmic}[1]
\STATE Initialize policy parameters $\theta$, Q-function parameters $\phi_1$, $\phi_2$, and an empty replay buffer $\mathcal{B}$.
\FOR{$i'=0,..., I$}
    \STATE Take action $a \sim \pi_\theta(\cdot | s)$; Observe reward $r$ and next state $s'$; Define an experience \textcolor{green}{using the group identifier $i'' \leftarrow \lfloor i' / 5000 \rfloor$ as $e_{i''} = (s, a, r, s', i'')$}; $\mathcal{B} \leftarrow \mathcal{B} \bigcup \{ e_{i''}\}$.  
    \STATE Sample experiences $\{ (s, a, r, s', i), ... \}$ from $\mathcal{B}$ (Here, $e_i = (s, a, r, s', i)$). 
    \STATE Compute target $y_i$: 
    \vspace{-0.7\baselineskip}\begin{equation}
        y_i = r + \gamma \left( \min_{j = 1, 2} Q_{\bar{\phi}_j, \mathbf{m}_i}(s', a') - \alpha \log \pi_{\theta, \mathbf{m}_i}(a' | s') \right), ~~ a' \sim \pi_{\theta, \mathbf{m}_i}(\cdot | s' ). \nonumber
    \end{equation}\vspace{-0.7\baselineskip}
    \FOR{ $j=1, 2$ }
        \STATE Update $\phi_j$ with gradient descent using 
        \vspace{-0.7\baselineskip}\begin{equation}
            \nabla_{\phi_j} \sum_{(s, a, r, s', i)} \left( Q_{\phi_j, \mathbf{m}_i}(s, a) - y_i \right)^2. \nonumber
        \end{equation}\vspace{-0.7\baselineskip}
        \STATE Update target networks with $\bar{\phi}_j \leftarrow \rho \bar{\phi}_j + (1-\rho) \phi_j$.
    \ENDFOR
    \STATE Update $\theta$ with gradient ascent using 
    \vspace{-0.7\baselineskip}\begin{equation}
        \nabla_\theta \sum_{(s, a, r, s', i)} \left( \frac{1}{2} \sum_{j=1}^{2} Q_{\phi_j, \mathbf{m}_i}(s, a_{\theta, \mathbf{m}_i}) - \alpha \log \pi_{\theta, \mathbf{m}_i}(a | s) \right), ~~~ a, a_{\theta, \mathbf{m}_i} \sim \pi_{\theta, \mathbf{m}_i}(\cdot | s). \nonumber
    \end{equation}\vspace{-0.7\baselineskip}
\ENDFOR
\end{algorithmic}
\end{algorithm*}

\textcolor{green}{\textbf{(ii) Group Mask.}} 
In our preliminary experiments, we found that the influence of a single experience on performance was negligibly small. 
To examine more significant influences, we shifted our focus from the influence of individual experiences to grouped experiences. 
To estimate the influence of grouped experiences, we organize experiences into groups and assign a mask to each group. 
Specifically, we treated 5000 experiences as a single group. 
This grouping process was implemented by assigning a group identifier to each experience, calculated as \(i'' \leftarrow \lfloor i' / 5000 \rfloor\) (line 3 of Algorithm~\ref{alg1:SACToD}). 

\textbf{(iii) Key implementation decisions to improve learning.} 
In our preliminary experiments, we found that directly applying masks and flipped masks to dropping out the parameters of the policy and Q-function degrades learning performance. 
To address this issue, we implemented macro-block dropout and layer normalization (Figure~\ref{fig:NetworkArchitecture}). 
\textbf{Macro-block dropout.} 
Instead of applying dropout to individual parameters, we apply dropout at the block level. 
Specifically, we group several parameters into a ``block'' and apply dropout to these blocks. 
In our experiment, we used an ensemble of 20 multi-layer perceptrons (MLPs) for the policy and Q-function, and treated each MLP's parameters as a single block. 
We implement dropout by multiplying each MLP's output by the corresponding element of the binary mask $\mathbf{m}_i$ (or the flipped mask $\mathbf{w}_i$).
\textbf{Layer normalization.} 
We applied layer normalization~\citep{ba2016layer} after each activation (ReLU) layer. 
Recent works show that layer normalization improves learning in a wide range of RL settings (e.g., \citet{hiraoka2022dropout,ball2023efficient,nauman24over}). 
\begin{figure}[t!]
\begin{center}
\includegraphics[clip, width=0.99\hsize]{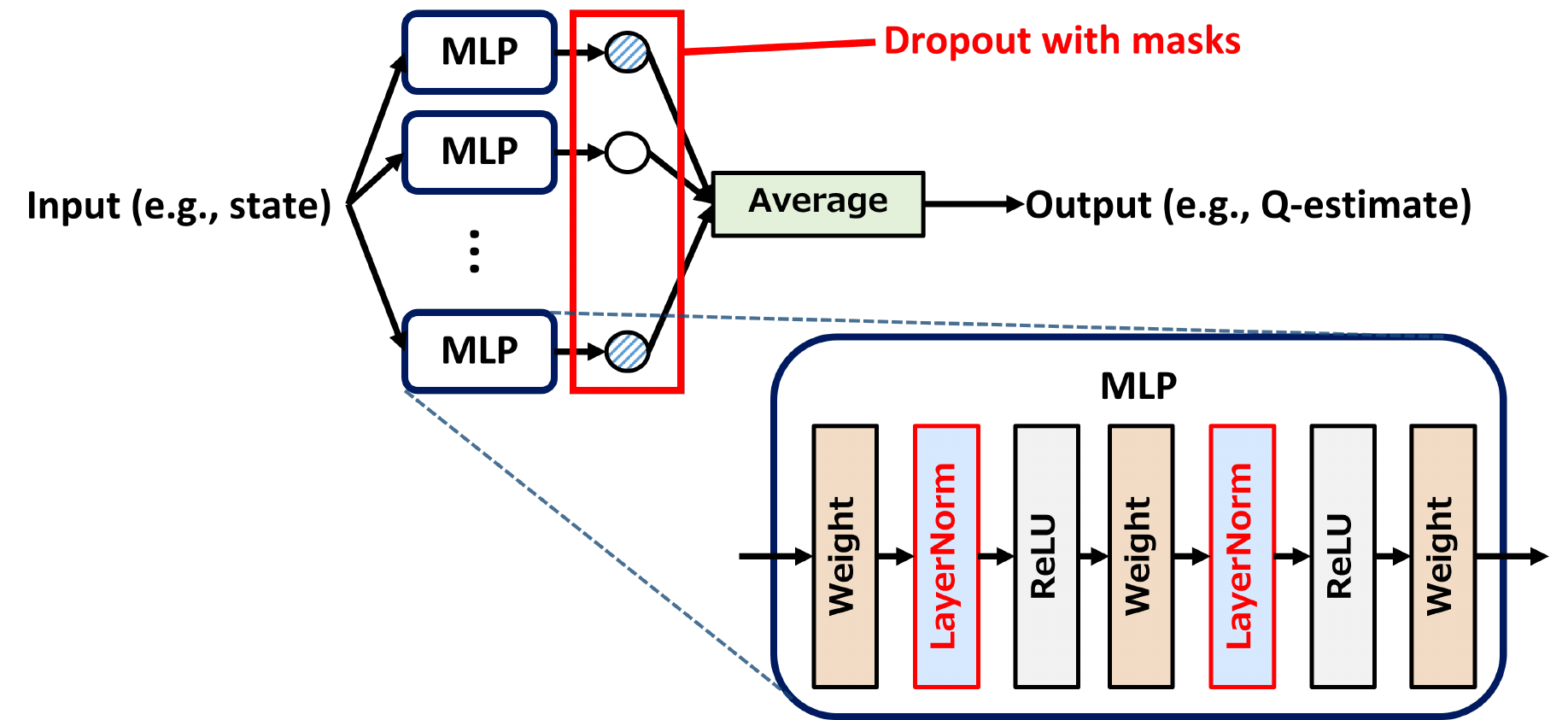}
\end{center}
\caption{
Network architectures for policy and Q-function. 
The policy network takes states as inputs and outputs the parameters of the policy distribution (mean and variance for a Gaussian distribution). 
The Q-function network takes state-action pairs as inputs and outputs Q-estimates. 
These networks incorporate macro-block dropout and layer normalization. 
\textbf{Macro-block dropout.} Our architecture utilizes an ensemble of 20 multi-layer perceptrons (MLPs), applying dropout with masks (or flipped masks) to each MLP's output. 
\textbf{Layer normalization.} 
Layer normalization is applied after every activation (ReLU) layer in each MLP.
}
\label{fig:NetworkArchitecture}
\end{figure}

To evaluate the effect of our key implementation decisions, we compare four implementations of Algorithm~\ref{alg1:SACToD}: \\
 \textbf{1. PIToD} applies vanilla dropout with masks to each parameter of the policy and Q-function.\\ 
 \textbf{2. PIToD+LN} applies layer normalization to the policy and Q-function.\\
 \textbf{3. PIToD+MD} applies macro-block dropout to the policy and Q-function.\\
 \textbf{4. PIToD+LN+MD} applies layer normalization and macro-block dropout to the policy and Q-function.\\
These implementations are compared based on the empirical returns obtained in test episodes. 

The comparison results (Figure~\ref{fig:ablation_study_result}) indicate that the implementation with our key decisions (PIToD+LN+MD) achieves the highest returns in each environment. 
\begin{figure*}[h!]
\begin{minipage}{1.0\hsize}
\includegraphics[clip, width=0.49\hsize]{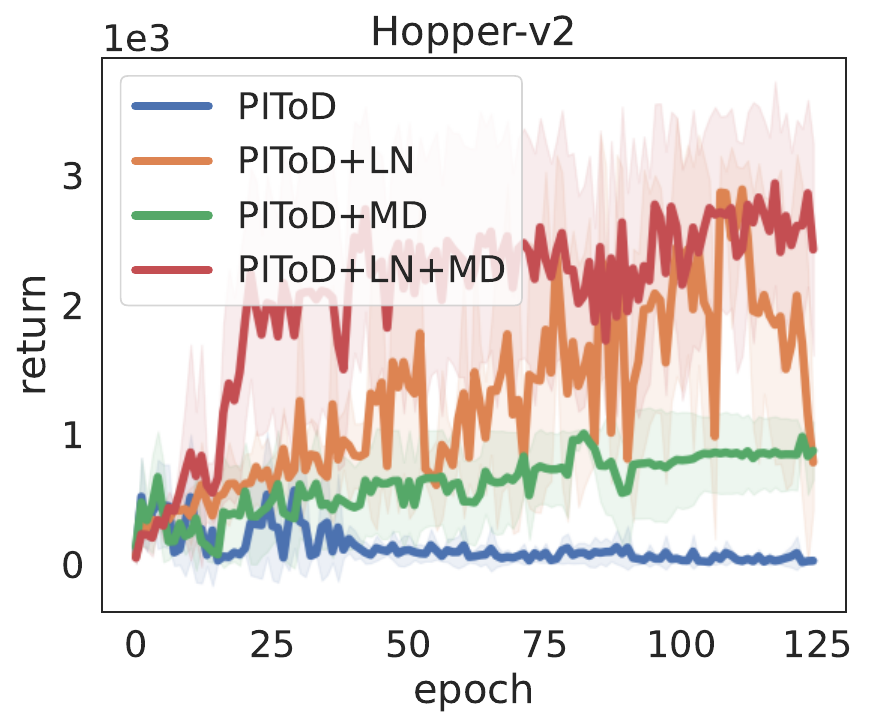}
\includegraphics[clip, width=0.49\hsize]{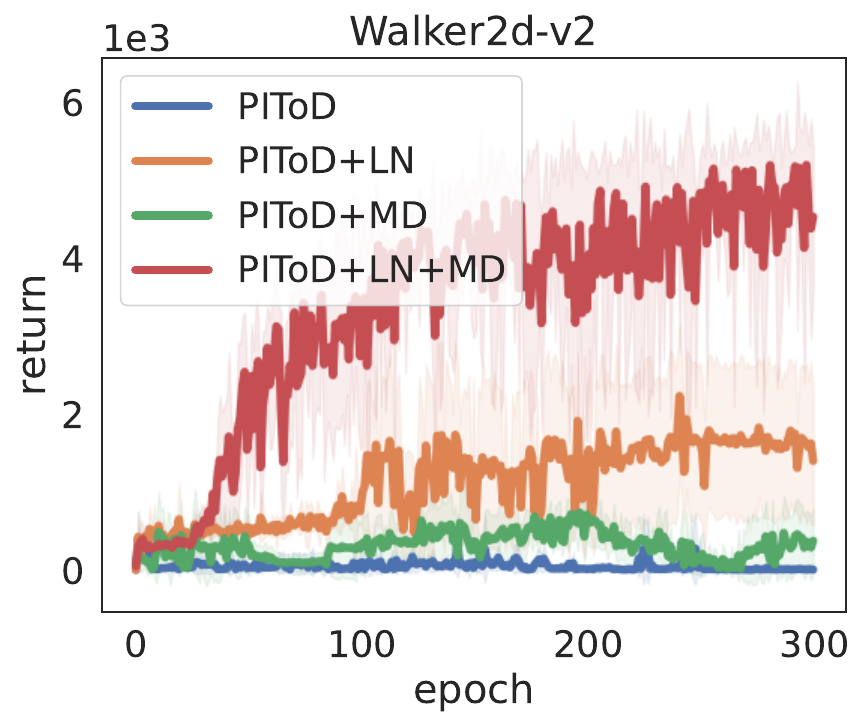}
\end{minipage}
\begin{minipage}{1.0\hsize}
\includegraphics[clip, width=0.49\hsize]{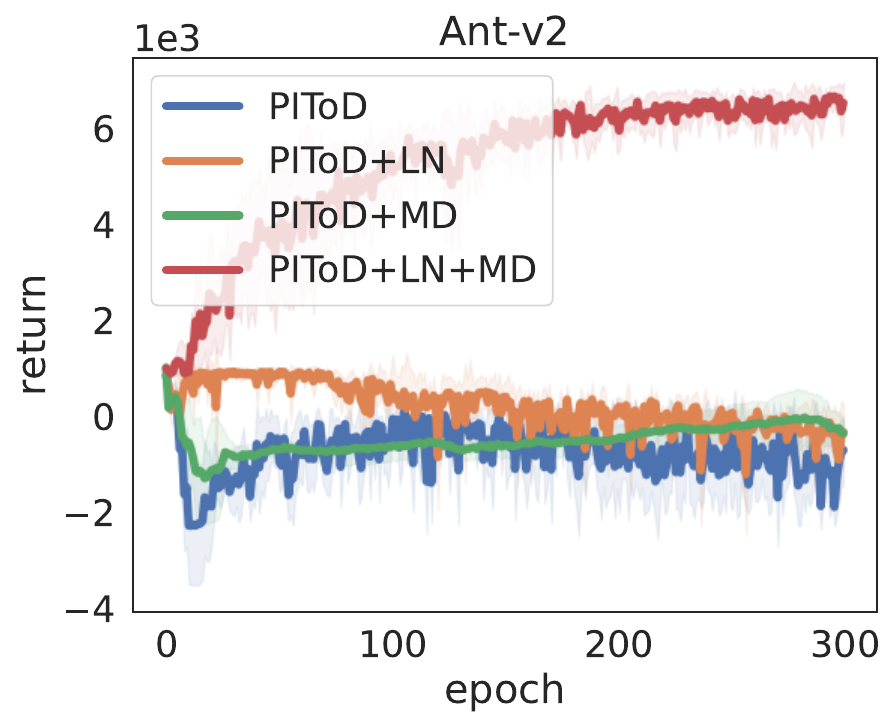}
\includegraphics[clip, width=0.49\hsize]{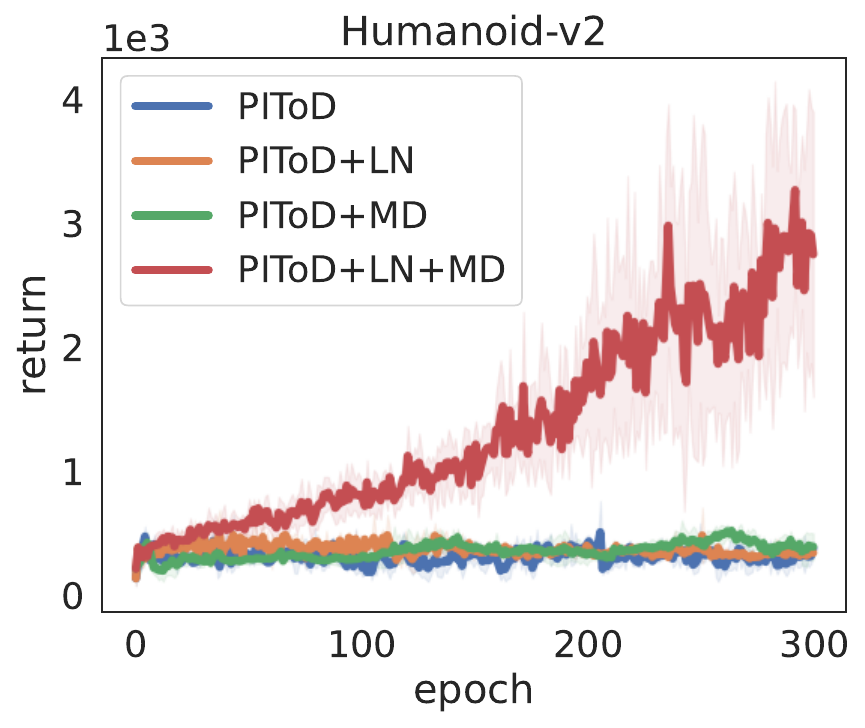}
\end{minipage}
\caption{
Ablation study results. 
The vertical axis represents returns, and the horizontal axis represents epochs. 
Each line represents the mean of ten trials, and the shaded region represents one standard deviation around the mean. 
In each environment, the implementation with our key decisions (PIToD+LN+MD) achieves the highest returns. 
}
\label{fig:ablation_study_result}
\end{figure*}

\textbf{Reference inference times.} 
In this section, we introduced a group mask, treating 5000 experiences as a single group.
Grouping multiple experiences not only improves performance but also reduces the number of targets for influence estimation, thus decreasing computation time.
To quantify the effect of grouping on the computation time, we compared the following three cases:\\
\textbf{1-group:} Time required to estimate the influence of a single group of 5000 experiences.\\
\textbf{5000-experiences:} Time required to sequentially estimate the influence of each of the 5000 experiences.\\
\textbf{5000-experiences-parallel:} Time to estimate the influence of 5000 experiences in parallel using a single GPU forward pass.\\
For comparison, we measure the time required to estimate the self-influence with respect to policy evaluation and policy improvement. 
The results are summarized in Table~\ref{tab:computation_time}.
In all environments, the 5000-experiences-parallel case takes approximately 50--60 times longer, and the 5000-experiences case takes approximately 2400--2500 times longer than the 1-group case. 
\begin{table}[t]
\centering
\caption{Comparison of influence estimation time (in seconds)}
\label{tab:computation_time}
\begin{tabular}{l||c|c|c}\hline
              & 1-group & 5000-experiences-parallel & 5000-experiences \\\hline\hline
Hopper-v2     & 0.0081 & 0.4811 & 20.0540 \\\hline
Walker2d-v2   & 0.0084 & 0.4934 & 20.4301 \\\hline
Ant-v2        & 0.0084 & 0.4840 & 20.4246 \\\hline
Humanoid-v2   & 0.0080 & 0.4930 & 19.7965 \\\hline
\end{tabular}
\end{table}

\textbf{Memory-efficient alternative implementation of masks.}  
In our implementation, a unique mask is explicitly stored in memory for each experience group. 
However, this implementation can become memory-intensive when scaling to a large number of parameters or groups.  
For memory-constrained settings, we recommend a memory-efficient alternative:  
instead of storing full binary mask vectors, only a unique scalar value (i.e., $i$ in Algorithm~\ref{alg1:SACToD}) is stored for each group, which is used as a random seed. 
The corresponding mask can then be generated on demand using a pseudo-random number generator initialized with that value. 
This approach eliminates the need to store full masks in memory and could enable the scalable application of PIToD to large-scale settings. 

\clearpage
\section{Algorithm for amending policy and Q-function used in Section~\ref{sec:application}}\label{app:alg_amend}
\begin{algorithm*}[h!]
\caption{Amendment of policy and Q-function using influence estimates. Lines 5--7 are for \textcolor{scarlet}{policy amendment.} Lines 8--10 are for \textcolor{violet}{Q-function amendment.}}
\label{alg4:Amendment}
\begin{algorithmic}[1]
\STATE Initialize policy parameters $\theta$, Q-function parameters $\phi$, and an empty replay buffer $\mathcal{B}$. Set the influence estimation interval $I_{\text{ie}}$. 
\FOR{$i' = 0, ..., I$ iterations} 
    \STATE Execute environment interaction, store experiences, and perform policy iteration as per lines 3--6 of Algorithm~\ref{alg2:PI_PIToD}.
    \IF{$ i' \% I_{\text{ie}} = 0 $}
        \color{scarlet}
        \STATE Identify $\mathbf{w}_*$ for policy as follows:
        \begin{dmath*}
             \mathbf{w}_{*} = \argmax_{\mathbf{w}_i} L_{\text{ret}}\left(\pi_{\theta, \mathbf{w}_i} \right) - L_{\text{ret}}\left(\pi_{\theta} \right). \nonumber
        \end{dmath*}
        \IF{$L_{\text{ret}}\left(\pi_{\theta, \mathbf{w}_*} \right) - L_{\text{ret}}\left(\pi_{\theta} \right) > 0$}
            \STATE Evaluate the return of the amended policy $L_{\text{ret}}\left(\pi_{\theta, \mathbf{w}_*} \right)$. 
        \ENDIF

        \color{violet}
        \STATE Identify $\mathbf{w}_*$ for Q-function as follows: 
        \begin{dmath*}
             \mathbf{w}_{*} = \argmin_{\mathbf{w}_i} L_{\text{bias}}\left(Q_{\phi, \mathbf{w}_i}\right) - L_{\text{bias}}\left(Q_{\phi}\right). \nonumber
        \end{dmath*}
        \IF{$L_{\text{bias}}\left(Q_{\phi, \mathbf{w}_*}\right) - L_{\text{bias}}\left(Q_{\phi}\right) < 0$}
            \STATE Evaluate the Q-estimation bias of the amended Q-function $L_{\text{bias}}\left(Q_{\phi, \mathbf{w}_*}\right)$. 
        \ENDIF
    \ENDIF
\ENDFOR
\end{algorithmic}
\end{algorithm*}

\clearpage
\section{Supplementary experimental results for Section~\ref{sec:application}}\label{app:additional_results}
\begin{figure*}[h!]
\begin{minipage}{1.0\hsize}
\includegraphics[clip, width=0.49\hsize]{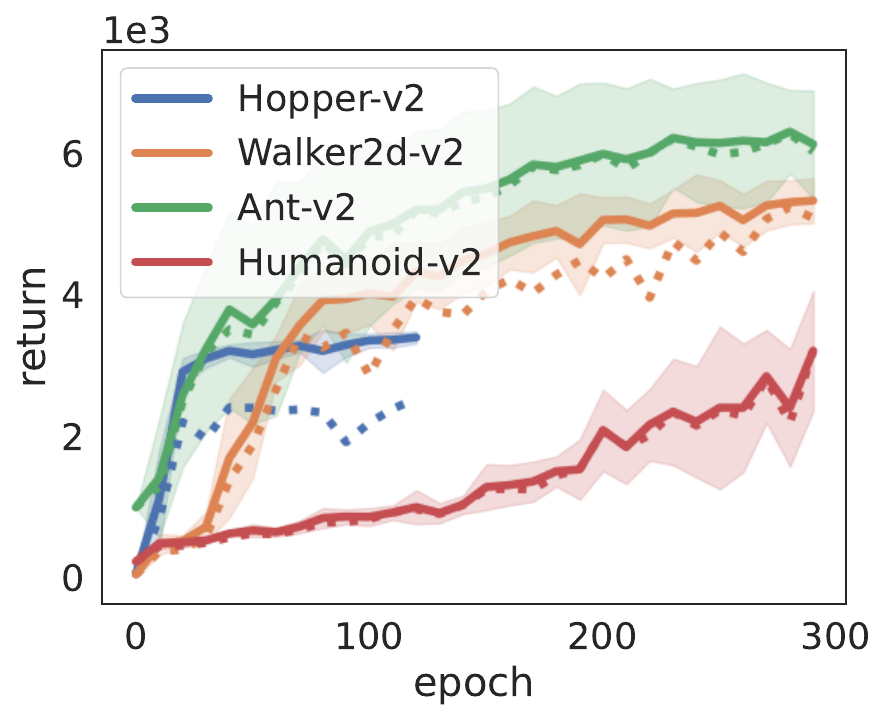}
\includegraphics[clip, width=0.49\hsize]{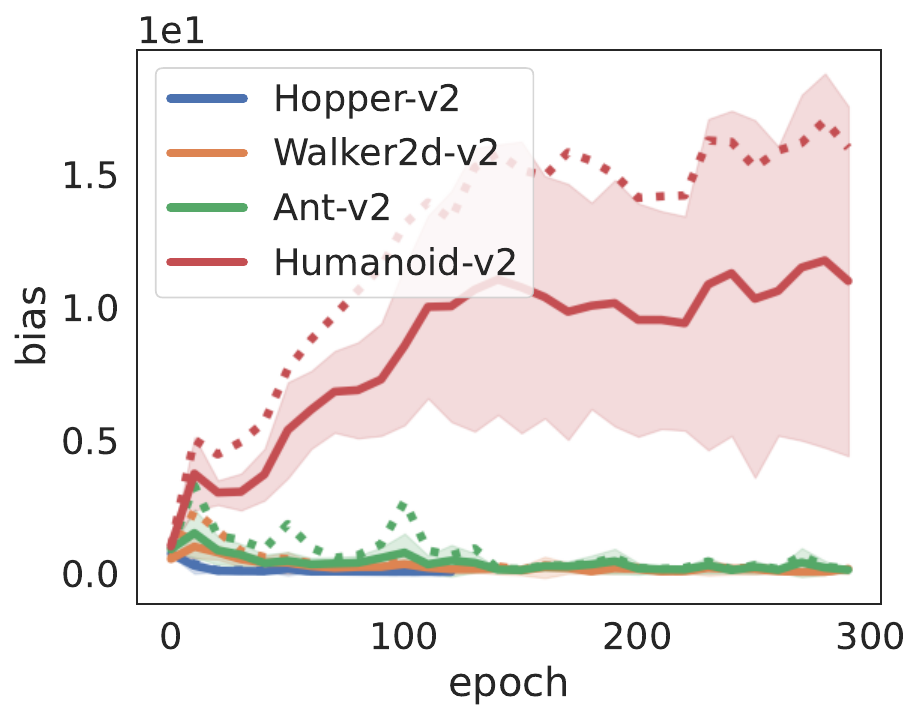}
\end{minipage}
\caption{
Results of policy amendments (left) and Q-function amendments (right) for all ten trials. 
The solid lines represent the post-amendment performance: return for the policy (left; i.e., $L_{\text{ret}}(\pi_{\theta, \mathbf{w}_*})$) and bias for the Q-function (right; i.e., $L_{\text{bias}}(Q_{\phi, \mathbf{w}_*})$). 
The dashed lines show the pre-amendment performance: return (left; i.e., $L_{\text{ret}}(\pi_{\theta})$) and bias (right; i.e., $L_{\text{bias}}(Q_{\phi})$). 
}
\label{fig:cleansing_results_average_case}
\end{figure*}
\begin{figure*}[h!]
\begin{minipage}{1.0\hsize}
\includegraphics[clip, width=0.245\hsize]{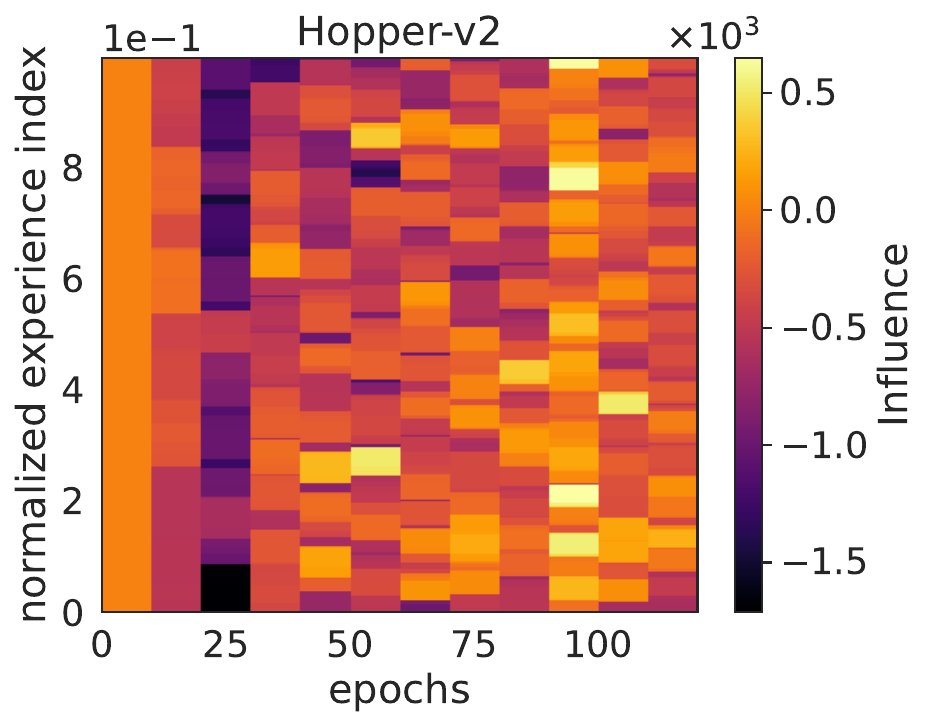}
\includegraphics[clip, width=0.245\hsize]{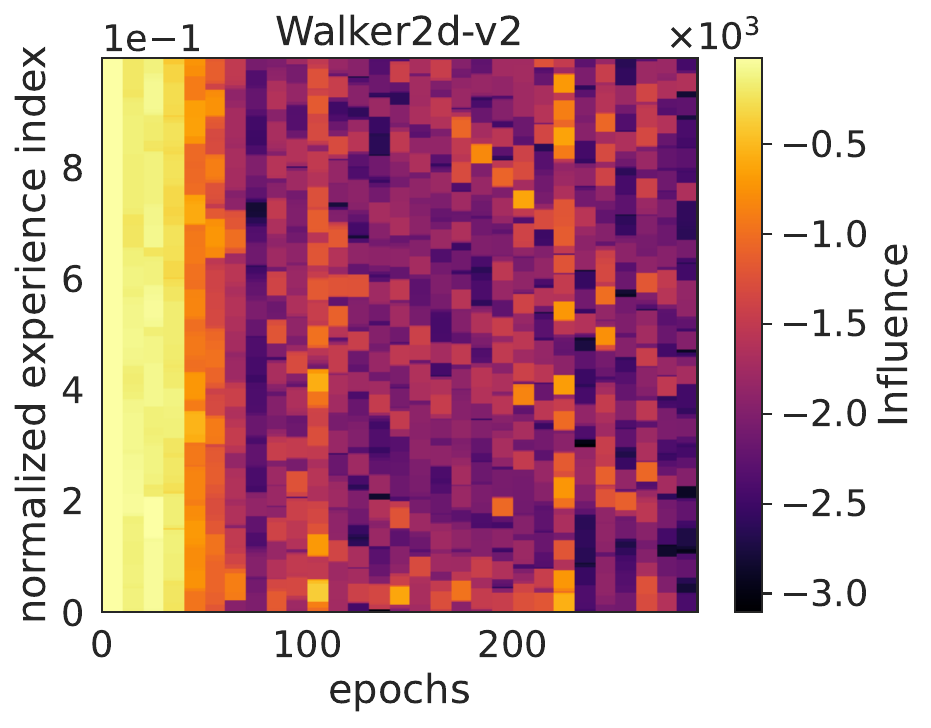}
\includegraphics[clip, width=0.245\hsize]{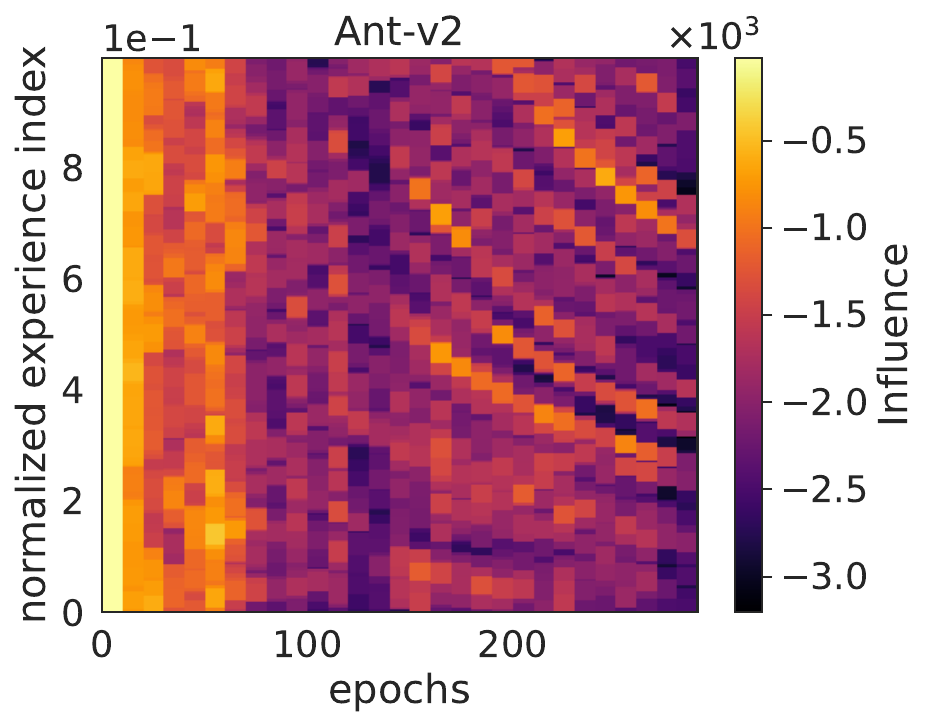}
\includegraphics[clip, width=0.245\hsize]{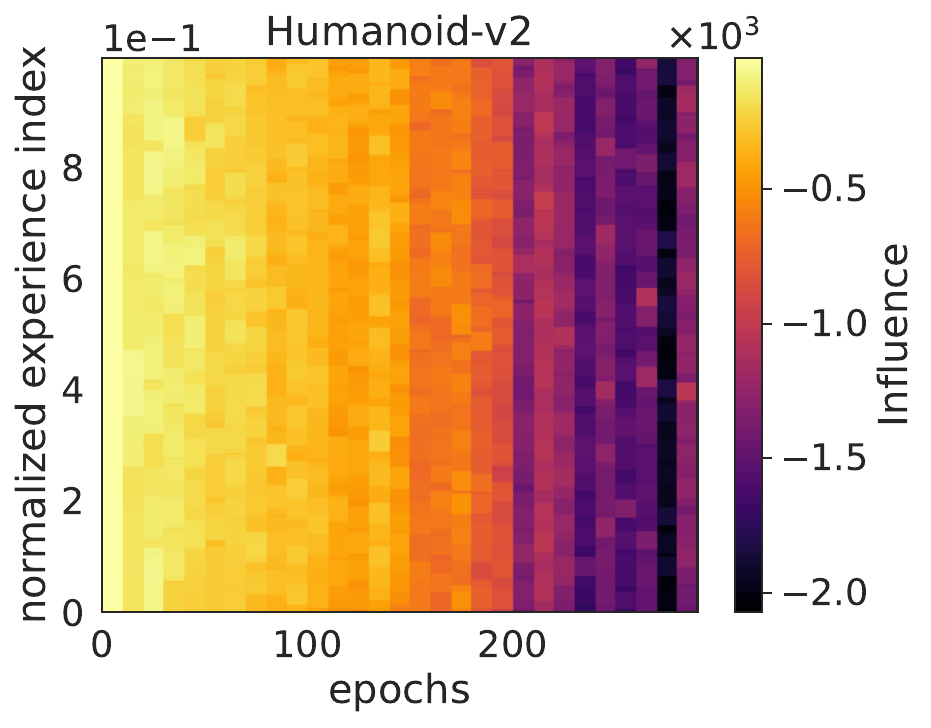}
\subcaption{Distribution of influence on return (Eq.~\ref{eq:influence_return}).}
\end{minipage}
\begin{minipage}{1.0\hsize}
\includegraphics[clip, width=0.245\hsize]{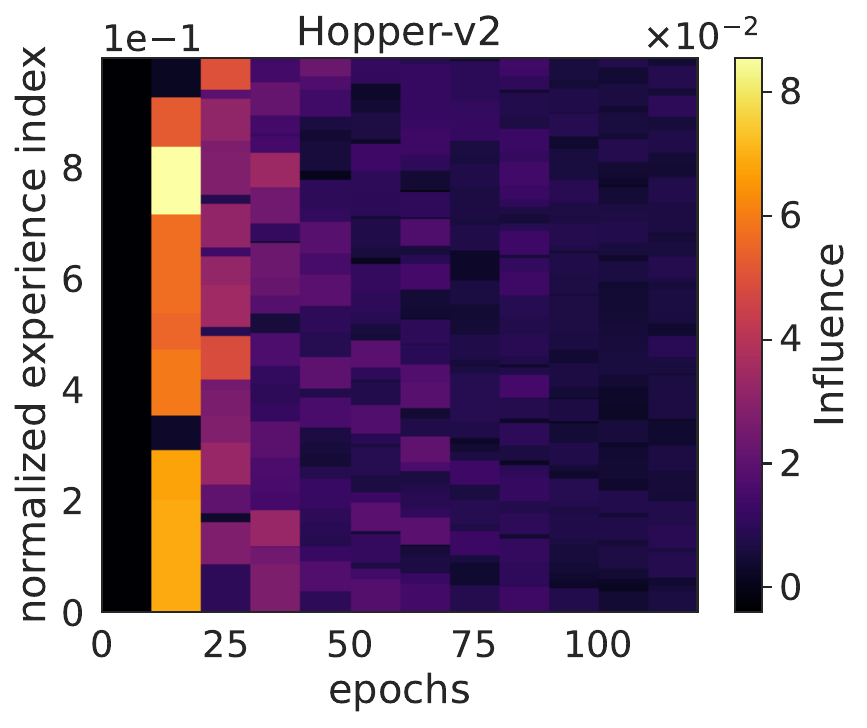}
\includegraphics[clip, width=0.245\hsize]{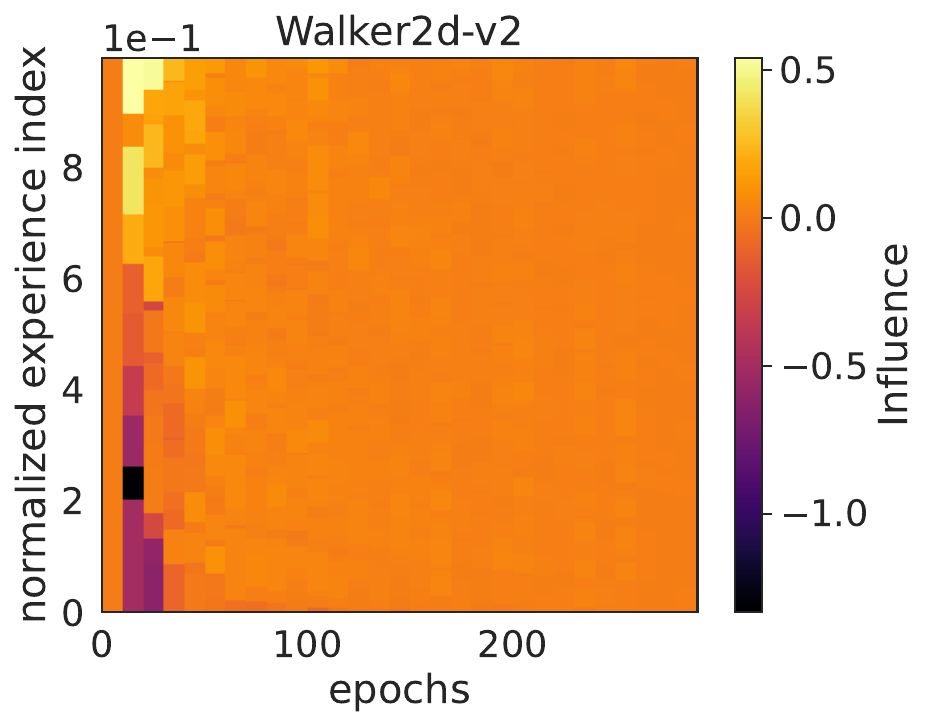}
\includegraphics[clip, width=0.245\hsize]{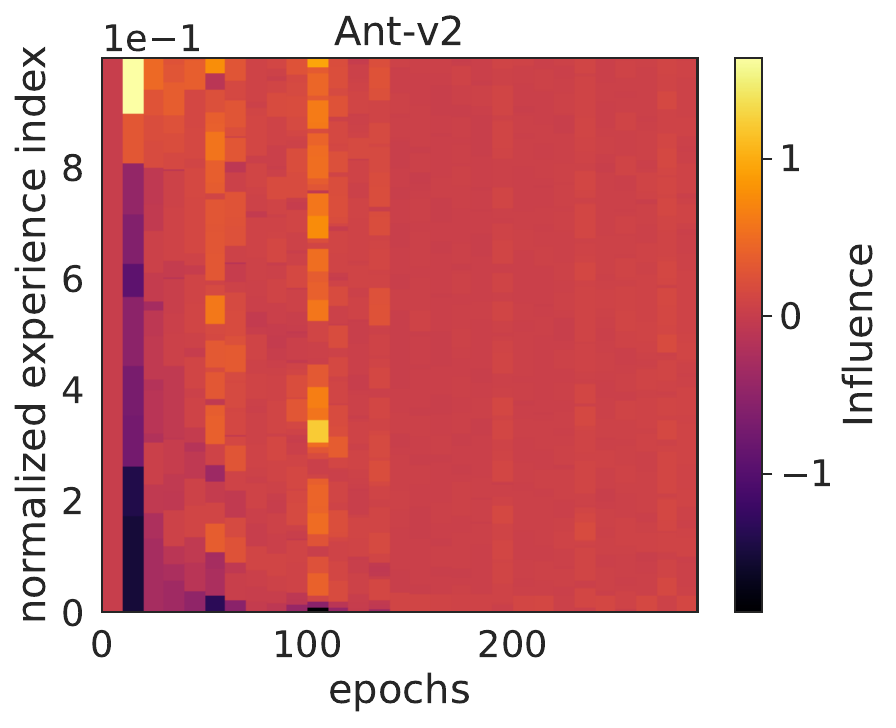}
\includegraphics[clip, width=0.245\hsize]{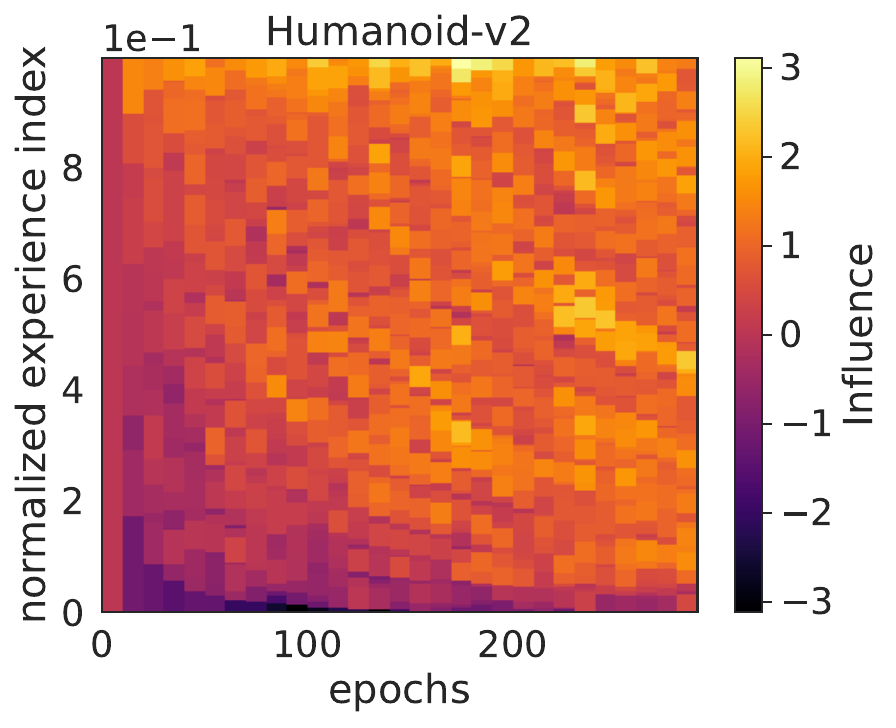}
\subcaption{Distribution of influence on Q-estimation bias (Eq.~\ref{eq:influence_bias}).}
\end{minipage}
%
\caption{
Distribution of the influence on return and on Q-estimation bias over all ten trials. 
The vertical axis represents the normalized experience index, ranging from 0.0 for the oldest experiences to 1.0 for the most recent experiences. 
The horizontal axis represents the number of epochs. 
The color bar represents the value of influence. 
}
\label{fig:distribution_of_bias_return}
\end{figure*}
\begin{figure*}[h!]
\begin{minipage}{1.0\hsize}
\centering
\includegraphics[clip, width=0.49\hsize]{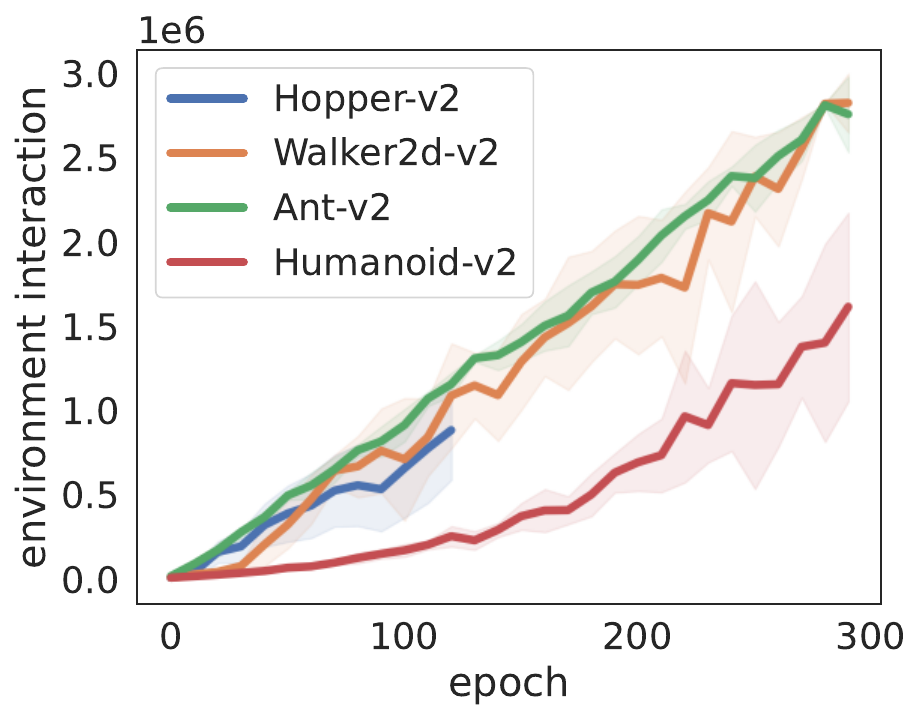}
\end{minipage}
\caption{
The number of environment interactions required for policy amendments in Section~\ref{sec:application}. 
Each line represents the mean of ten trials, and the shaded region represents one standard deviation around the mean. 
}
\label{fig:additional_environment_interaction_for_amendment}
\end{figure*}

\clearpage
\section{Analysis of the correlation between the influences of experiences}\label{app:correlation_of_influence}
In Sections~\ref{sec:experiments} and \ref{sec:application}, we estimated the influences of experiences on performance (e.g., return or Q-estimation bias). 
In Appendix~\ref{sec:analyzing_overlap_in_mask}, we discussed how the dropout rate of mask elements relates to the overlap between masks. 
The overlap between masks may affect the extent to which the influence of each experience can be isolated. 
In this section, we investigate the overlap among the estimated influences of experiences by analyzing two aspects: 
(i) the pairwise correlation of experience influences within each performance metric, and 
(ii) how the mask dropout rate affects this correlation\footnote{Note that we analyze correlations within, rather than across, performance metrics.}. 

We calculate the correlation between the experience influences for each performance metric used in Sections~\ref{sec:experiments} and \ref{sec:application}. 
In these sections, we estimated the influences of experiences on policy evaluation ($L_{\text{pe}, i}$), policy improvement ($L_{\text{pi}, i}$), return ($L_{\text{ret}}$), and Q-estimation bias ($L_{\text{bias}}$). 
We treat the influences of experiences on each metric at each epoch as a vector of random variables, where each element represents the influence of a single experience. 
We calculate the Pearson correlation between these elements. 
The influence values observed in the ten learning trials are used as samples. 
In the following discussion, we focus on the average value of the correlations between the pairs of vector elements. 

\textbf{(i) The correlation between the influences of experiences.} 
The correlation between the influences of experiences is shown in Figure~\ref{fig:correlation_in_environments}. 
The figure shows that the correlation tends to approach zero as the number of epochs increases. 
For return and bias, the correlation converges to zero early in the learning process, regardless of the environments. 
For policy evaluation and improvement, the degree of correlation convergence varies significantly across environments. 
\begin{figure*}[h!]
\begin{minipage}{1.0\hsize}
\includegraphics[clip, width=0.245\hsize]{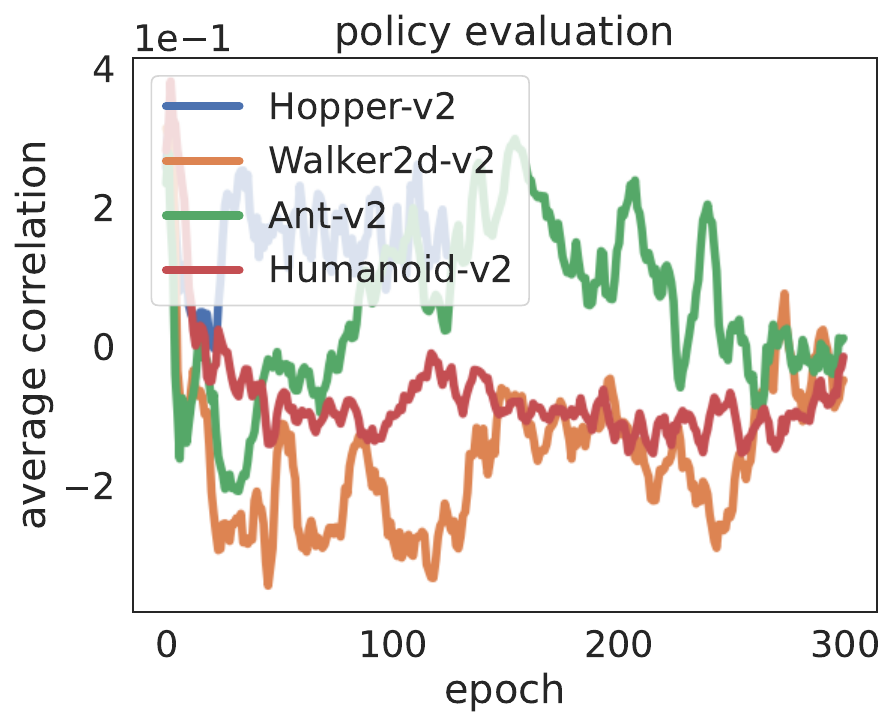}
\includegraphics[clip, width=0.245\hsize]{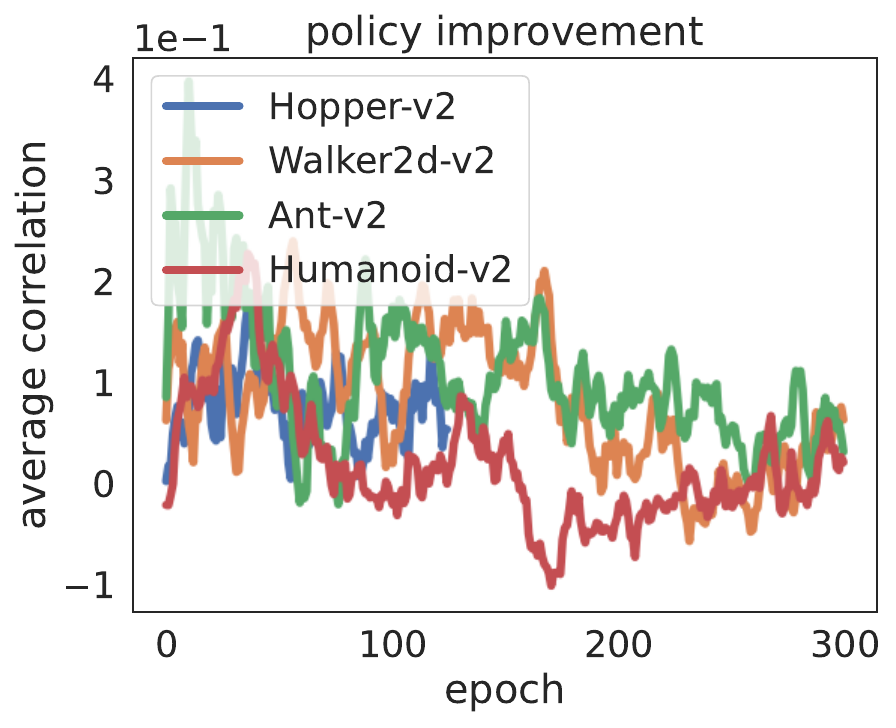}
\includegraphics[clip, width=0.245\hsize]{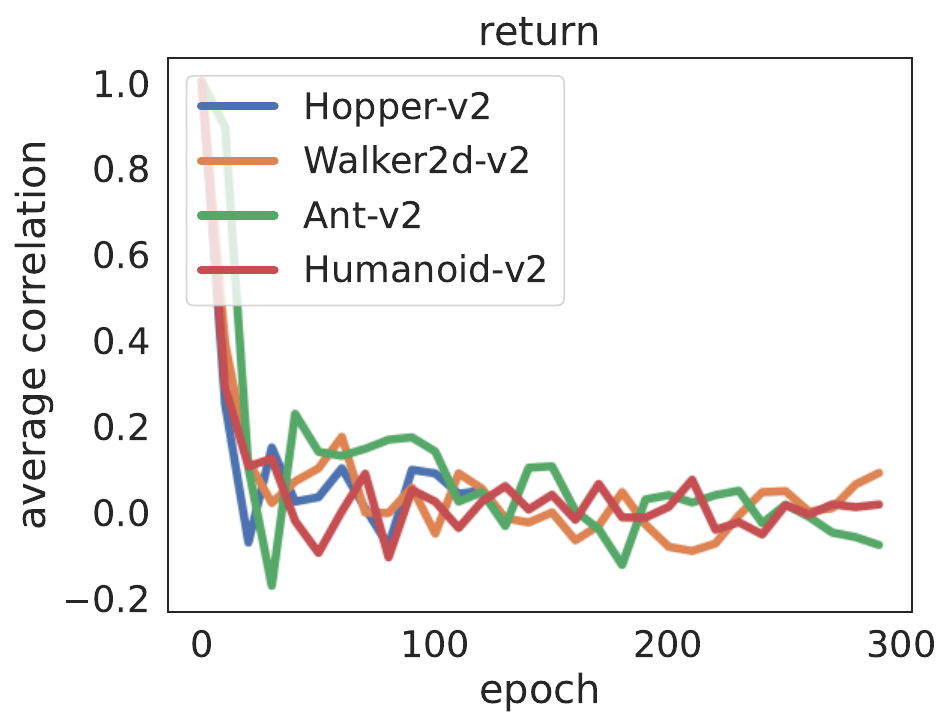}
\includegraphics[clip, width=0.245\hsize]{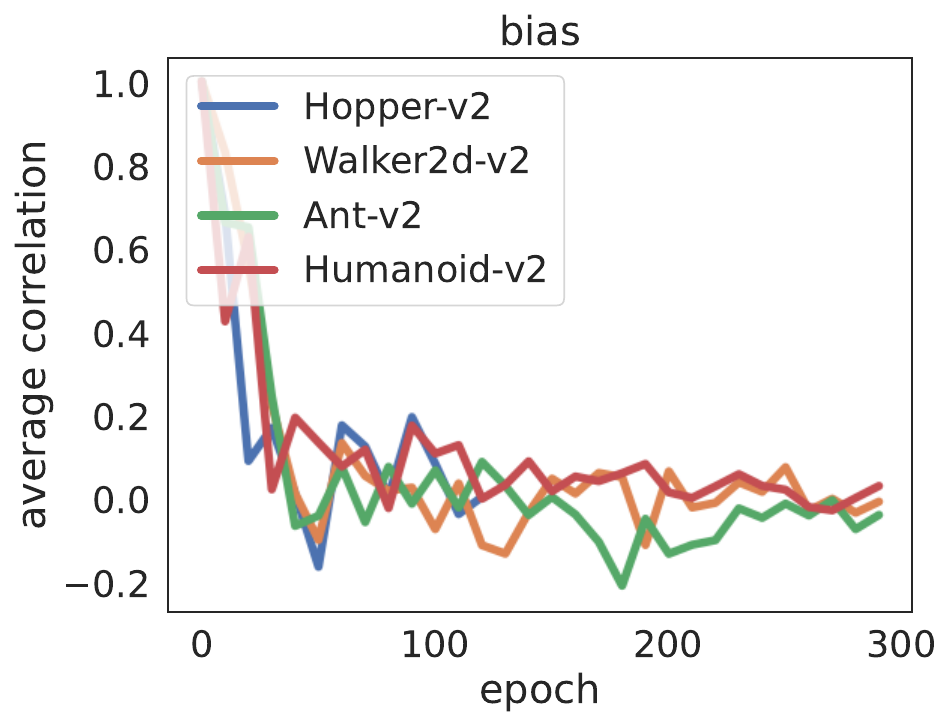}
\end{minipage}
\caption{
Correlation between the influences of experiences on policy evaluation ($L_{\text{pe}, i}$), policy improvement ($L_{\text{pi}, i}$), return ($L_{\text{ret}}$), and Q-estimation bias ($L_{\text{bias}}$) for each epoch in each environment. 
The vertical axis represents the average correlation of experience influences, ranging from -1.0 to 1.0. 
The horizontal axis represents the number of epochs. 
}
\label{fig:correlation_in_environments}
\end{figure*}

\textbf{(ii) The relationship between the correlation and the dropout rate.} 
We evaluate how varying the dropout rate of masks affects the correlation between experience influences. 
Specifically, we evaluated the correlations using PIToD with four different dropout rates:\\
\textbf{DR0.5:} PIToD with a dropout rate of 0.5, which is the setting used in the main experiments of this paper.\\ 
\textbf{DR0.25:} PIToD with a dropout rate of 0.25.\\ 
\textbf{DR0.1:} PIToD with a dropout rate of 0.1.\\ 
\textbf{DR0.05:} PIToD with a dropout rate of 0.05.\\ 
%
The correlations for these cases in the Hopper environment are shown in Figure~\ref{fig:correlation_in_method}. 
The results imply that the impact of the dropout rate on the correlation depends significantly on the specific performance metric. 
For instance, we do not observe a significant impact of the dropout rate in policy evaluation or policy improvement. 
In contrast, for return, we observe that the correlation increases as the dropout rate decreases. 
\begin{figure*}[h!]
\begin{minipage}{1.0\hsize}
\includegraphics[clip, width=0.245\hsize]{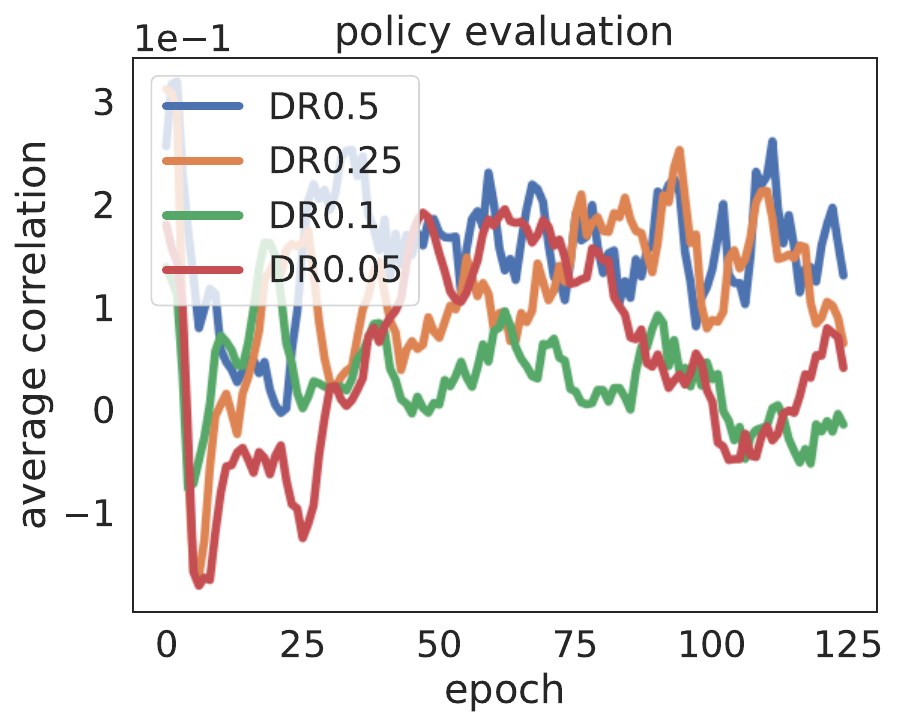}
\includegraphics[clip, width=0.245\hsize]{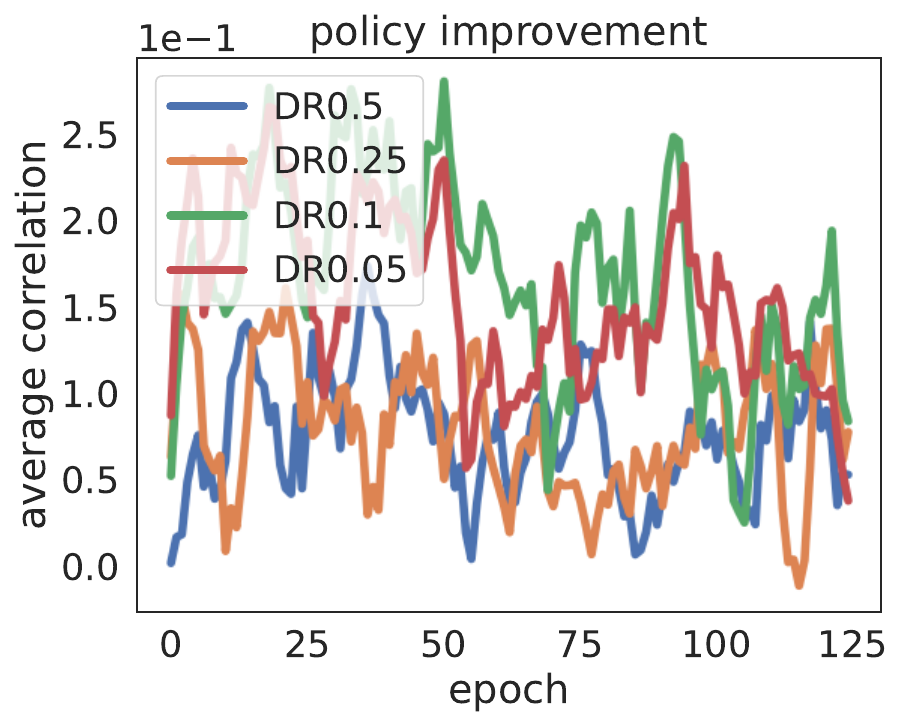}
\includegraphics[clip, width=0.245\hsize]{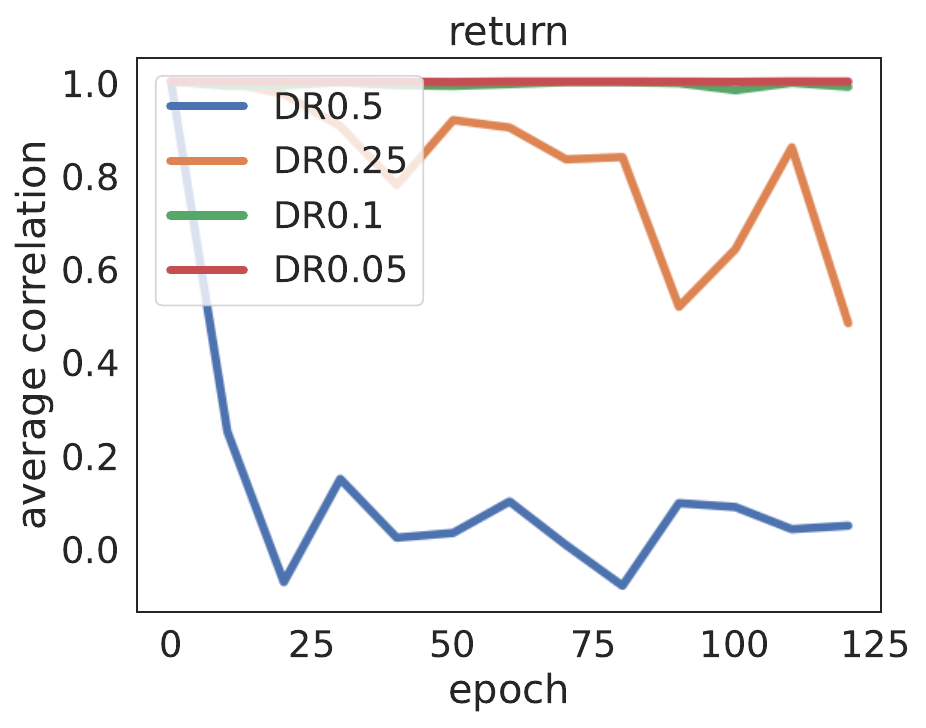}
\includegraphics[clip, width=0.245\hsize]{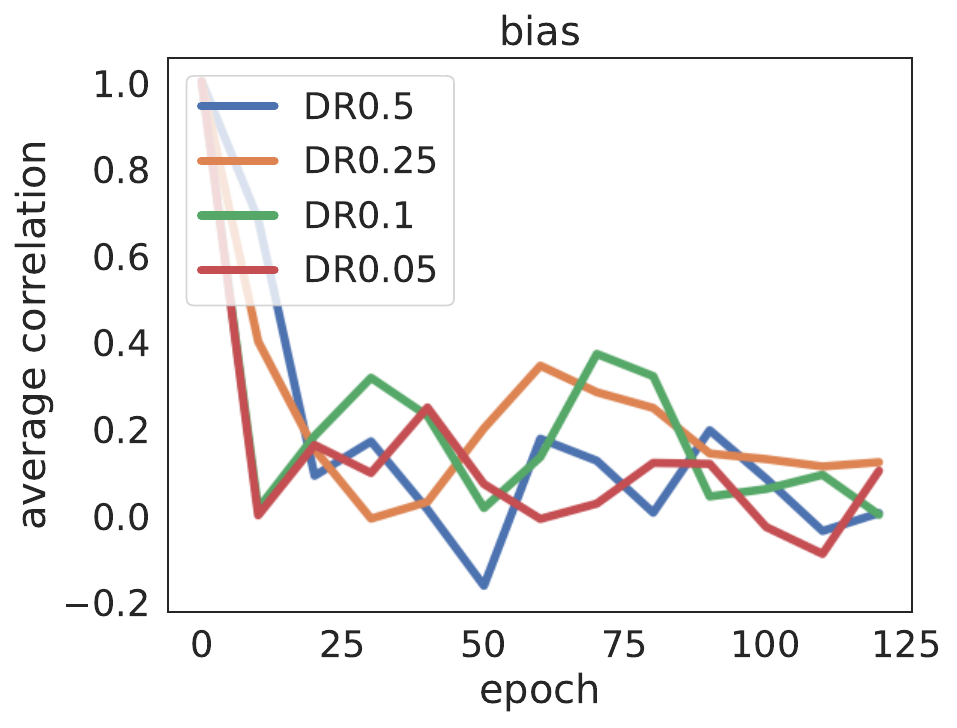}
\end{minipage}
\caption{
Correlation between the influences of experiences at each epoch in the Hopper environment. 
The vertical axis represents the average correlation of experience influences. 
The horizontal axis represents the number of learning epochs. 
Each label in the legend corresponds to a dropout rate for masks. For example, ``DR0.5'' means a dropout rate of $0.5$ (half of the elements in each mask are set to zero), and ``DR0.1'' means a dropout rate of $0.1$ ($10\%$ of the elements in each mask are set to zero). 
}
\label{fig:correlation_in_method}
\end{figure*}

\clearpage
\section{Application of LOO: amending policies and Q-functions by deleting negatively influential experiences}\label{sec:loo_prelim}
In Section~\ref{sec:application}, we amended the agent’s policies and Q-functions using PIToD. 
In this section, we amend policies and Q-functions using LOO introduced in Section~\ref{sec:problem_of_loo}.

To complete the amendment using LOO within a practical time frame, we propose a simplified implementation of LOO. 
This implementation has the following characteristics:\\
1. As in the practical implementation of PIToD (Section~\ref{sec:practical_implementation}), we do not estimate the influence of each individual experience. Instead, we group 5000 experiences together and estimate the influence of each group.\\
2. The number of policy iteration steps during retraining is the same for all groups. 
We varied the number of steps from 5000 to 75000 in a pilot run and selected 75000 because it yielded the best overall performance. 

To keep runtime reasonable, we applied the amendments once at epoch 50. 
We ran ten trials and collected the results in the Hopper environment. 

Figure~\ref{fig:cleansing_results_policy_loo} shows the results of the amendments. 
The amendments using LOO achieve significantly higher returns than the NoAmendment baseline (i.e., no amendments are applied). 
The return with the LOO amendments is comparable to the return with PIToD amendments.
\begin{figure*}[h!]
\begin{minipage}{1.0\hsize}
\centering
\includegraphics[clip, width=0.49\hsize]{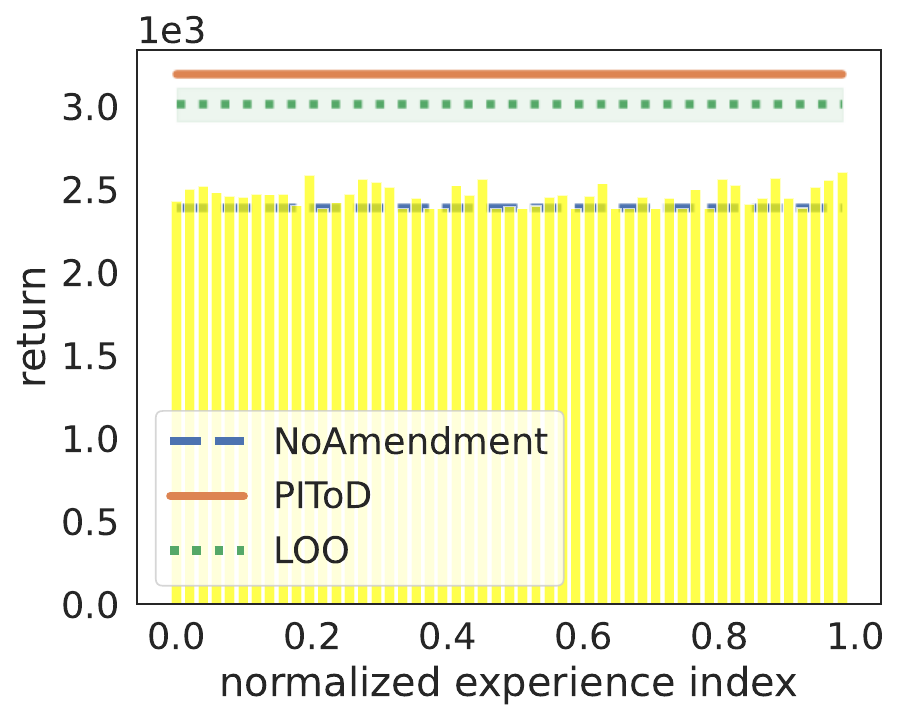}
\end{minipage}
\caption{
Results of policy amendments applied at epoch 50 in the Hopper environment. 
The vertical axis shows the empirical return, and the horizontal axis shows the normalized experience index.
The dashed line (NoAmendment) represents the return before any amendments. 
The solid line (PIToD) represents the return after applying the PIToD amendments,
and the dotted line (LOO) represents the return after applying the LOO amendments. 
Each yellow bar represents the return when LOO deletes the experience group corresponding to that normalized experience index. 
Results are averaged over the ten trials. The shaded region around the dashed line denotes one standard deviation.
}
\label{fig:cleansing_results_policy_loo}
\end{figure*}

\clearpage
\section{Amending policies and Q-functions in DM control environments with adversarial experiences}\label{app:adversarial_dmc}
In Section~\ref{sec:application}, we applied PIToD to amend policies and Q-functions in the MuJoCo~\citep{todorov2012mujoco} environments. 

In this section, we apply PIToD to amend policies and Q-functions in DM control~\citep{tunyasuvunakool2020} environments with adversarial experiences. 
We focus on the DM control environments: finger-turn\_hard, hopper-stand, hopper-hop, fish-swim, cheetah-run, quadruped-run, humanoid-run, and humanoid-stand. 
In these environments, we introduce adversarial experiences. 
An adversarial experience contains an adversarial reward $r'$, which is a reversed and magnified version of the original reward $r$: $r' = -100 \cdot r$. 
These adversarial experiences are designed to (i) prevent the agent from maximizing the original reward and (ii) have greater influence than other non-adversarial experiences stored in the replay buffer~\footnote{Learning with these adversarial experiences can be considered learning under a data-poisoning attack~\citep{gong2024baffle,cui2024badrl}.}. 
At 150 epochs (i.e., in the middle of training), the RL agent encounters 5000 adversarial experiences. 
In these environments, we amend policies and Q-functions as in Section~\ref{sec:application}. 

The results of the policy and Q-function amendments (Figures~\ref{fig:cleansing_results_policy_dmc} and \ref{fig:cleansing_results_qfunction_dmc}) show that performance is improved by the amendments. 
The policy amendment results (Figure~\ref{fig:cleansing_results_policy_dmc}) show that returns are improved, particularly in fish-swim. 
Additionally, the Q-function amendment results (Figure~\ref{fig:cleansing_results_qfunction_dmc}) show that the Q-estimation bias is significantly reduced in finger-turn\_hard, hopper-stand, hopper-hop, fish-swim, cheetah-run, and quadruped-run. 
\begin{figure*}[h!]
\begin{minipage}{1.0\hsize}
\includegraphics[clip, width=0.49\hsize]{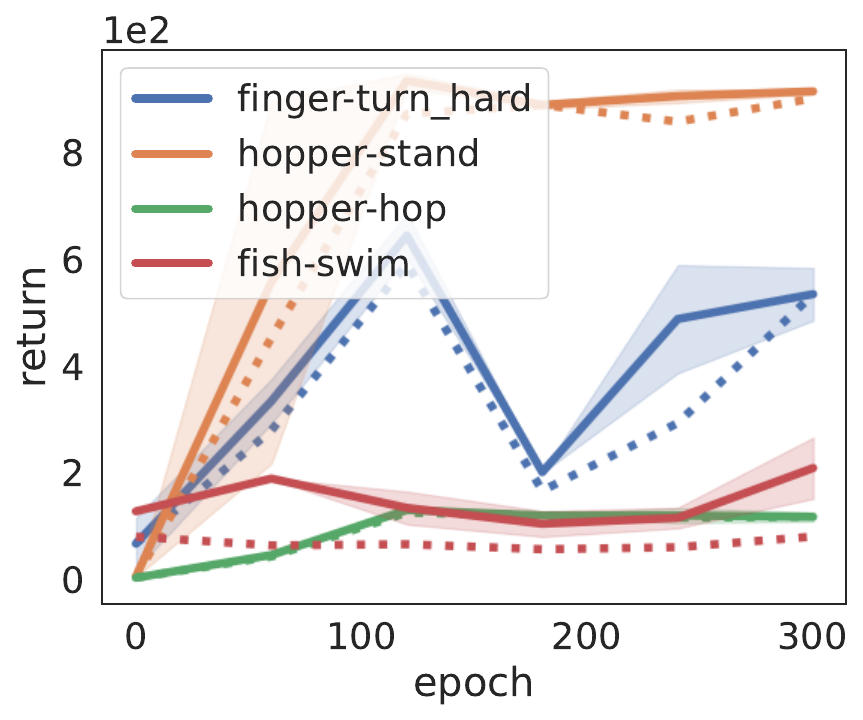}
\includegraphics[clip, width=0.49\hsize]{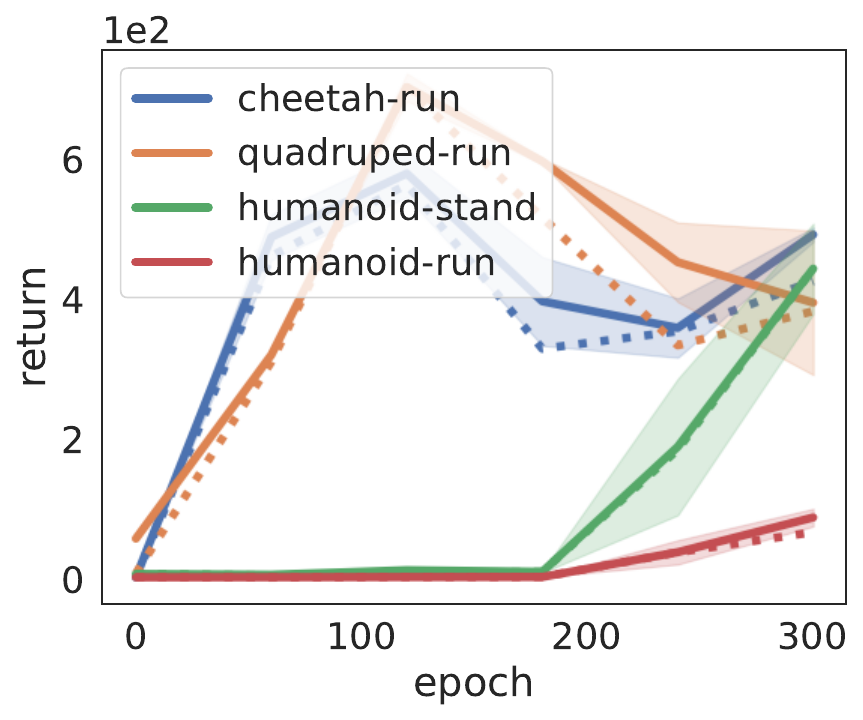}
\end{minipage}
\caption{
Results of policy amendments in DM control environments with adversarial experiences. 
The solid lines represent the post-amendment return for the policy (i.e., $L_{\text{ret}}(\pi_{\theta, \mathbf{w}_*})$). 
The dashed lines show the pre-amendment return (i.e., $L_{\text{ret}}(\pi_{\theta})$). 
These results are averaged over ten trials. 
The shaded regions around the solid lines represent one standard deviation around the mean. 
}
\label{fig:cleansing_results_policy_dmc}
\end{figure*}
\begin{figure*}[h!]
\begin{minipage}{1.0\hsize}
\includegraphics[clip, width=0.49\hsize]{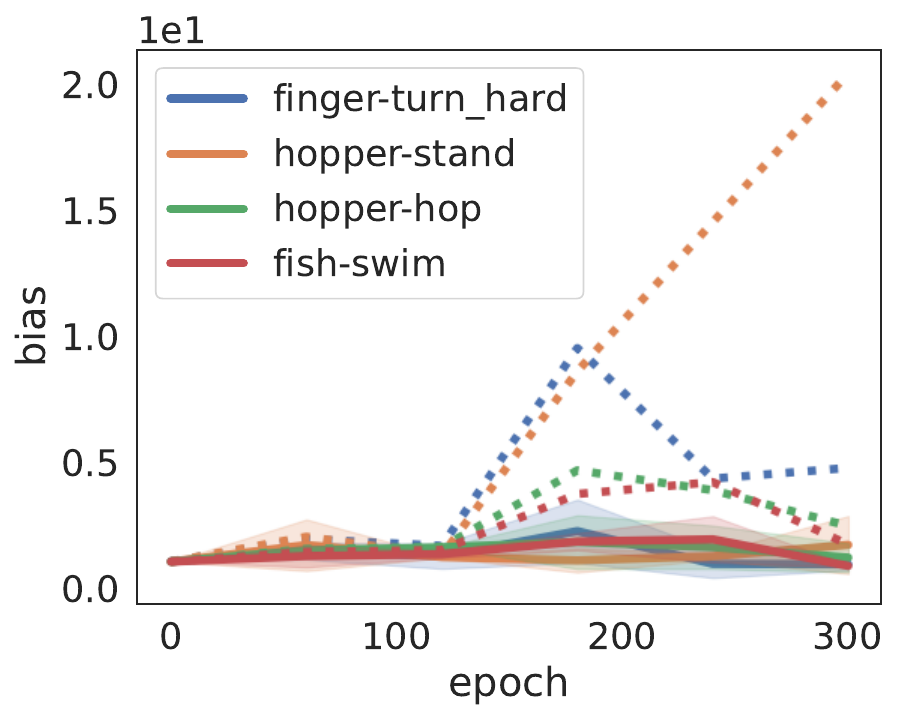}
\includegraphics[clip, width=0.49\hsize]{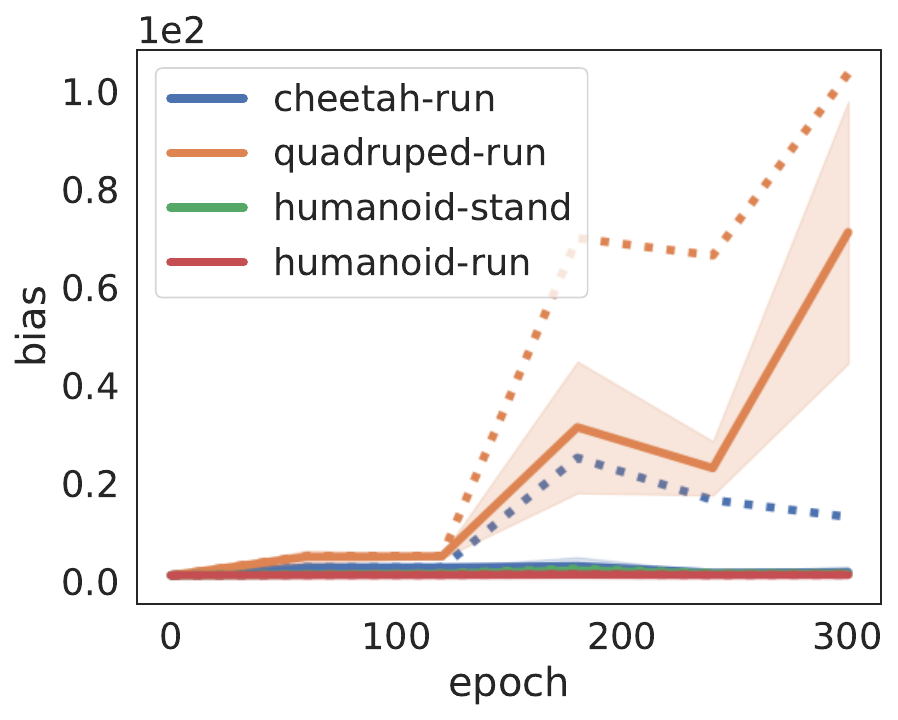}
\end{minipage}
\caption{
Results of Q-function amendments in DM control environments with adversarial experiences. 
The solid lines represent the post-amendment bias for the Q-function (i.e., $L_{\text{bias}}(Q_{\phi, \mathbf{w}_*})$). 
The dashed lines show the pre-amendment bias (i.e., $L_{\text{bias}}(Q_{\phi})$). 
These results are averaged over ten trials. 
The shaded regions around the solid lines represent one standard deviation around the mean. 
}
\label{fig:cleansing_results_qfunction_dmc}
\end{figure*}

\textbf{Can PIToD identify adversarial experiences?} 
PIToD identifies adversarial experiences as (i) strongly influential experiences for policy evaluation and (ii) positively influential experiences for Q-estimation bias. 
\textbf{Policy evaluation:} 
Figure~\ref{fig:distribution_of_self_influences_pe_dmc} shows the distribution of influences on policy evaluation. 
We observe that adversarial experiences have a strong influence (highlighted in lighter colors), except in humanoid-run. 
\textbf{Q-estimation bias:} 
Figure~\ref{fig:distribution_of_bias_dmc} shows the distribution of influences on Q-estimation bias. 
Interestingly, we observe that adversarial experiences have a strong positive influence (highlighted in lighter colors). 
Namely, these adversarial experiences contribute to reducing Q-estimation bias. 
However, after introducing adversarial experiences (i.e., after epoch 150), we also observe experiences with a negative influence. 
We hypothesize that adversarial experiences hinder the learning from other experiences. 
\begin{figure*}[h!]
\begin{minipage}{1.0\hsize}
\includegraphics[clip, width=0.245\hsize]{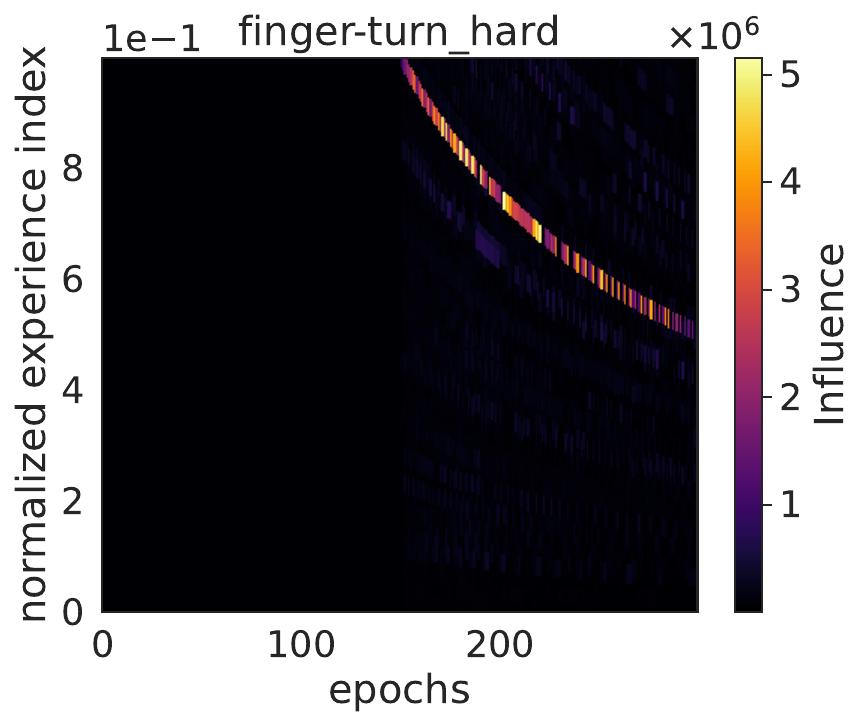}
\includegraphics[clip, width=0.245\hsize]{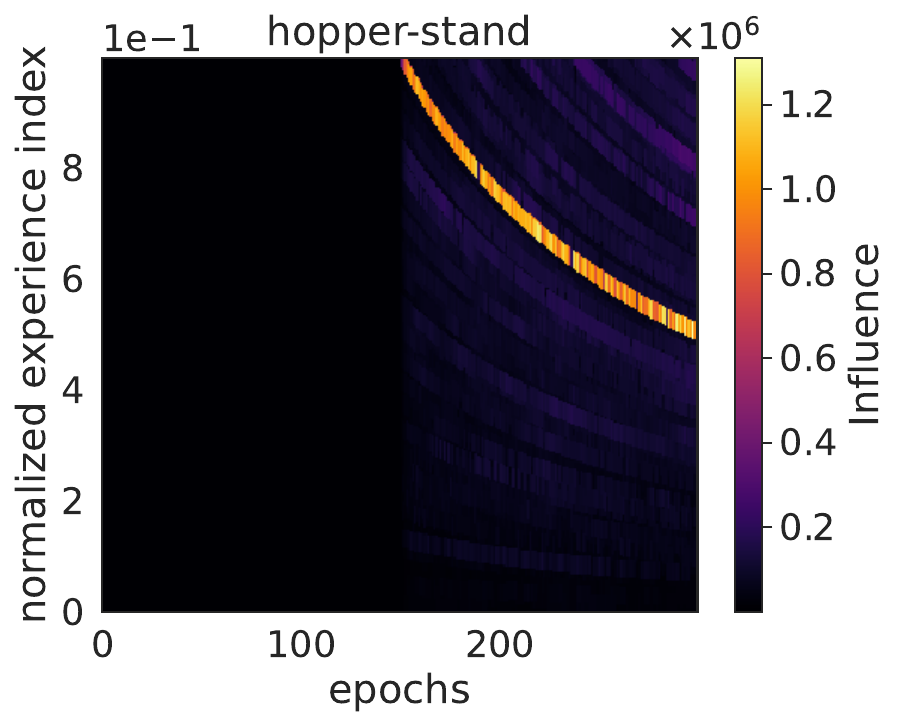}
\includegraphics[clip, width=0.245\hsize]{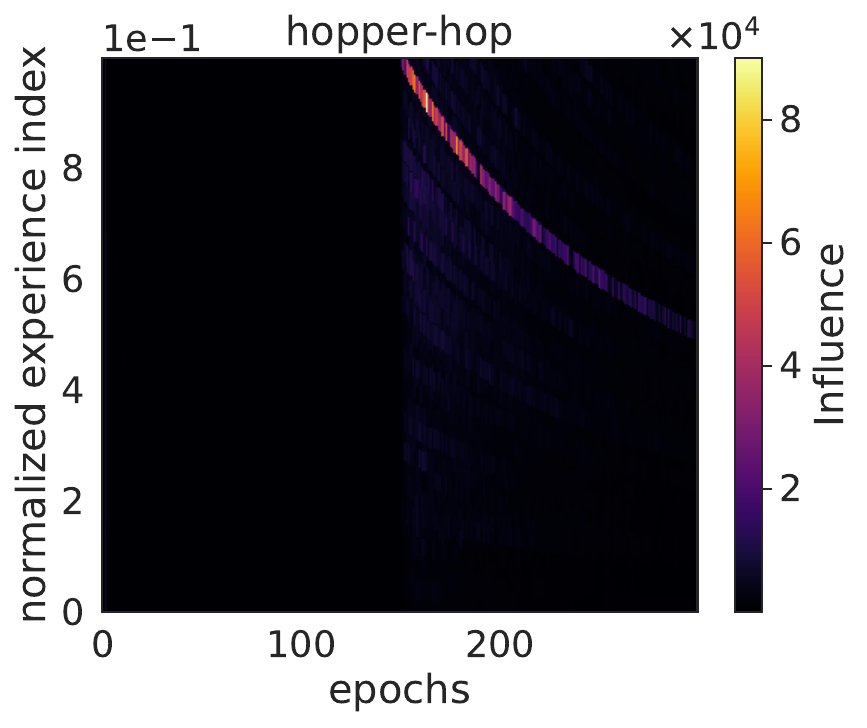}
\includegraphics[clip, width=0.245\hsize]{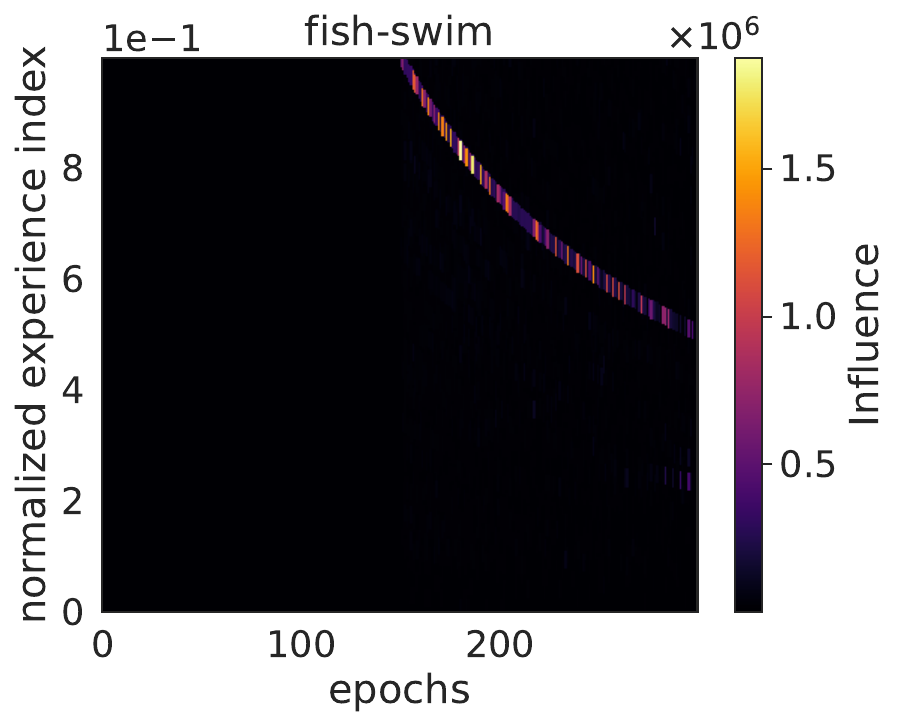}
\includegraphics[clip, width=0.245\hsize]{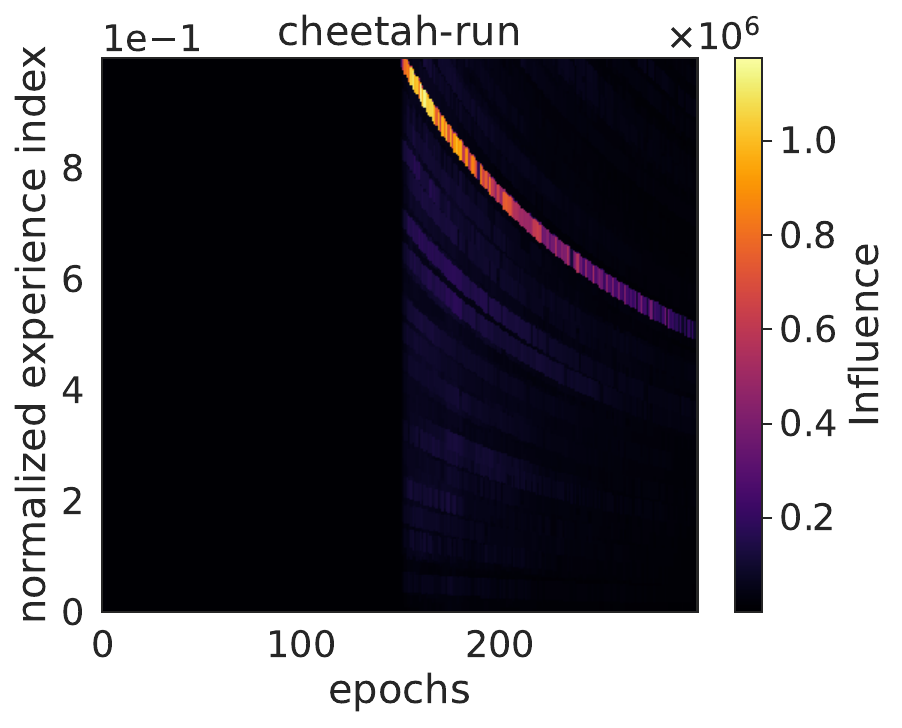}
\includegraphics[clip, width=0.245\hsize]{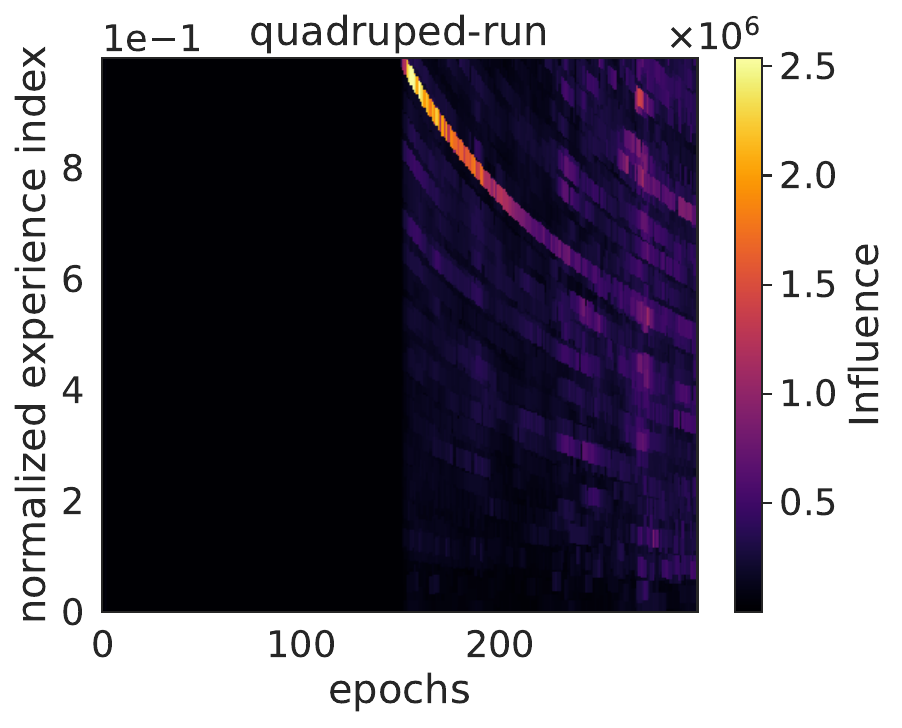}
\includegraphics[clip, width=0.245\hsize]{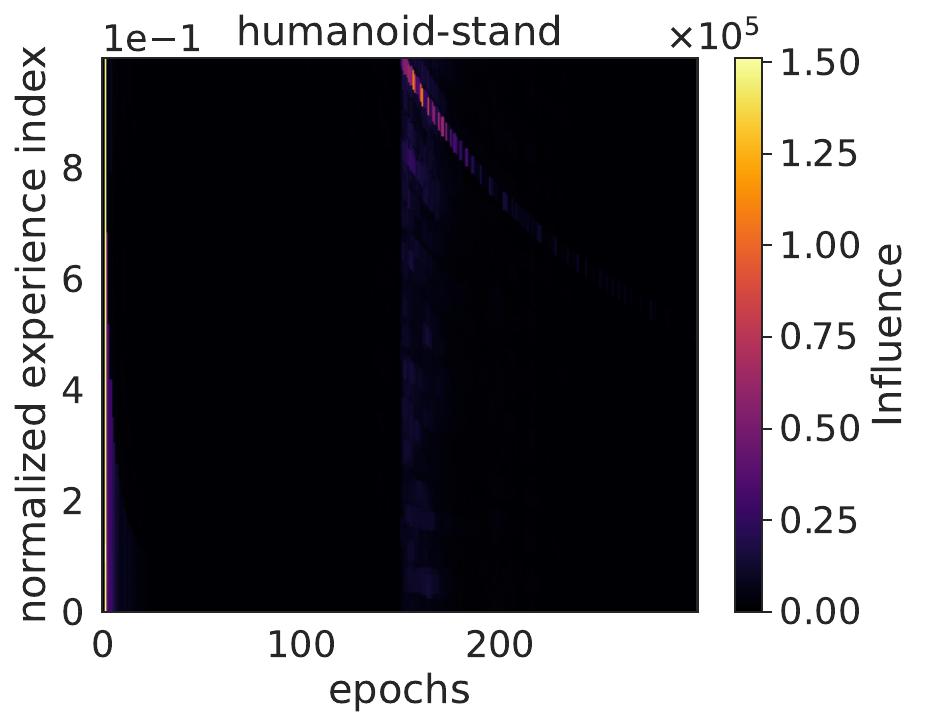}
\includegraphics[clip, width=0.245\hsize]{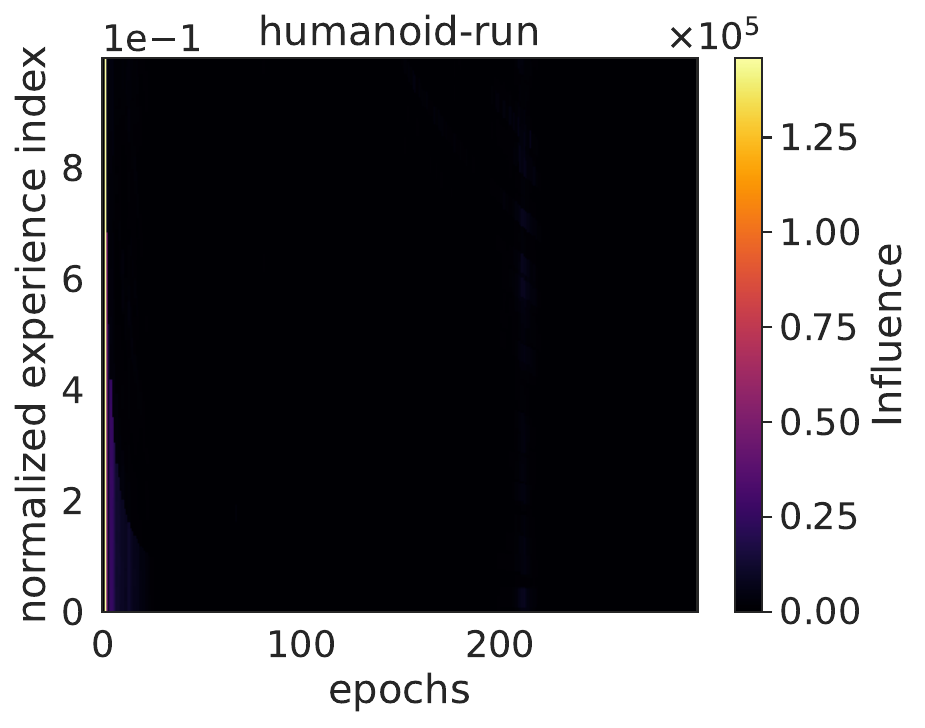}
\end{minipage}
\caption{
Distribution of experience influence on policy evaluation (Eq.~\ref{eq:self_infl_q_func}) in DM control environments with adversarial experiences.
}
\label{fig:distribution_of_self_influences_pe_dmc}
\end{figure*}
\begin{figure*}[h!]
\begin{minipage}{1.0\hsize}
\includegraphics[clip, width=0.245\hsize]{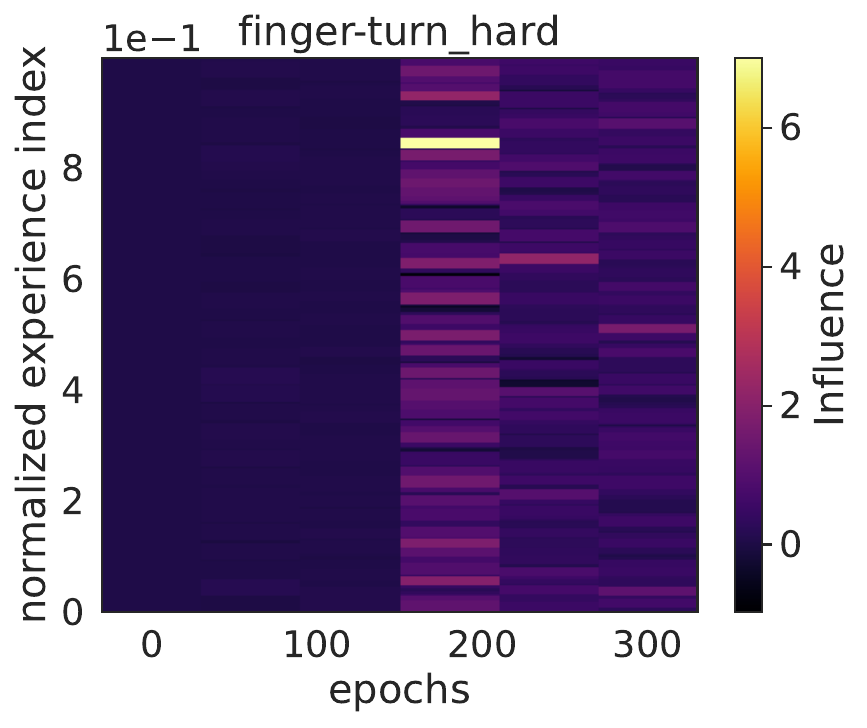}
\includegraphics[clip, width=0.245\hsize]{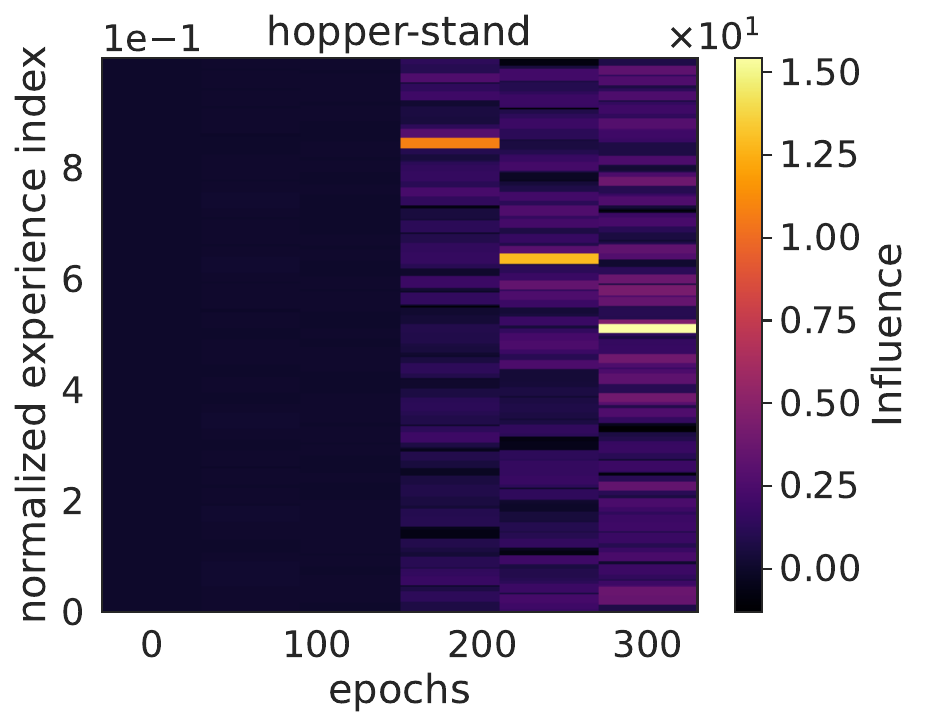}
\includegraphics[clip, width=0.245\hsize]{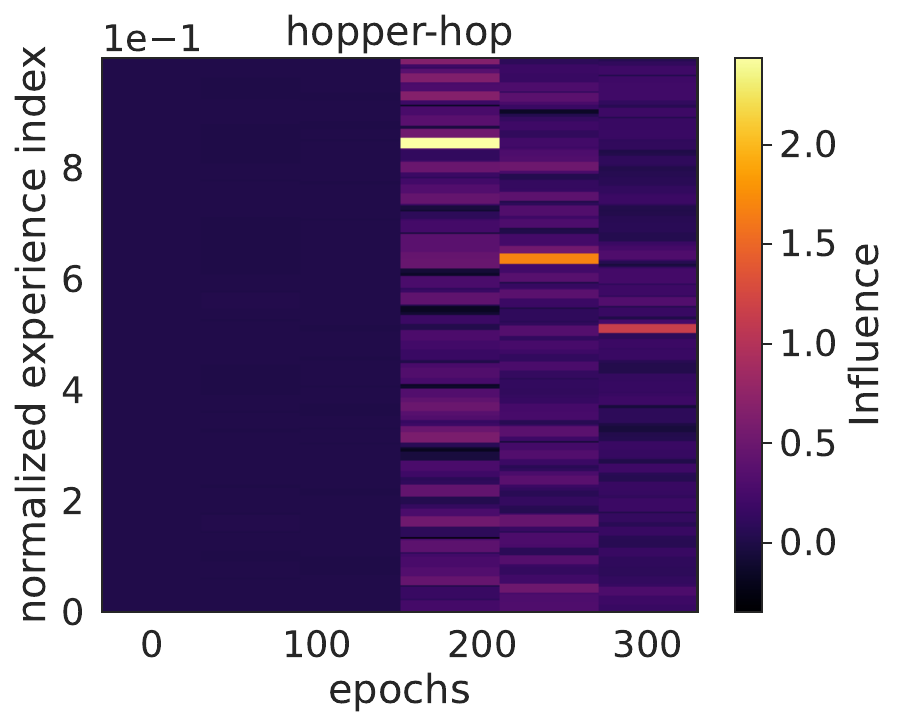}
\includegraphics[clip, width=0.245\hsize]{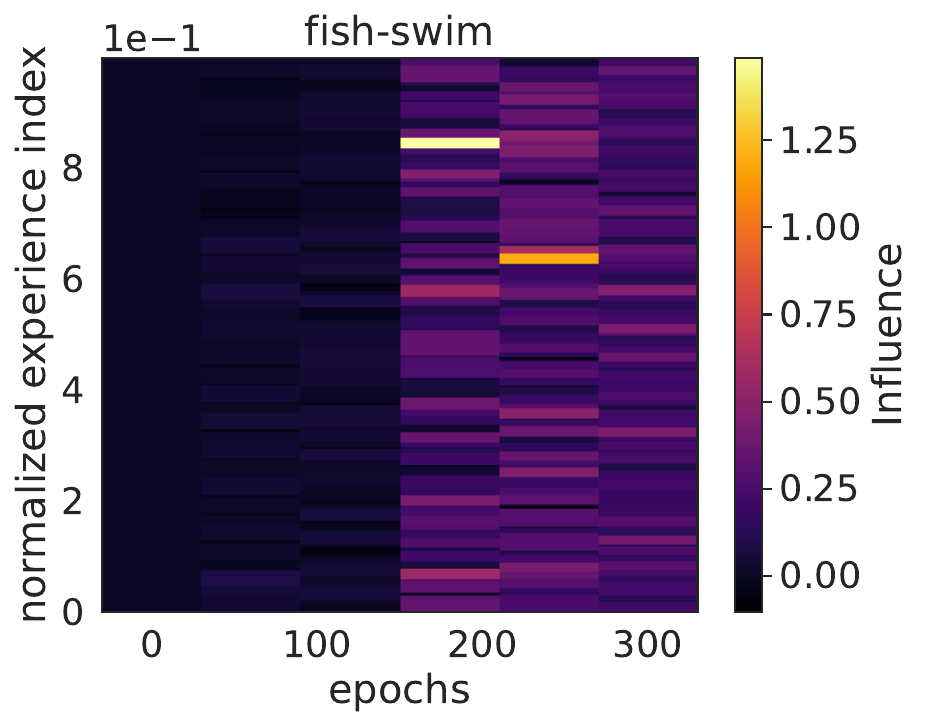}
\includegraphics[clip, width=0.245\hsize]{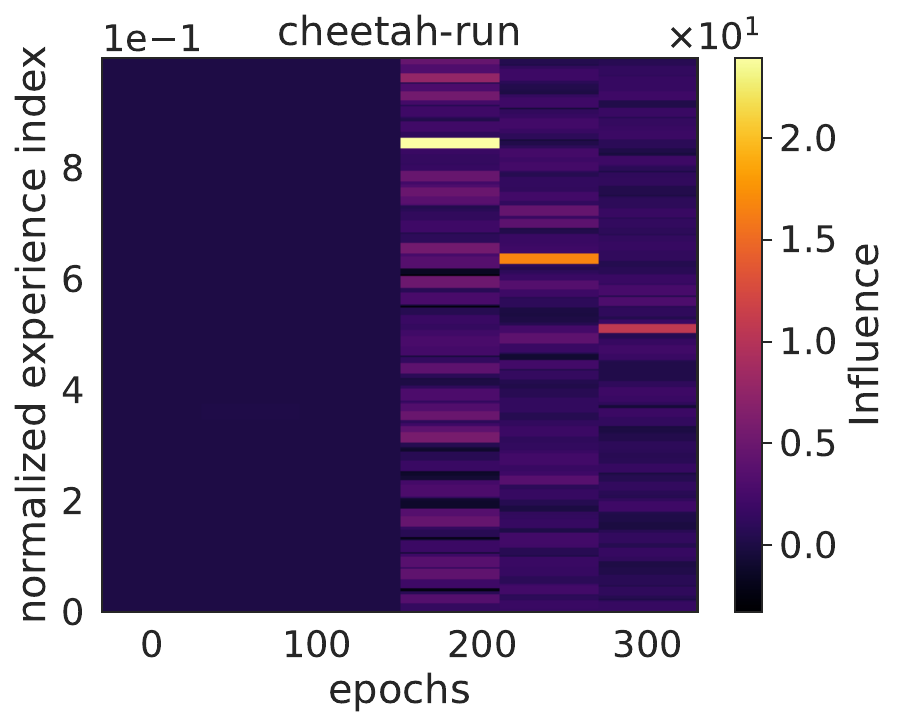}
\includegraphics[clip, width=0.245\hsize]{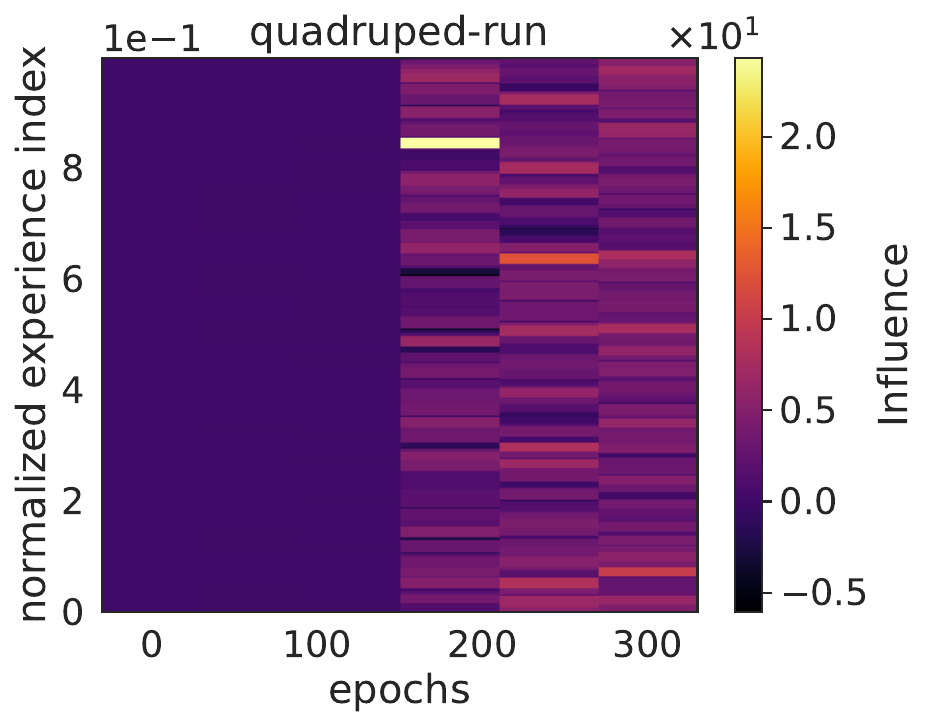}
\includegraphics[clip, width=0.245\hsize]{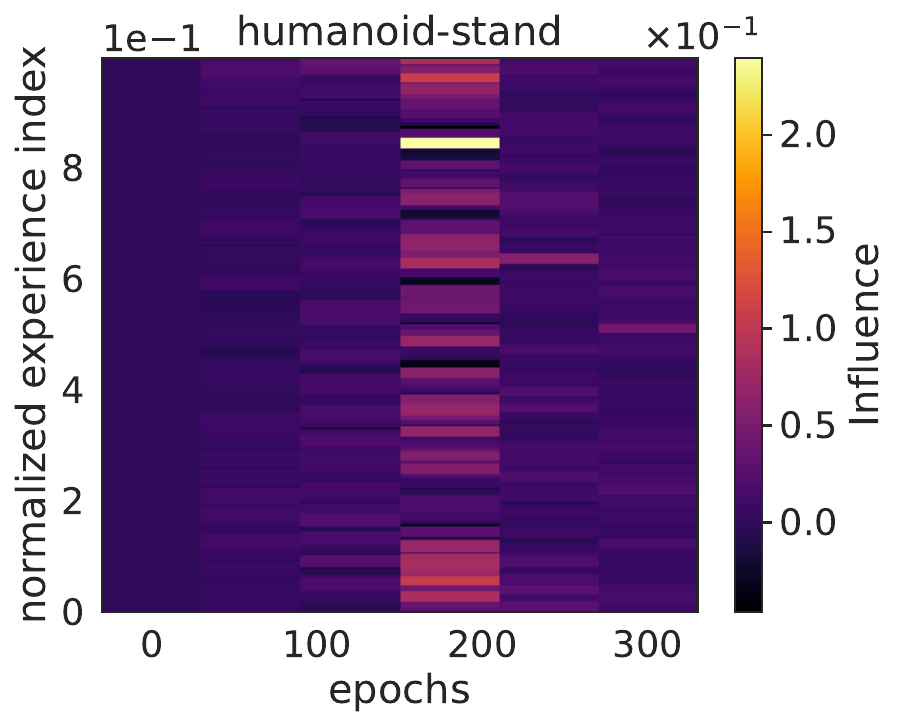}
\includegraphics[clip, width=0.245\hsize]{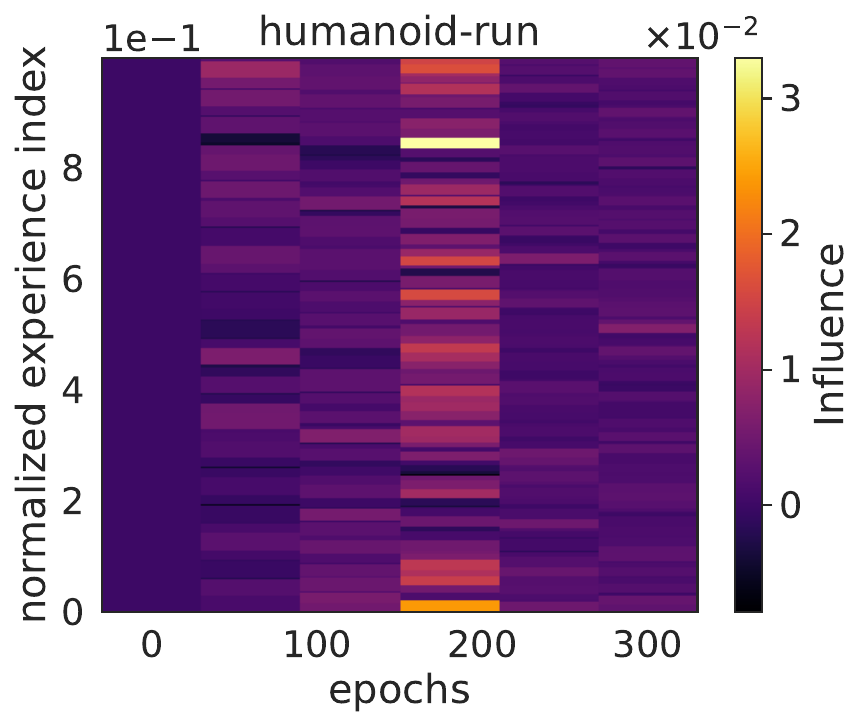}
\end{minipage}
%
\caption{
Distribution of experience influence on Q-estimation bias (Eq.~\ref{eq:influence_bias}) in DM control environments with adversarial experiences. 
}
\label{fig:distribution_of_bias_dmc}
\end{figure*}

\clearpage
\subsection{Additional experiments in DM control environments with adversarial experiences}\label{app:adversarial_dmc_additional}
\begin{figure*}[h!]
\begin{minipage}{1.0\hsize}
\includegraphics[clip, width=0.245\hsize]{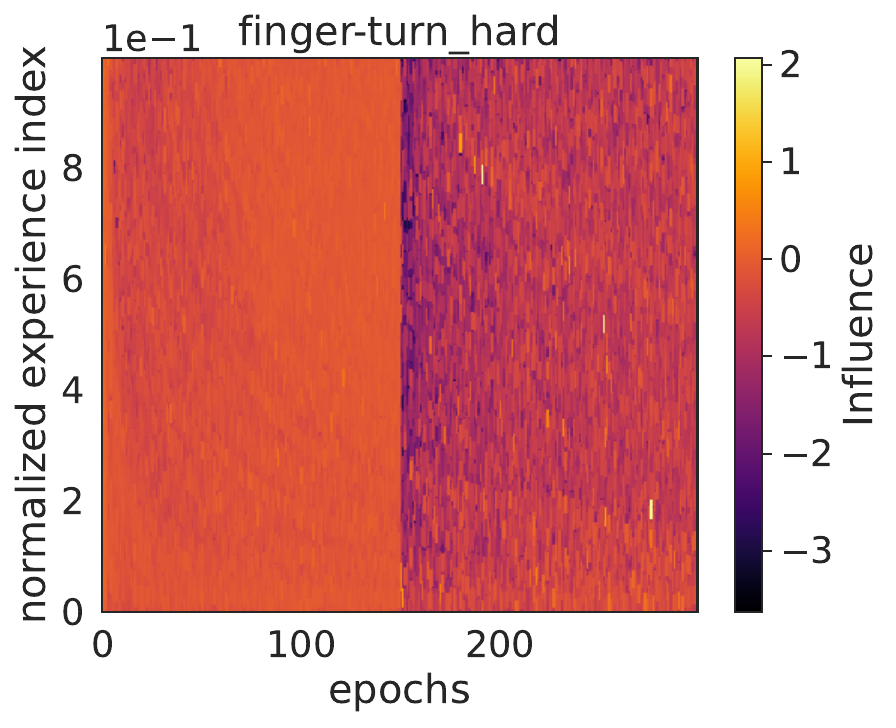}
\includegraphics[clip, width=0.245\hsize]{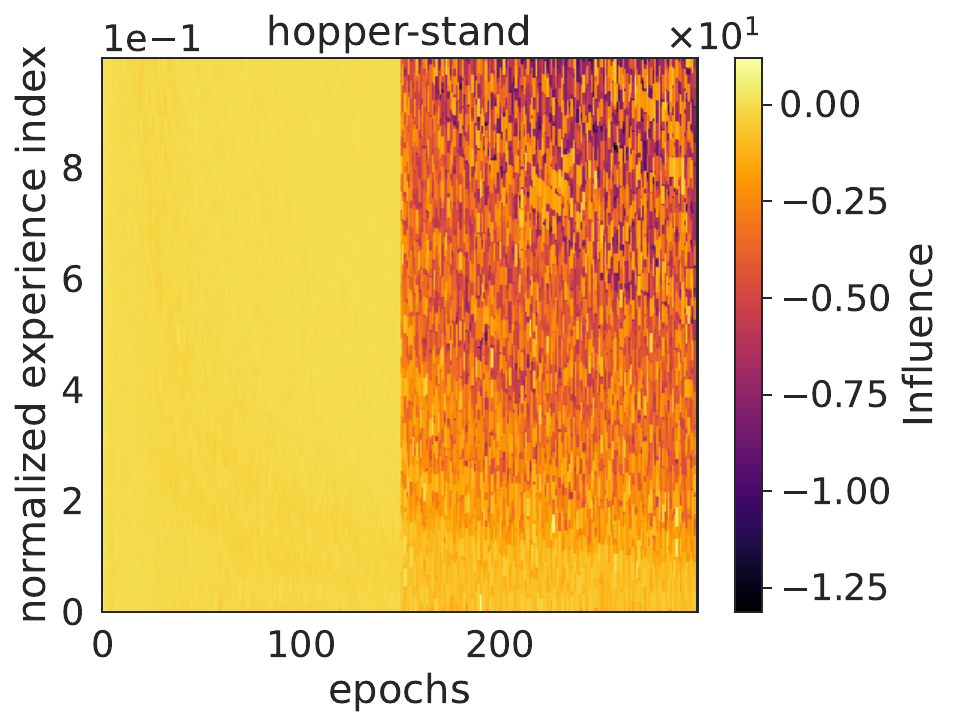}
\includegraphics[clip, width=0.245\hsize]{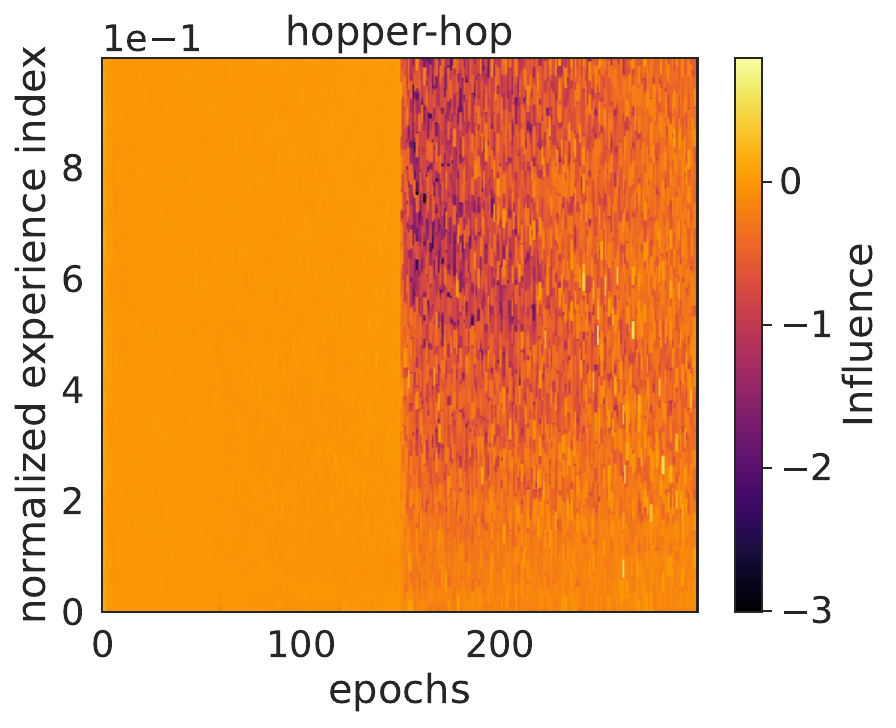}
\includegraphics[clip, width=0.245\hsize]{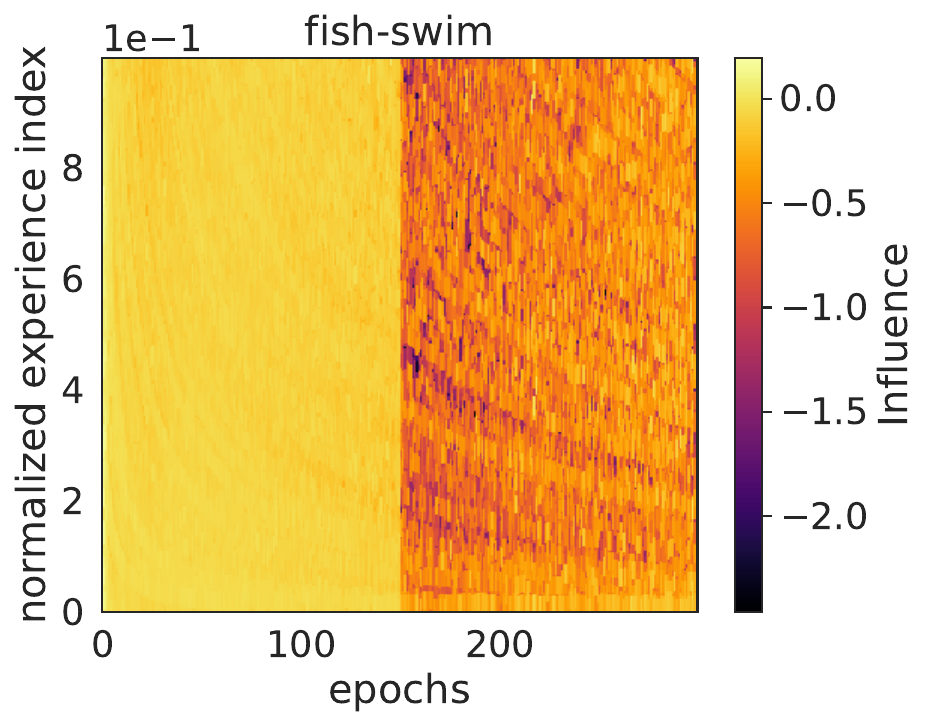}
\includegraphics[clip, width=0.245\hsize]{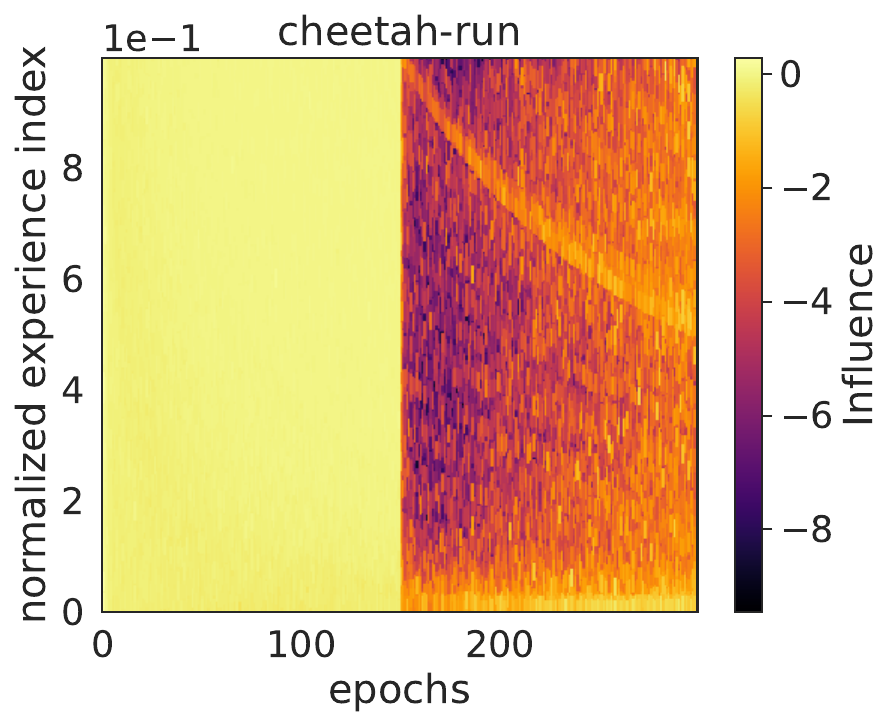}
\includegraphics[clip, width=0.245\hsize]{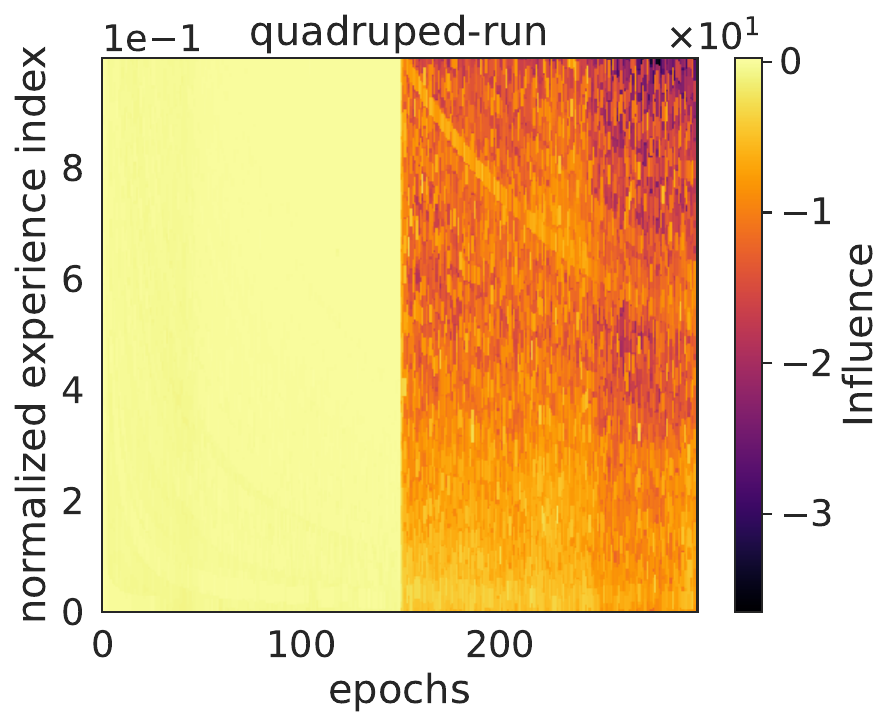}
\includegraphics[clip, width=0.245\hsize]{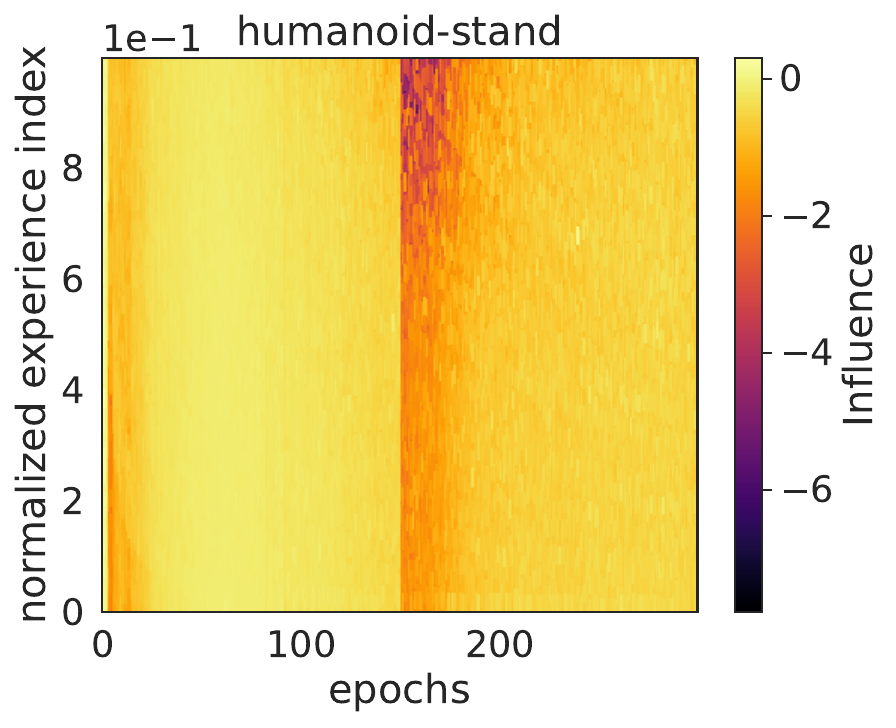}
\includegraphics[clip, width=0.245\hsize]{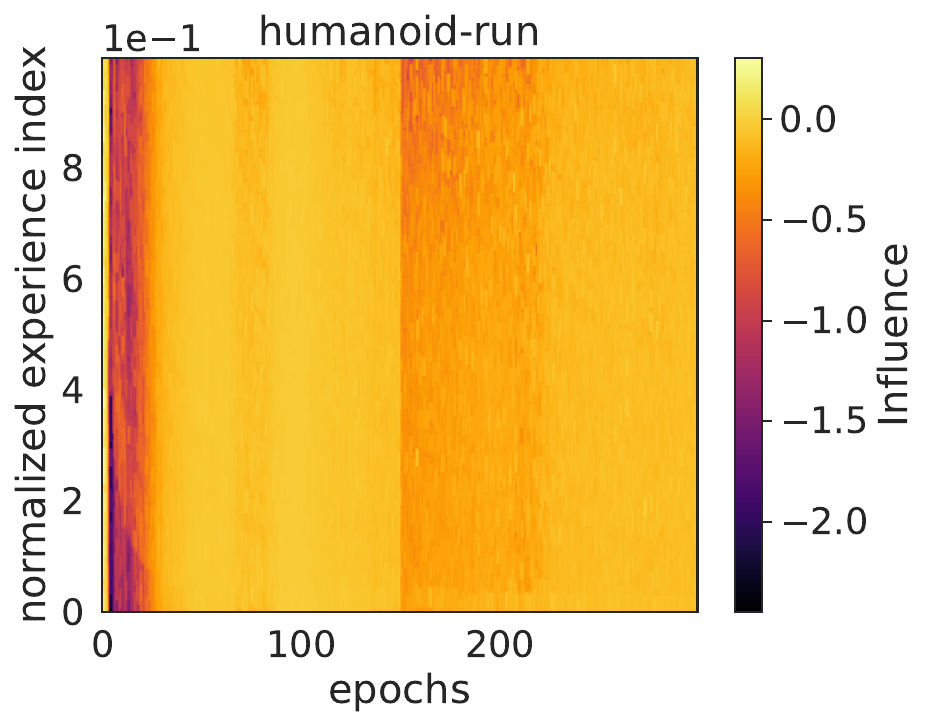}
\end{minipage}
%
\caption{
Distribution of experience influence on policy improvement (Eq.~\ref{eq:self_infl_policy_func}) in DM control environments with adversarial experiences.
}
\label{fig:distribution_of_self_influences_pi_dmc}
\end{figure*}
\begin{figure*}[h!]
\begin{minipage}{1.0\hsize}
\includegraphics[clip, width=0.245\hsize]{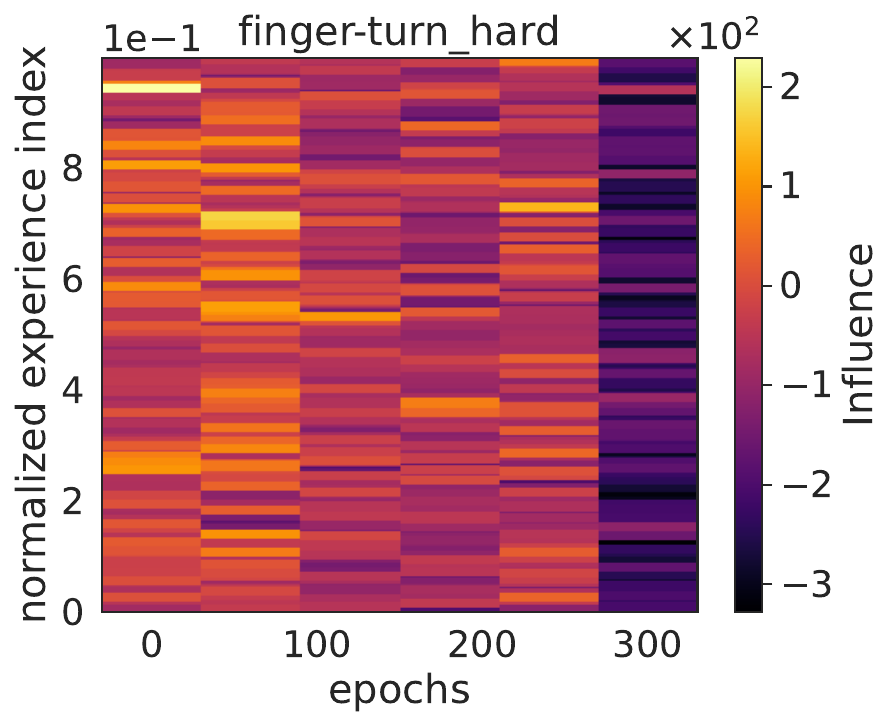}
\includegraphics[clip, width=0.245\hsize]{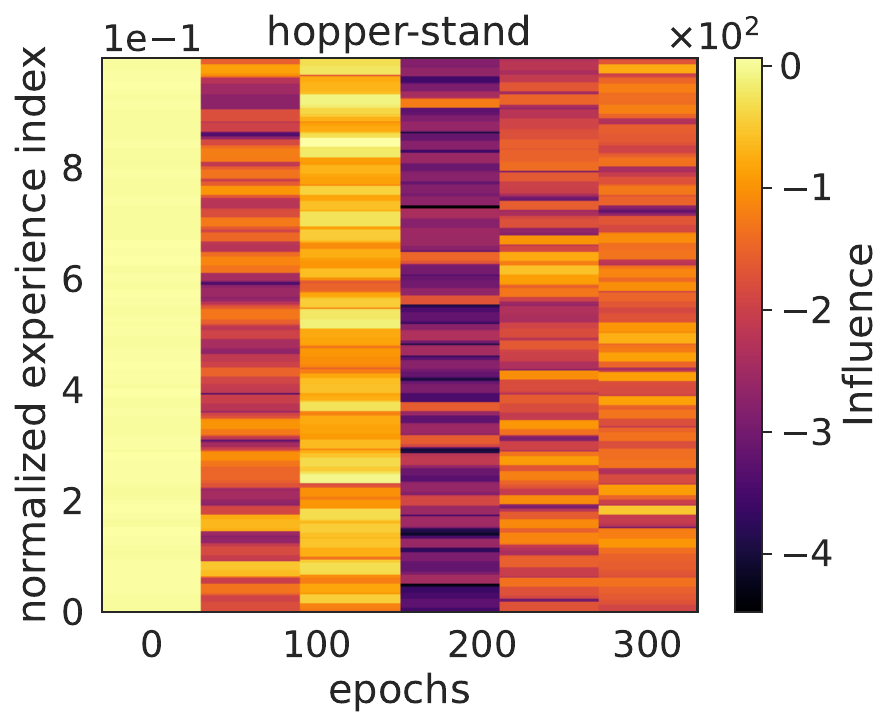}
\includegraphics[clip, width=0.245\hsize]{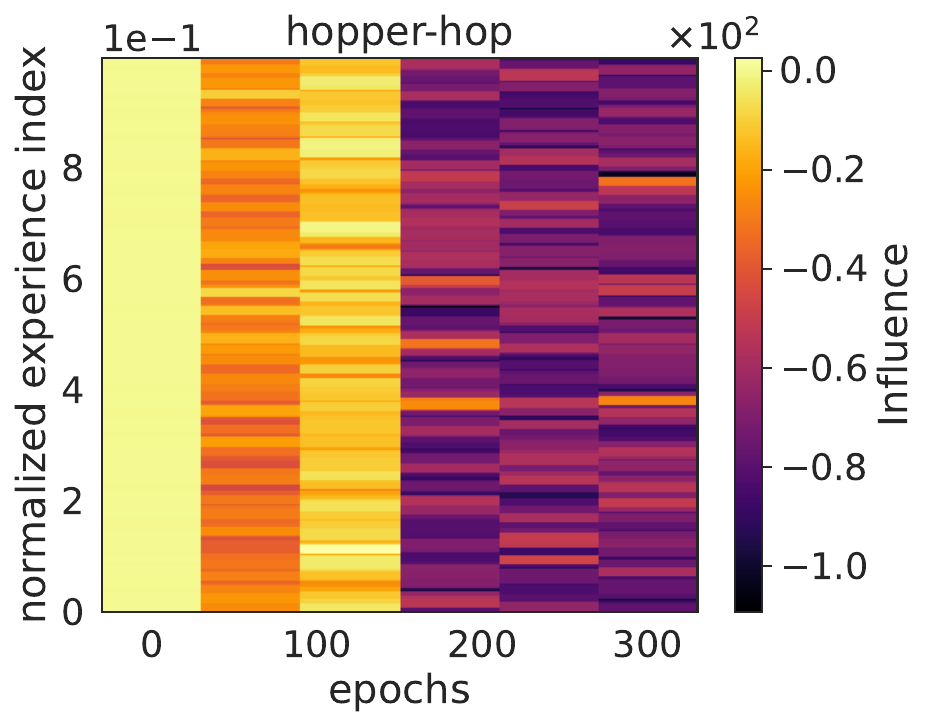}
\includegraphics[clip, width=0.245\hsize]{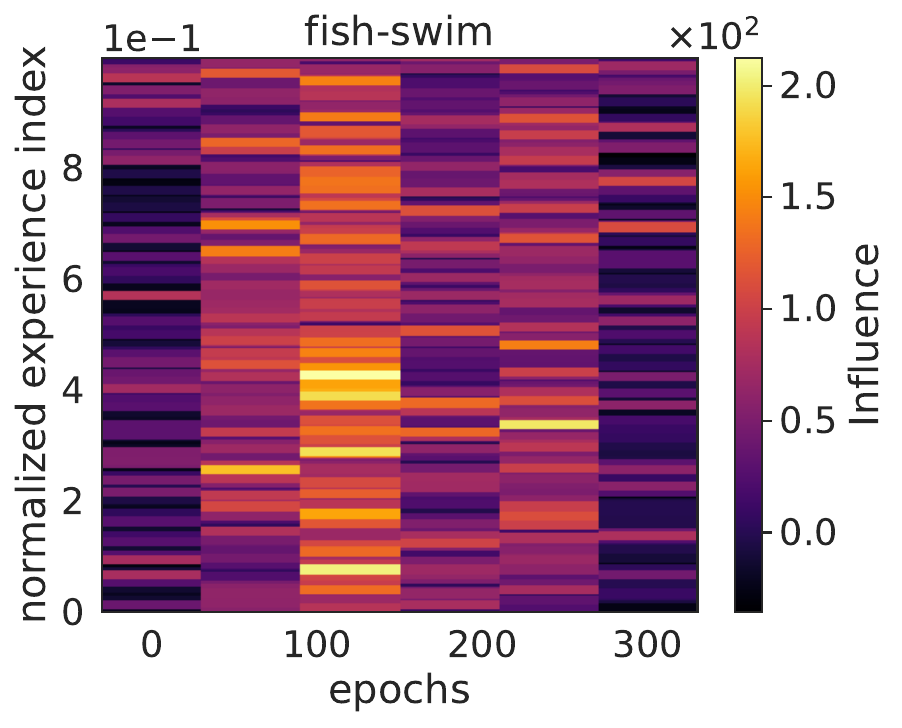}
\includegraphics[clip, width=0.245\hsize]{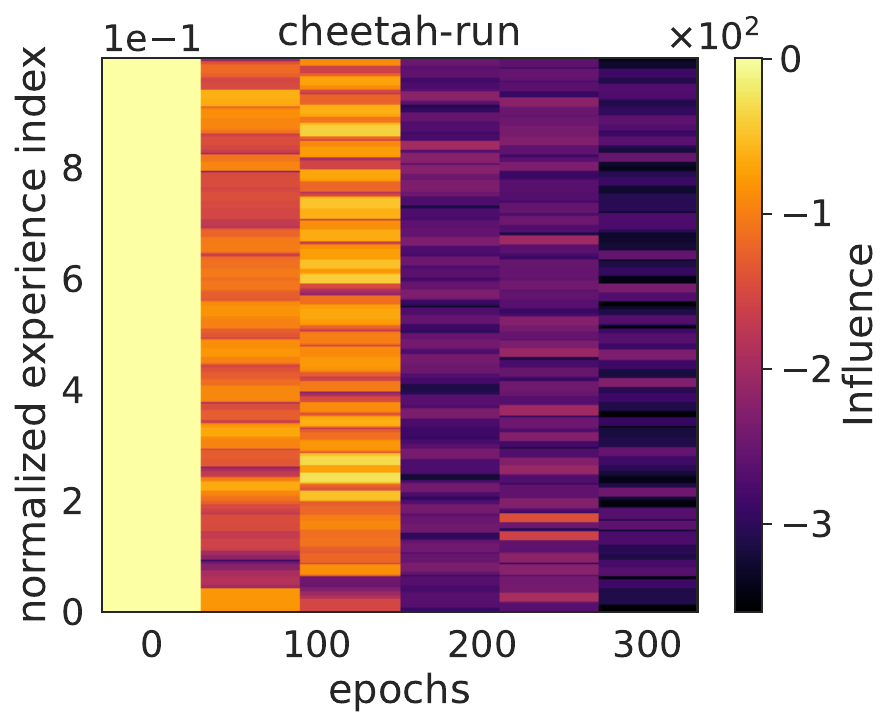}
\includegraphics[clip, width=0.245\hsize]{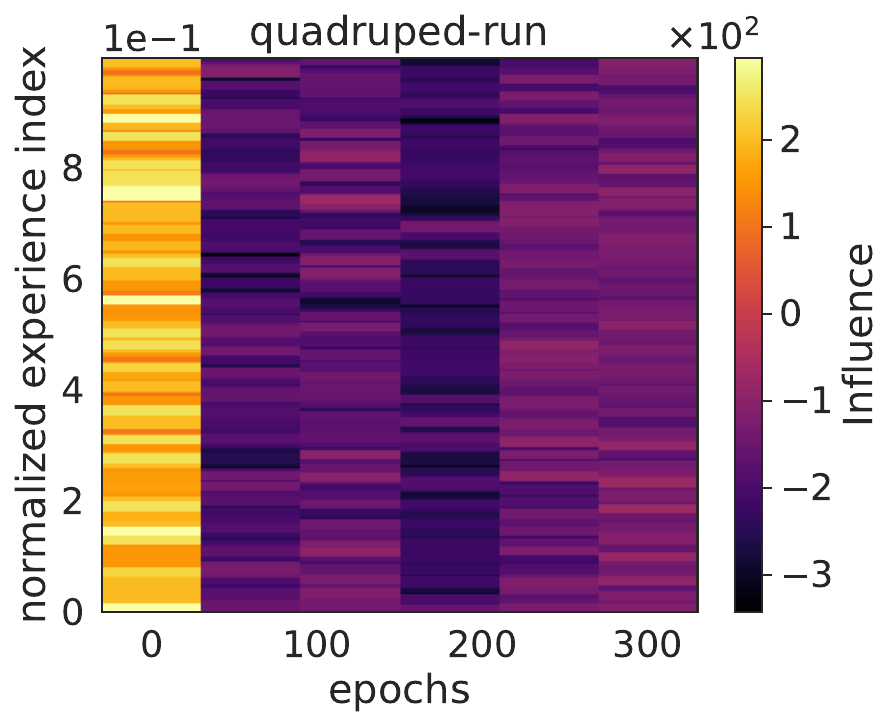}
\includegraphics[clip, width=0.245\hsize]{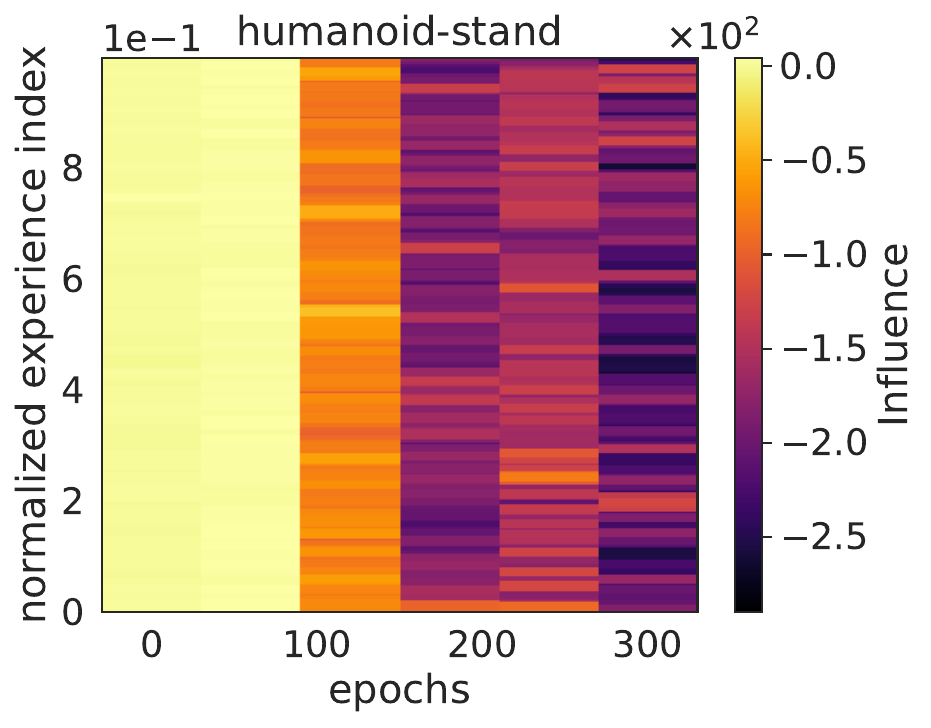}
\includegraphics[clip, width=0.245\hsize]{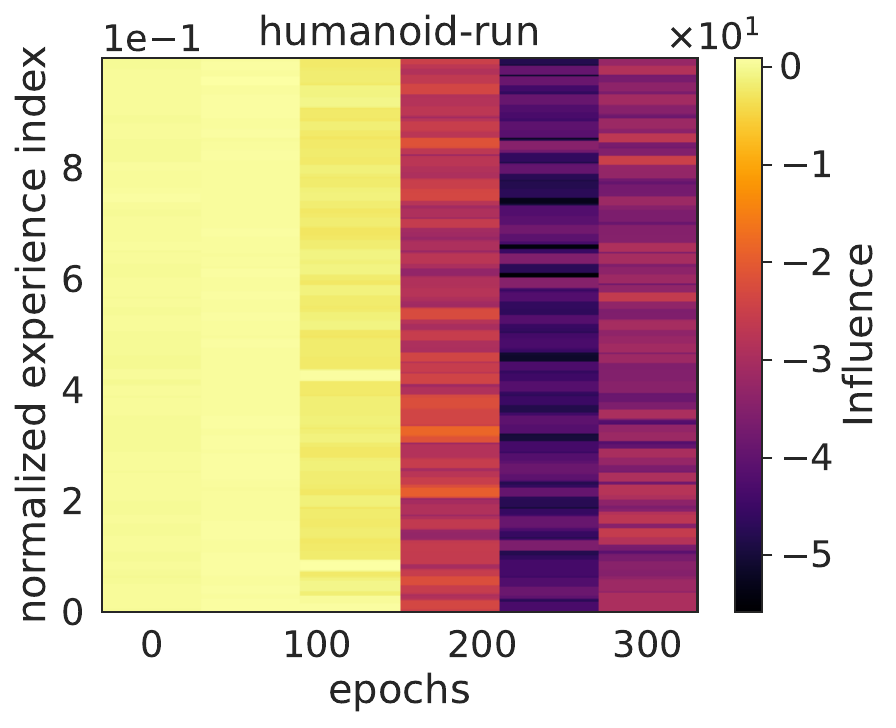}
\end{minipage}
%
\caption{
Distribution of experience influence on return (Eq.~\ref{eq:influence_return}) in DM control environments with adversarial experiences. 
}
\label{fig:distribution_of_return_dmc}
\end{figure*}
\begin{figure*}[h!]
\begin{minipage}{1.0\hsize}
\includegraphics[clip, width=0.49\hsize]{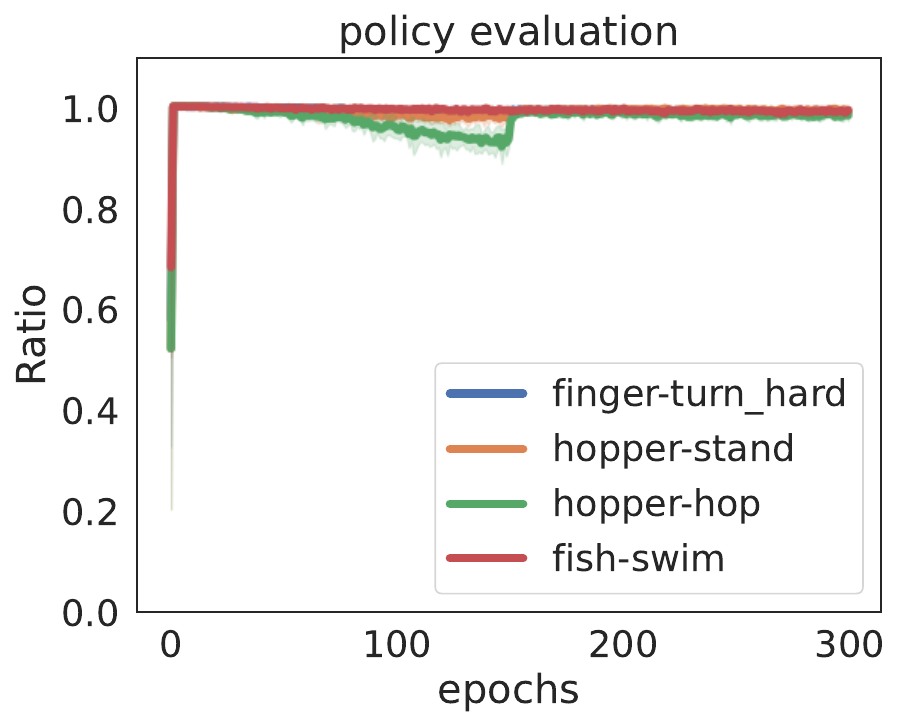}
\includegraphics[clip, width=0.49\hsize]{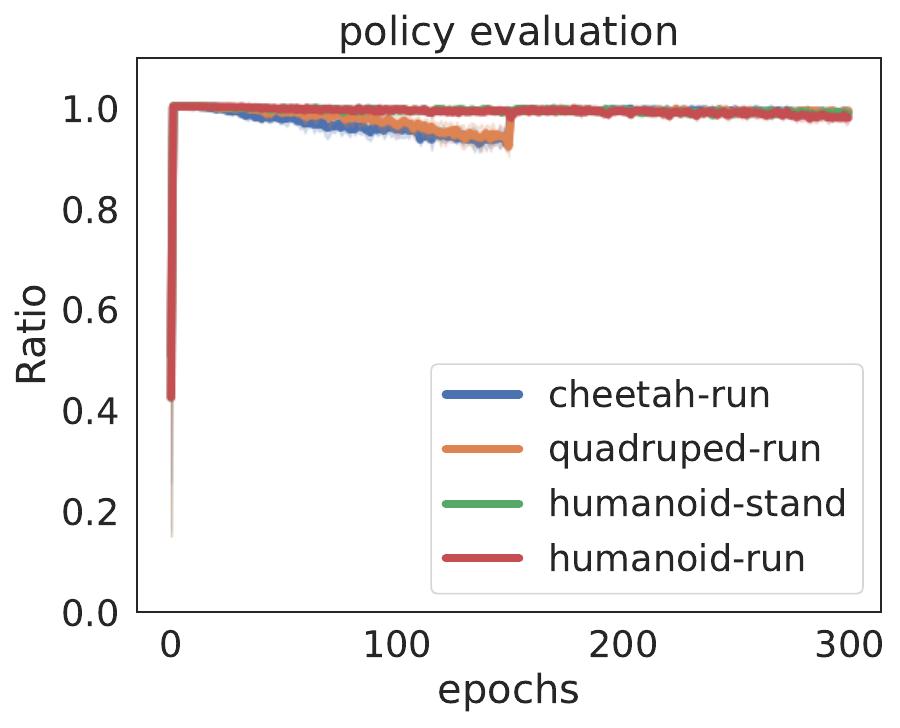}
\end{minipage}
\caption{
The ratio of experiences for which PIToD correctly estimated their influence on policy evaluation (Eq.~\ref{eq:self_infl_q_func}). 
}
\label{fig:raio_of_experiences_with_pos_neg_pe_dmc}
\end{figure*}
\begin{figure*}[h!]
\begin{minipage}{1.0\hsize}
\includegraphics[clip, width=0.49\hsize]{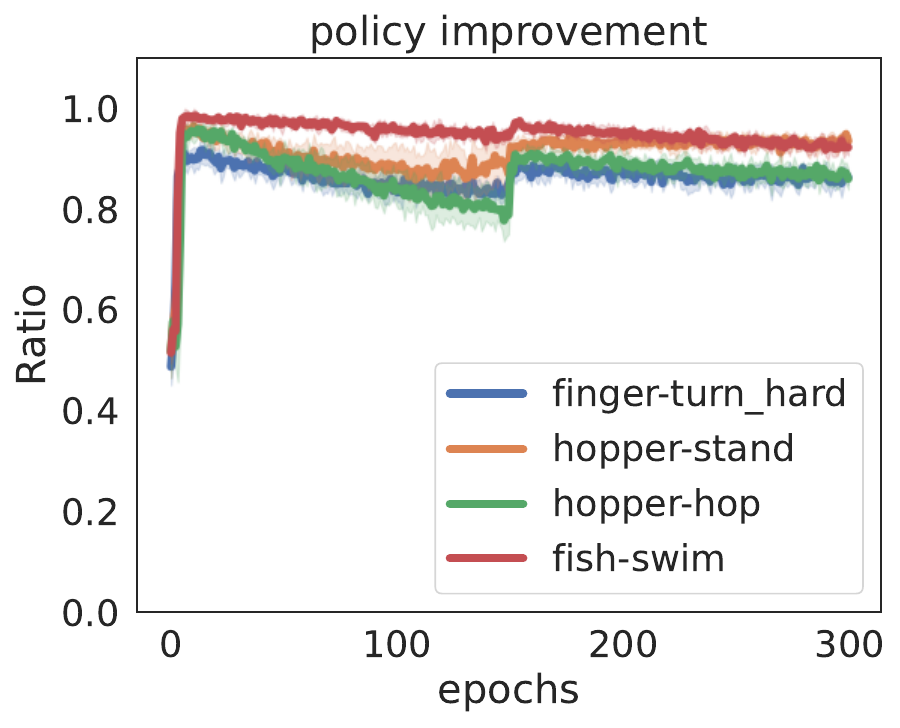}
\includegraphics[clip, width=0.49\hsize]{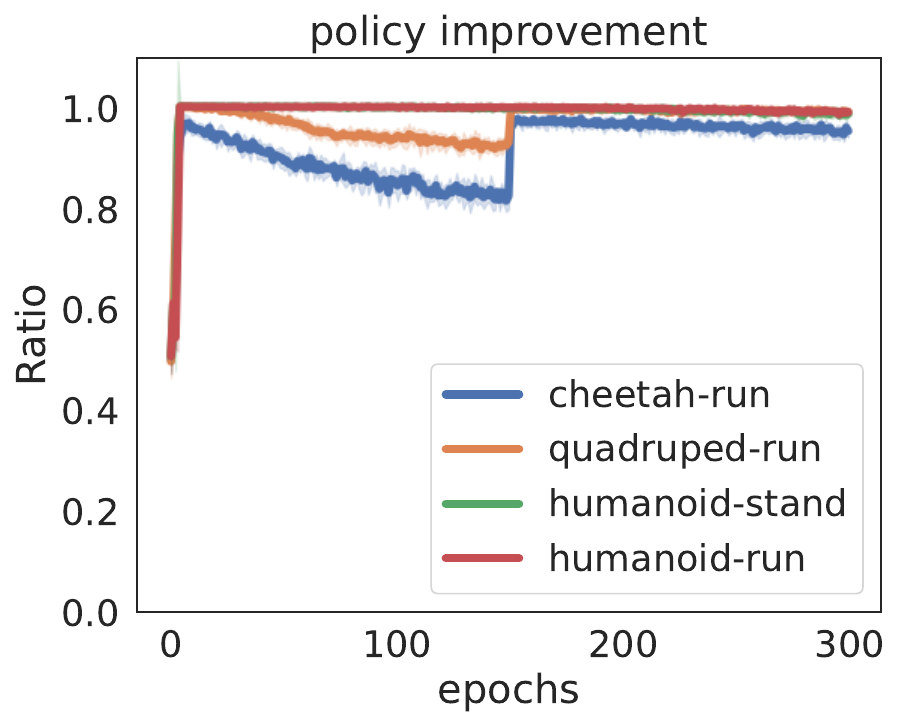}
\end{minipage}
\caption{
The ratio of experiences for which PIToD correctly estimated their influence on policy improvement (Eq.~\ref{eq:self_infl_policy_func}). 
}
\label{fig:raio_of_experiences_with_pos_neg_influence_pi_dmc}
\end{figure*}

\begin{figure*}[h!]
\begin{minipage}{0.99\hsize}
\includegraphics[clip, width=0.49\hsize]{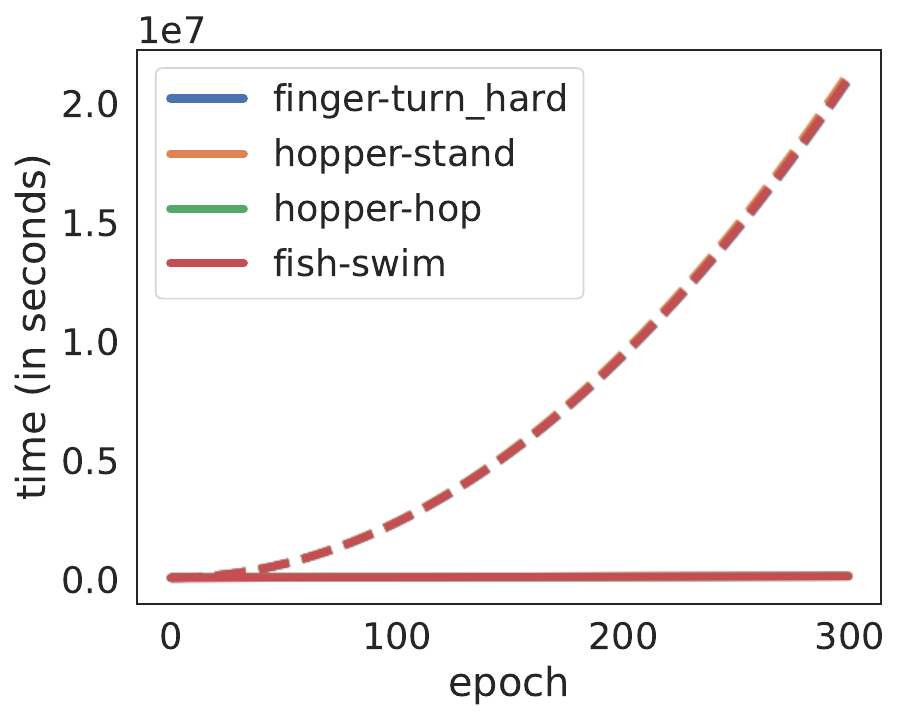}
\includegraphics[clip, width=0.49\hsize]{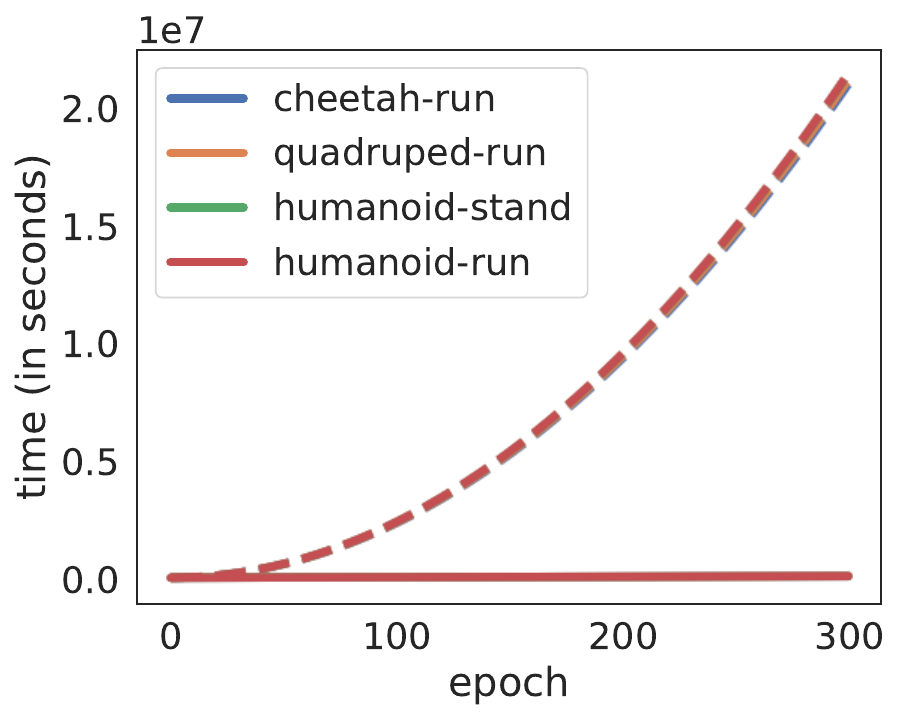}
\end{minipage}
\caption{
Wall-clock time required to estimate influence with PIToD and LOO. 
The solid line represents the time for PIToD, and the dashed line represents the estimated time for LOO. 
}
\label{fig:training_time_dmc}
\end{figure*}

\begin{figure*}[h!]
\begin{minipage}{0.99\hsize}
\includegraphics[clip, width=0.49\hsize]{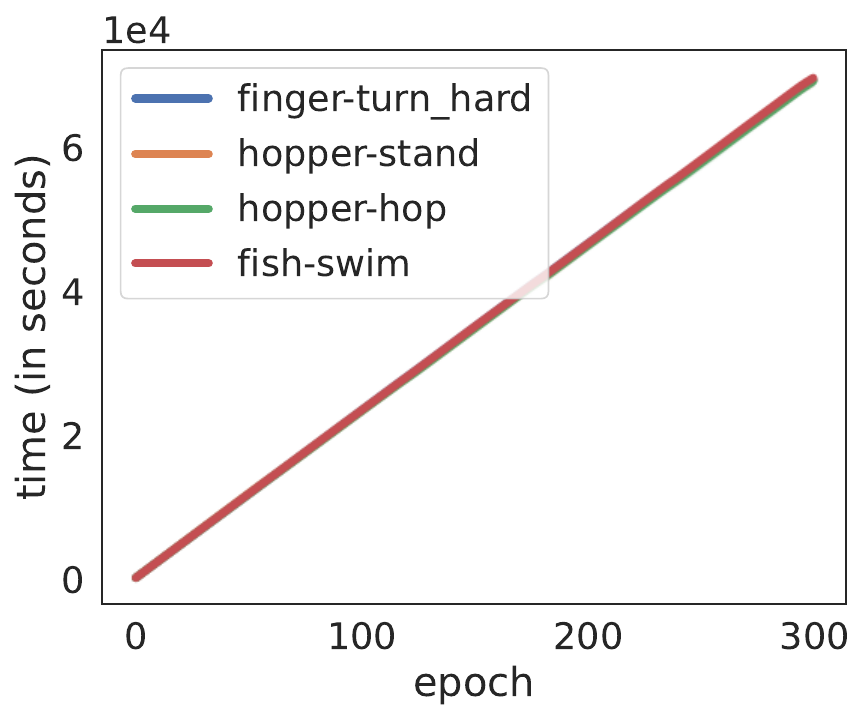}
\includegraphics[clip, width=0.49\hsize]{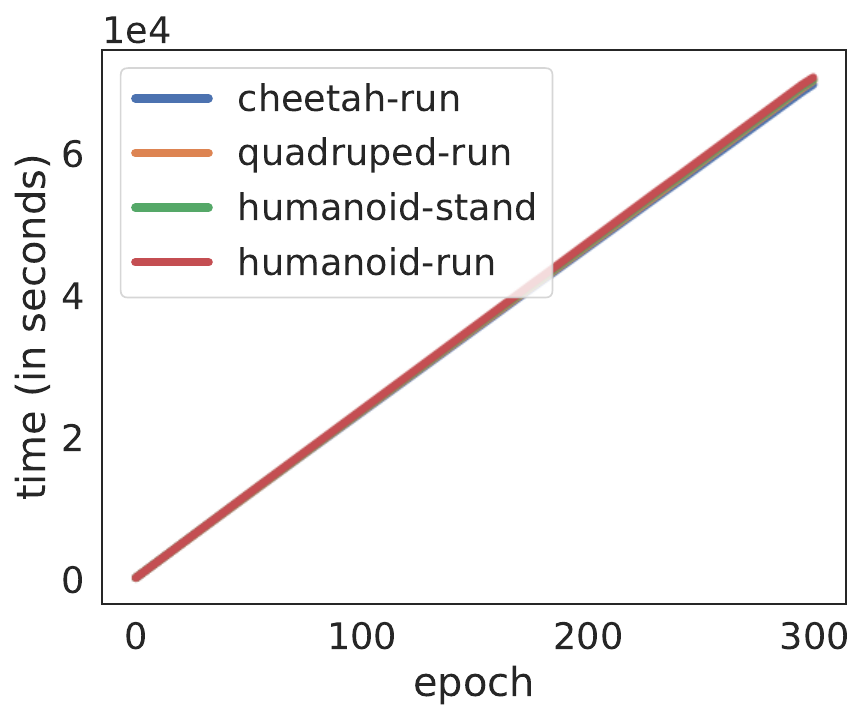}
\end{minipage}
\caption{
Wall-clock time required to estimate influence with PIToD. 
}
\label{fig:training_time_dmc_wo_loo}
\end{figure*}

\clearpage
\section{Amending the policies and Q-functions of DroQ and Reset agents}\label{app:amend-droq-reset}
In Section~\ref{sec:application}, we amended the SAC agent using PIToD. 
In this section, we apply PIToD to amend other RL agents. 

We evaluate two PIToD implementations: DroQToD and ResetToD.\\
\textbf{DroQToD} is a PIToD implementation based on DroQ~\citep{hiraoka2022dropout}. 
DroQ is the SAC variant that applies dropout and layer normalization to the Q-function. 
DroQToD differs from the original PIToD implementation (Appendix~\ref{sec:practical_implementation}) in that it has a dropout layer after each weight layer in the Q-function. 
The dropout rate is set to 0.01 as in \citet{hiraoka2022dropout}. 
Layer normalization is already included in the Q-function of the original PIToD implementation; thus, no additional changes are made to it.\\
\textbf{ResetToD} is a PIToD implementation based on the periodic reset~\citep{pmlr-v162-nikishin22a,doro2023sampleefficient} of the Q-function and policy parameters. 
ResetToD differs from the original PIToD implementation in that it resets the parameters of the Q-function and policy every $10^5$ steps. 

The policies and Q-functions of these implementations are amended as in Section~\ref{sec:application} (i.e., the amendment process follows Algorithm~\ref{alg4:Amendment} in Appendix~\ref{app:alg_amend}). 

The results of the policy and Q-function amendments (Figures~\ref{fig:cleansing_results_DroQ} and \ref{fig:cleansing_results_Reset}) show that the performance of both DroQToD and ResetToD is improved after the amendments. 
\textbf{Return:} 
For DroQToD, the return is improved after amendment, especially in Hopper (the left side of Figure~\ref{fig:cleansing_results_DroQ}). 
For ResetToD, the return is improved across all environments (the left side of Figure~\ref{fig:cleansing_results_Reset}). 
\textbf{Q-estimation bias:} 
For DroQToD, the estimation bias is reduced after amendment, especially in Humanoid (the right side of Figure~\ref{fig:cleansing_results_DroQ}). 
For ResetToD, the estimation bias is reduced in the early stages of training (epochs 0--10) in Ant and Walker2d (the right side of Figure~\ref{fig:cleansing_results_Reset}). 
\begin{figure*}[h!]
\begin{minipage}{1.0\hsize}
\includegraphics[clip, width=0.49\hsize]{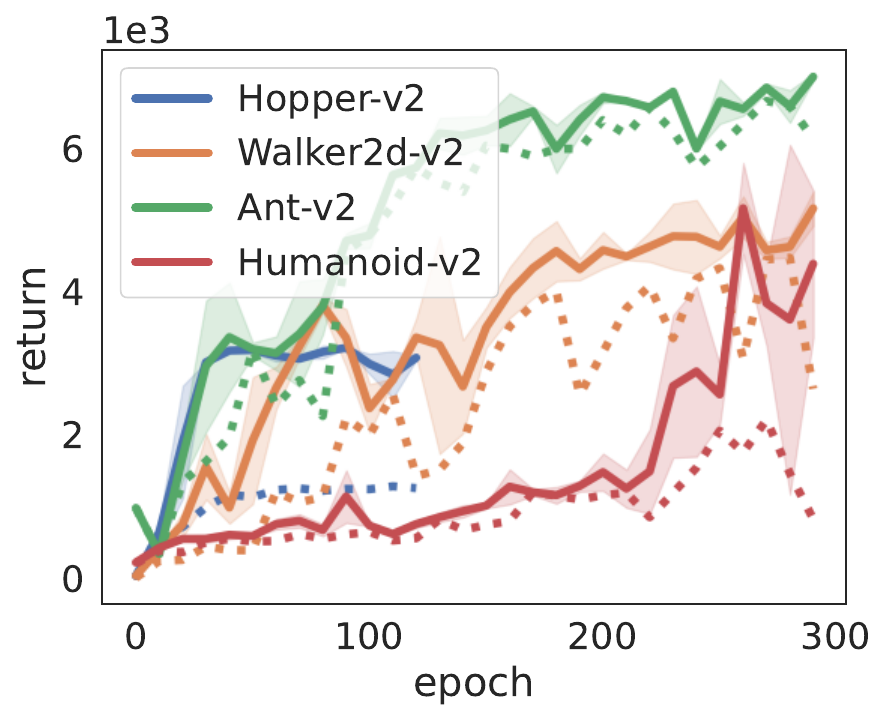}
\includegraphics[clip, width=0.49\hsize]{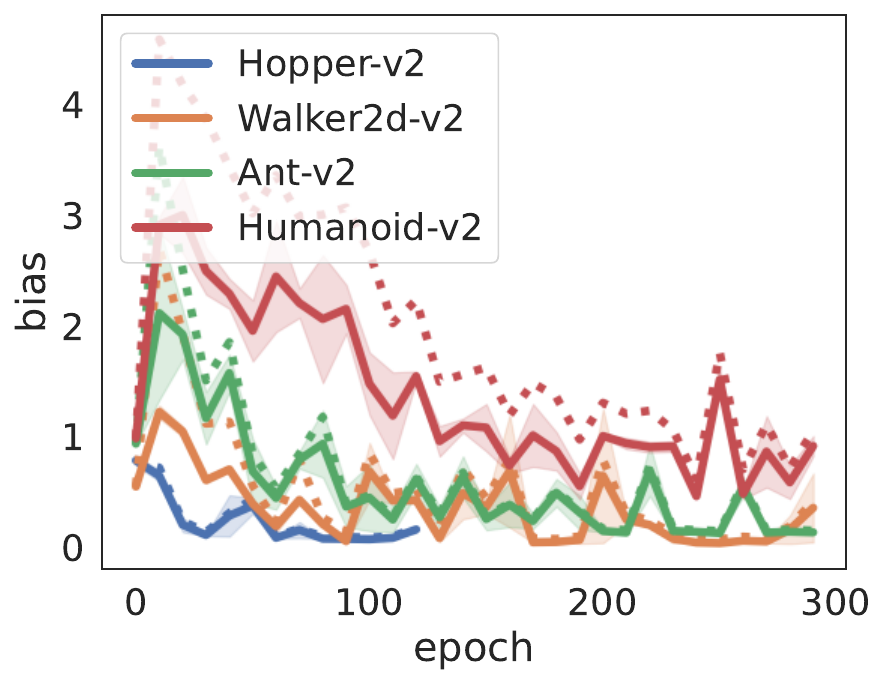}
\end{minipage}
\caption{
Results of the policy amendments (left) and the Q-function amendments (right) for DroQToD in underperforming trials. 
The solid lines represent the post-amendment performances: return for the policy (left; i.e., $L_{\text{ret}}(\pi_{\theta, \mathbf{w}_*})$) and bias for the Q-function (right; i.e., $L_{\text{bias}}(Q_{\phi, \mathbf{w}_*})$). 
The dashed lines show the pre-amendment performances: return (left; i.e., $L_{\text{ret}}(\pi_{\theta})$) and bias (right; i.e., $L_{\text{bias}}(Q_{\phi})$). 
}
\label{fig:cleansing_results_DroQ}
\end{figure*}
\begin{figure*}[h!]
\begin{minipage}{1.0\hsize}
\includegraphics[clip, width=0.49\hsize]{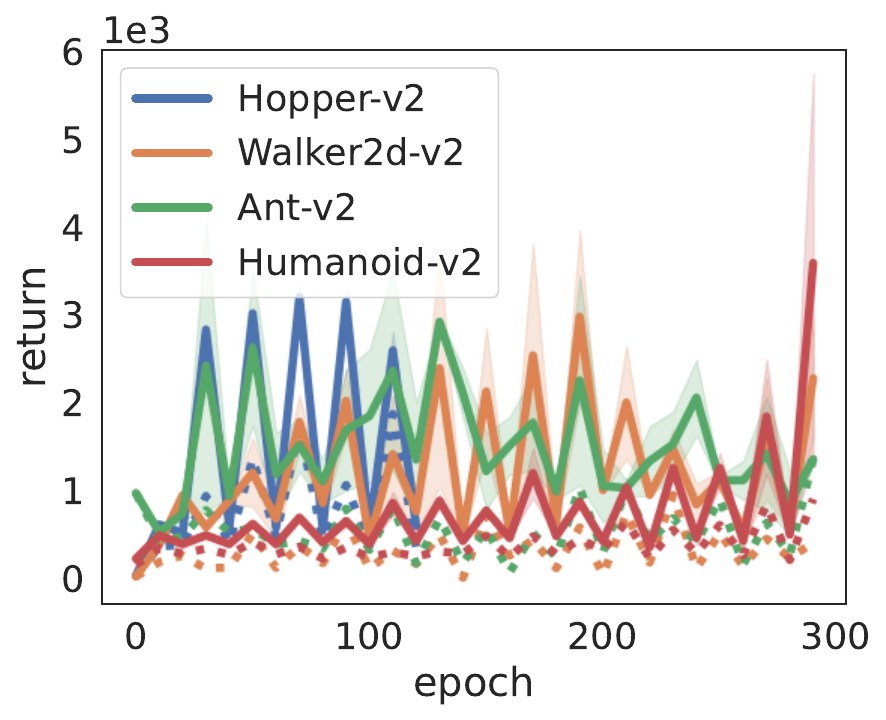}
\includegraphics[clip, width=0.49\hsize]{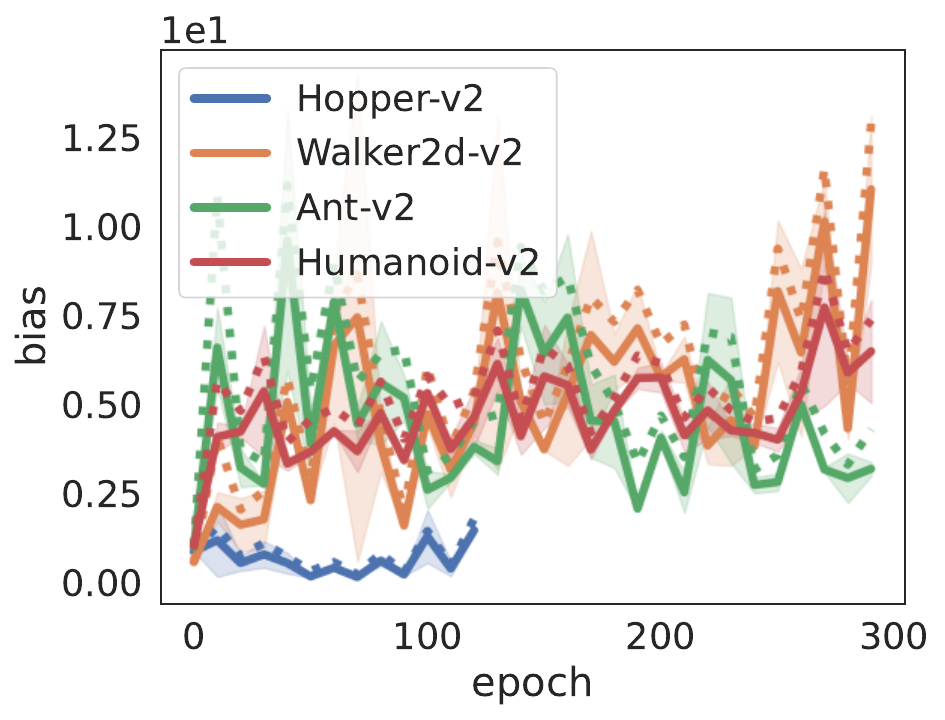}
\end{minipage}
\caption{
Results of the policy amendments (left) and the Q-function amendments (right) for ResetToD in underperforming trials. 
The solid lines represent the post-amendment performances: return for the policy (left; i.e., $L_{\text{ret}}(\pi_{\theta, \mathbf{w}_*})$) and bias for the Q-function (right; i.e., $L_{\text{bias}}(Q_{\phi, \mathbf{w}_*})$). 
The dashed lines show the pre-amendment performances: return (left; i.e., $L_{\text{ret}}(\pi_{\theta})$) and bias (right; i.e., $L_{\text{bias}}(Q_{\phi})$). 
}
\label{fig:cleansing_results_Reset}
\end{figure*}

What experiences negatively influence Q-function or policy performance in the case of DroQToD?
Regarding Q-function performance, older experiences negatively influence Q-estimation bias in the early stages of training (the lower part of Figure~\ref{fig:distribution_of_bias_return_droq} in Appendix~\ref{app:additional_results_droq}). 
Regarding policy performance, some experiences negatively influencing returns are associated with wobbly movements. 
An example of such experiences in the Humanoid environment can be seen in the video at the following link: \url{https://github.com/user-attachments/assets/a47d8a54-a794-4e04-a48d-05e03ad31e9e}

\clearpage
\subsection{Additional experimental results for DroQToD}\label{app:additional_results_droq}
\begin{figure*}[h!]
\begin{minipage}{1.0\hsize}
\includegraphics[clip, width=0.49\hsize]{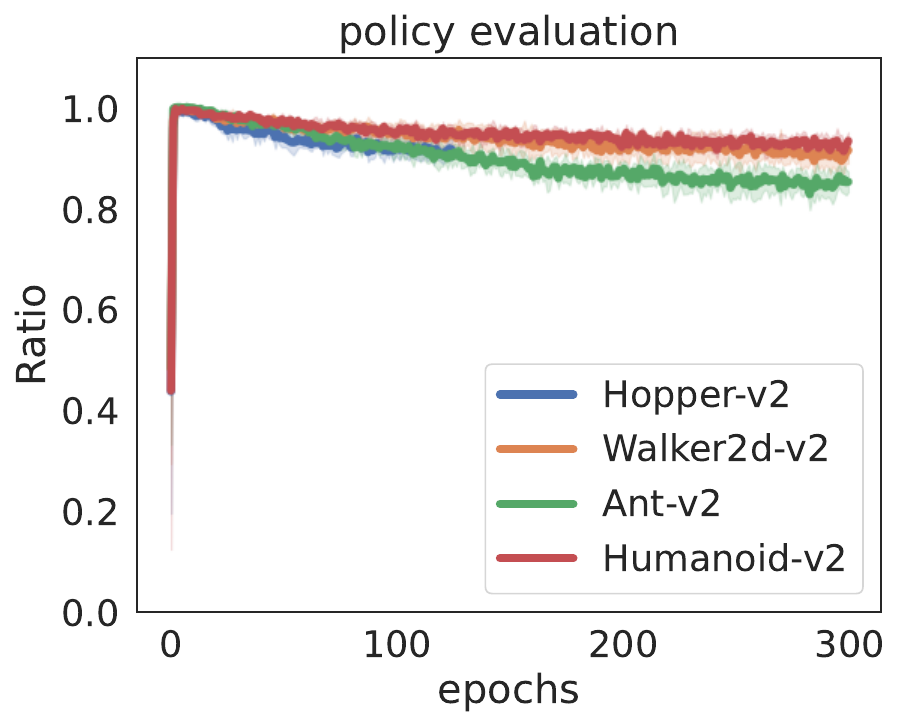}
\includegraphics[clip, width=0.49\hsize]{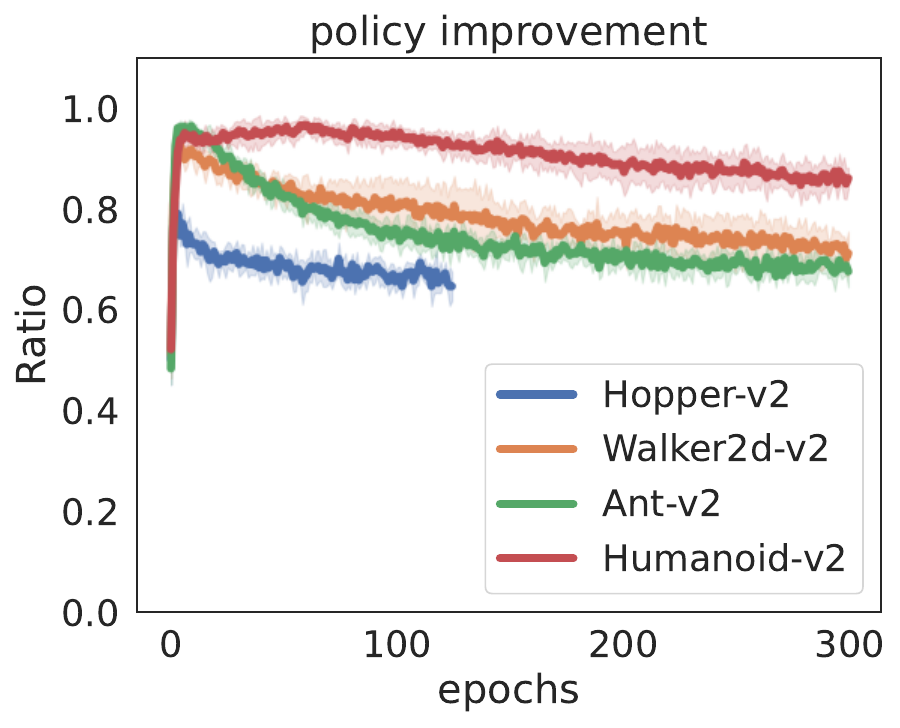}
\end{minipage}
\caption{
The ratio of experiences for which DroQToD correctly estimated their self-influence. 
}
\label{fig:raio_of_experiences_with_pos_neg_influence_droq}
\end{figure*}
\begin{figure*}[h!]
\begin{minipage}{1.0\hsize}
\includegraphics[clip, width=0.245\hsize]{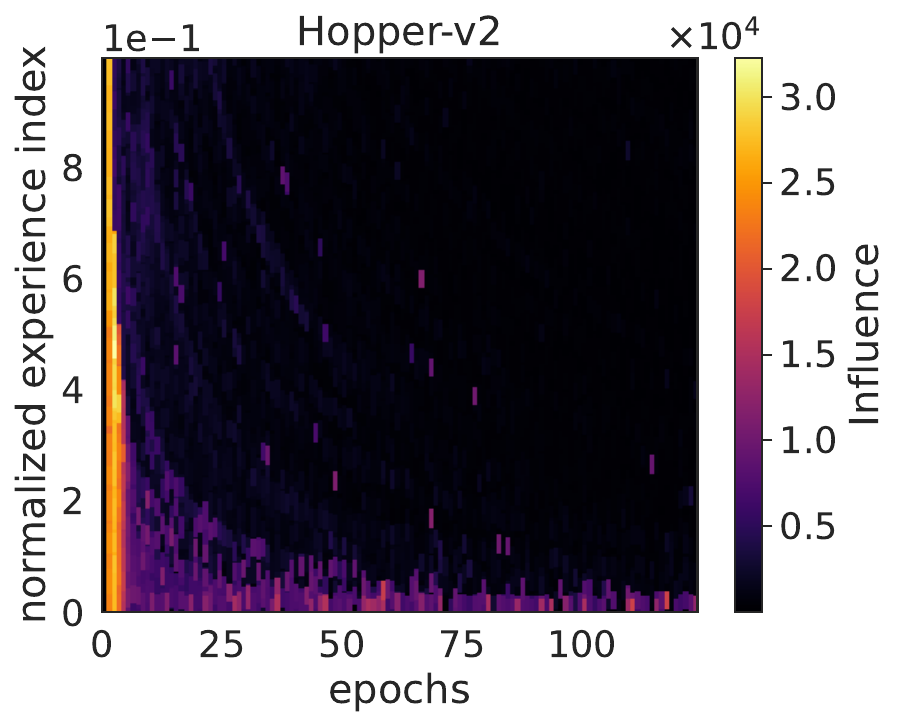}
\includegraphics[clip, width=0.245\hsize]{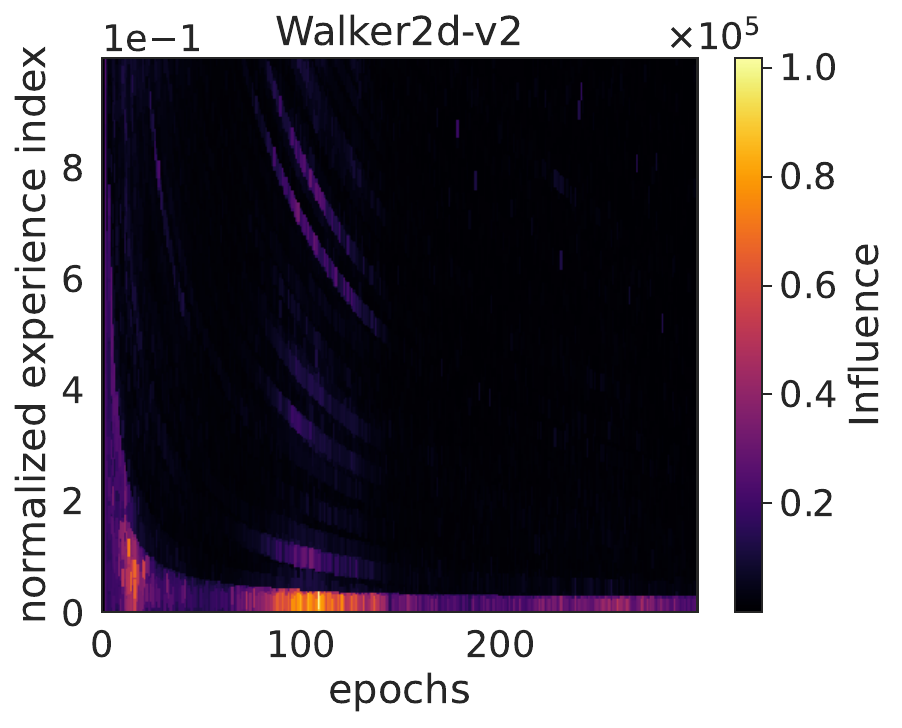}
\includegraphics[clip, width=0.245\hsize]{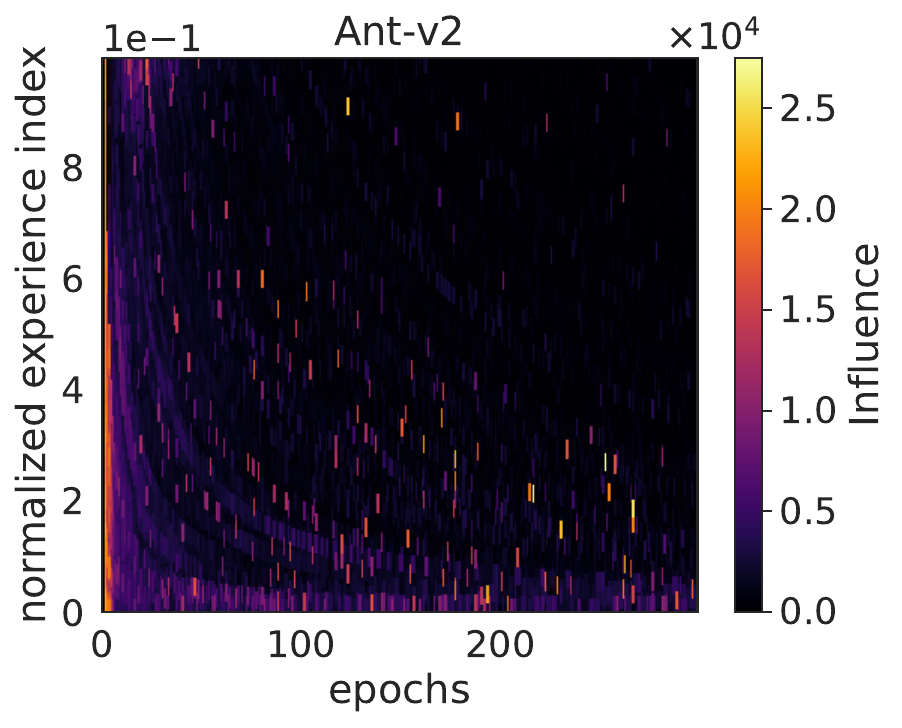}
\includegraphics[clip, width=0.245\hsize]{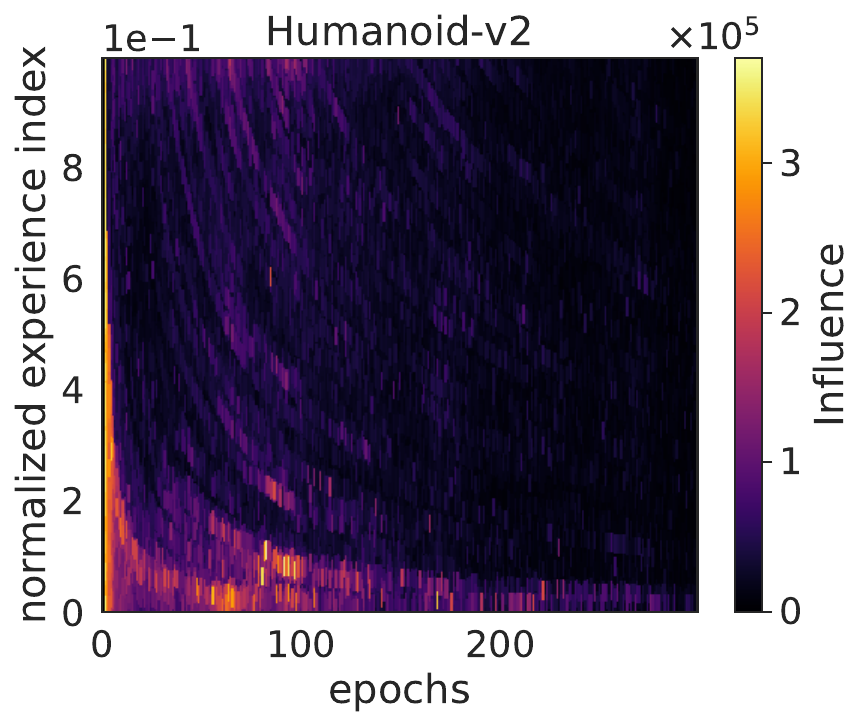}
\subcaption{Distribution of self-influence on policy evaluation (Eq.~\ref{eq:self_infl_q_func}).}
\end{minipage}
\begin{minipage}{1.0\hsize}
\includegraphics[clip, width=0.245\hsize]{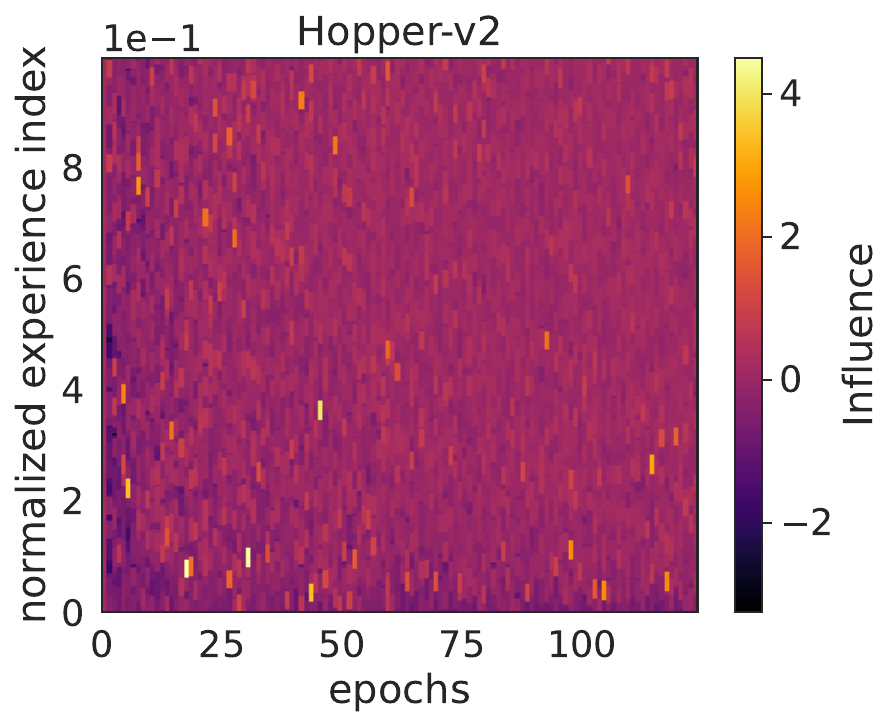}
\includegraphics[clip, width=0.245\hsize]{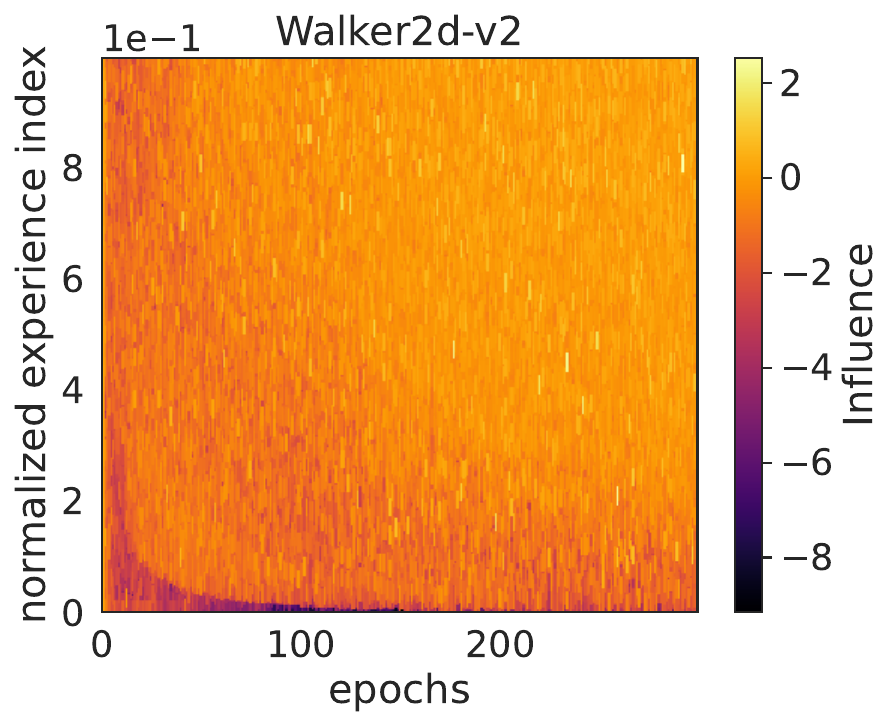}
\includegraphics[clip, width=0.245\hsize]{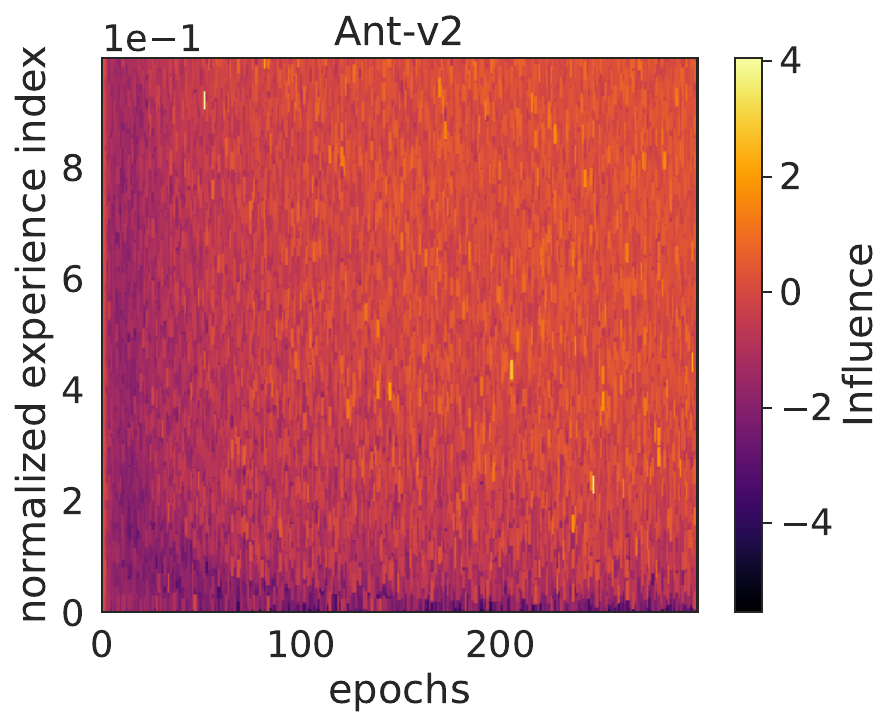}
\includegraphics[clip, width=0.245\hsize]{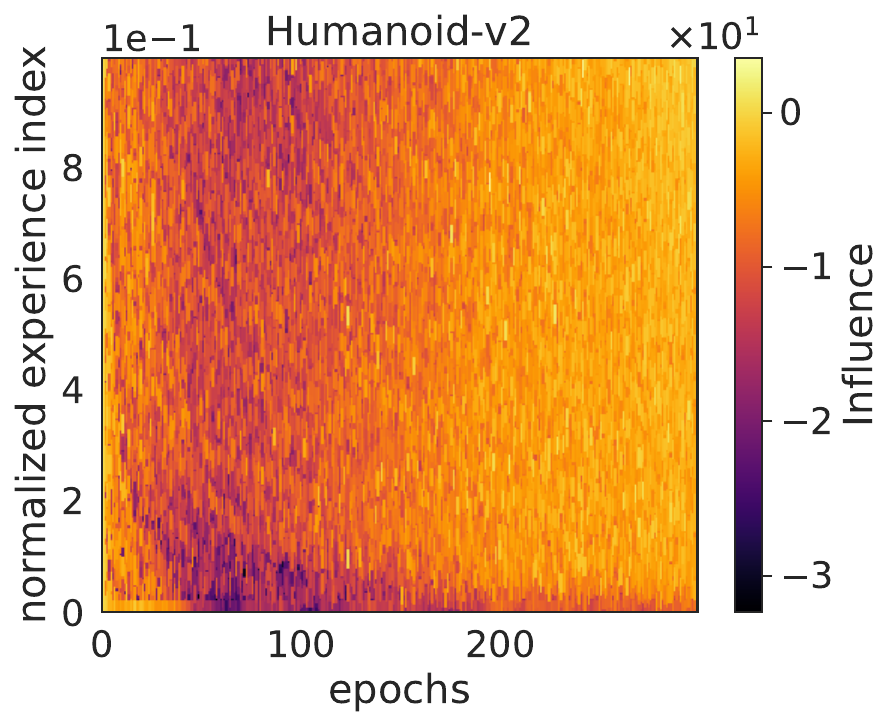}
\subcaption{Distribution of self-influence on policy improvement (Eq.~\ref{eq:self_infl_policy_func}).}
\end{minipage}
%
\caption{
Distribution of self-influence on policy evaluation and policy improvement. 
}
\label{fig:distribution_of_self_influences_droq}
\end{figure*}
\begin{figure*}[h!]
\begin{minipage}{1.0\hsize}
\includegraphics[clip, width=0.49\hsize]{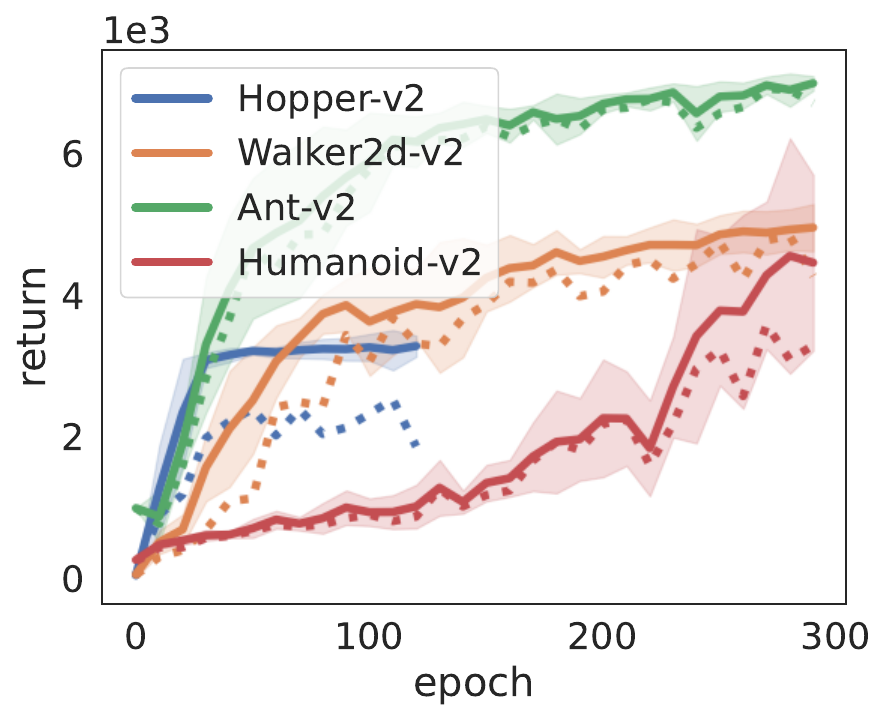}
\includegraphics[clip, width=0.49\hsize]{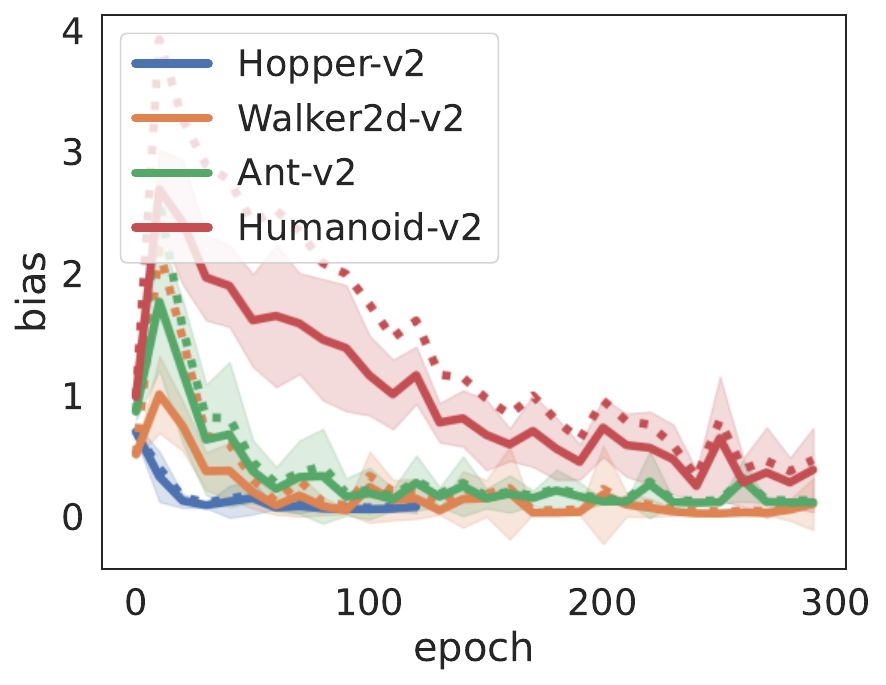}
\end{minipage}
\caption{
Results of policy amendments (left) and Q-function amendments (right) for all ten trials. 
}
\label{fig:cleansing_results_average_case_droq}
\end{figure*}
\begin{figure*}[h!]
\begin{minipage}{1.0\hsize}
\includegraphics[clip, width=0.245\hsize]{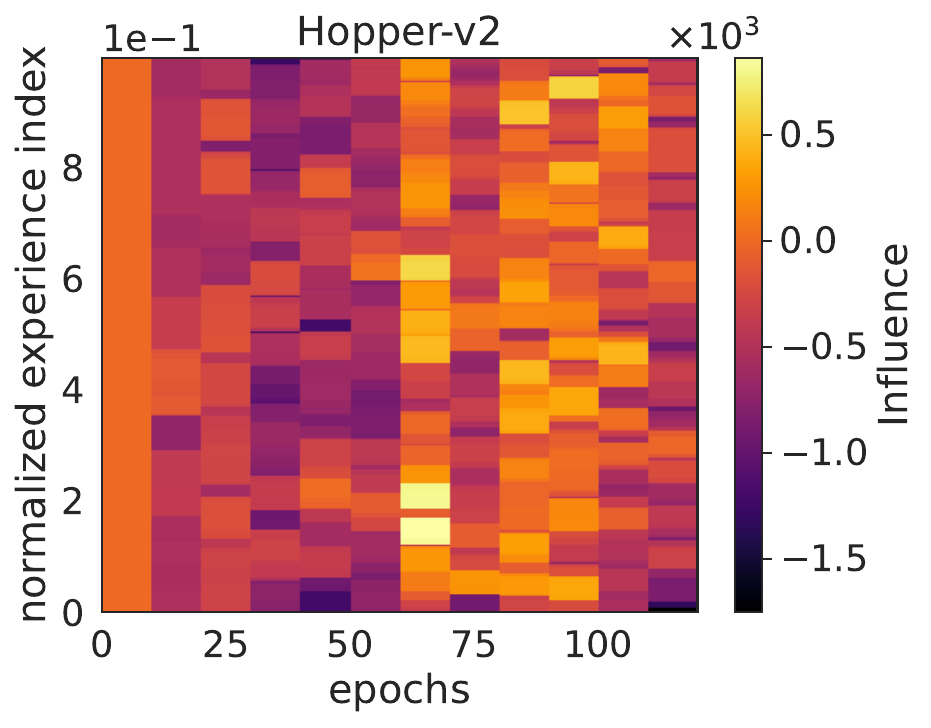}
\includegraphics[clip, width=0.245\hsize]{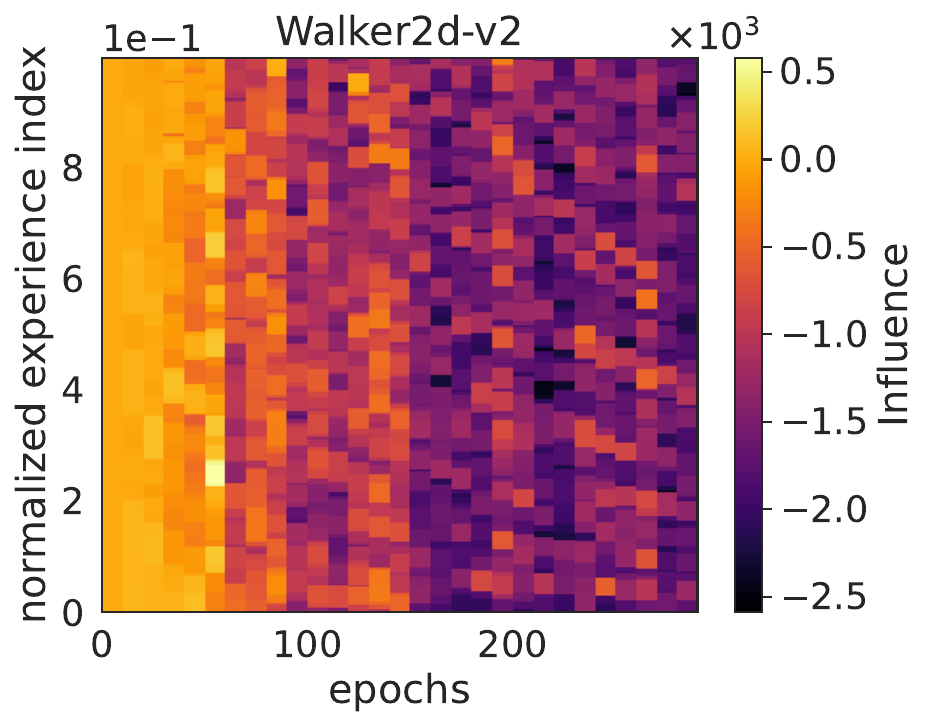}
\includegraphics[clip, width=0.245\hsize]{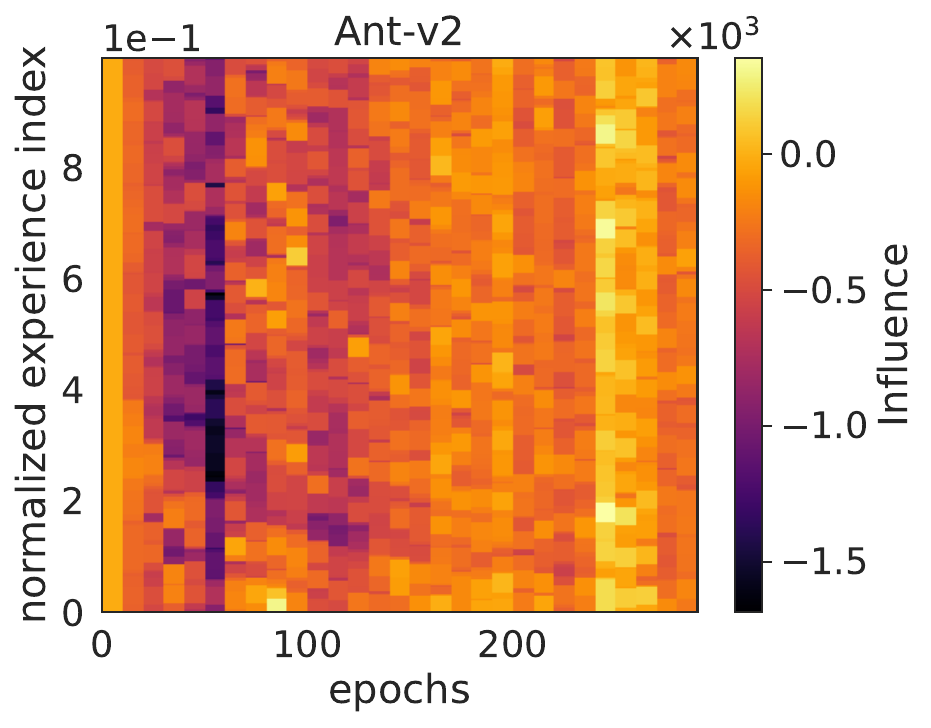}
\includegraphics[clip, width=0.245\hsize]{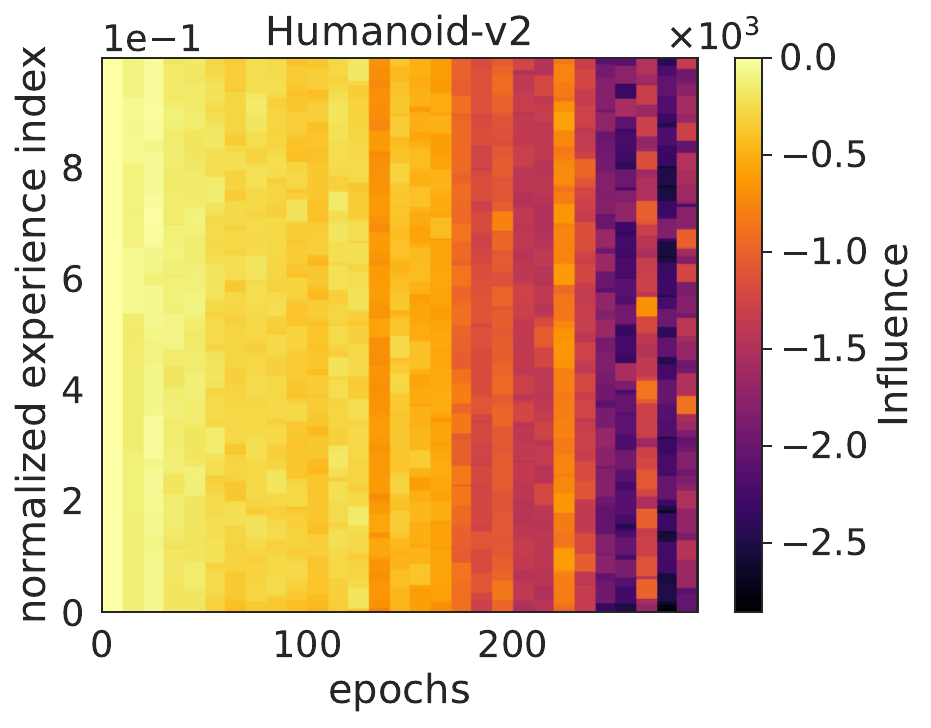}
\subcaption{Distribution of influence on return (Eq.~\ref{eq:influence_return}).}
\end{minipage}
\begin{minipage}{1.0\hsize}
\includegraphics[clip, width=0.245\hsize]{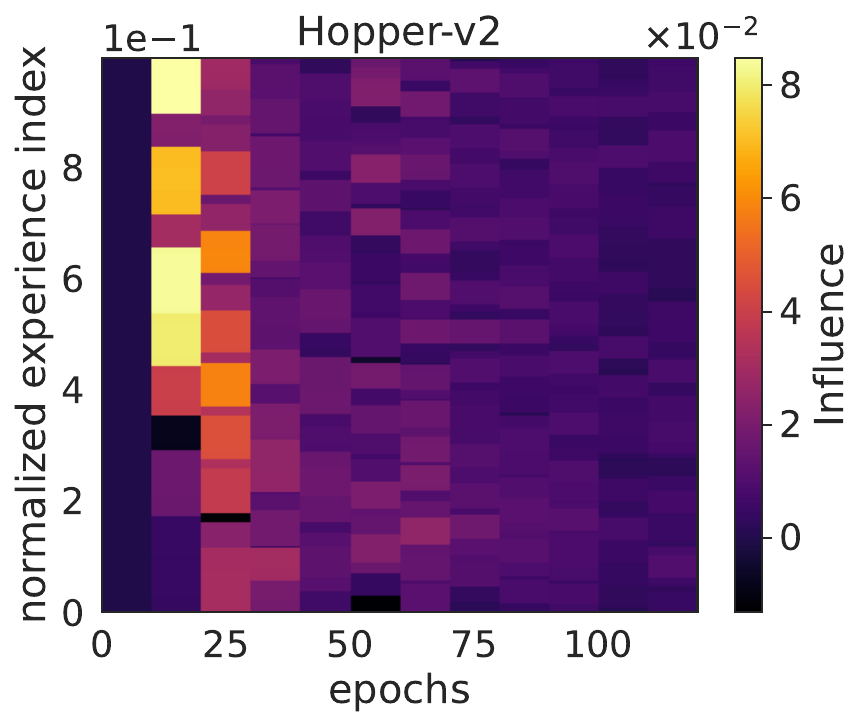}
\includegraphics[clip, width=0.245\hsize]{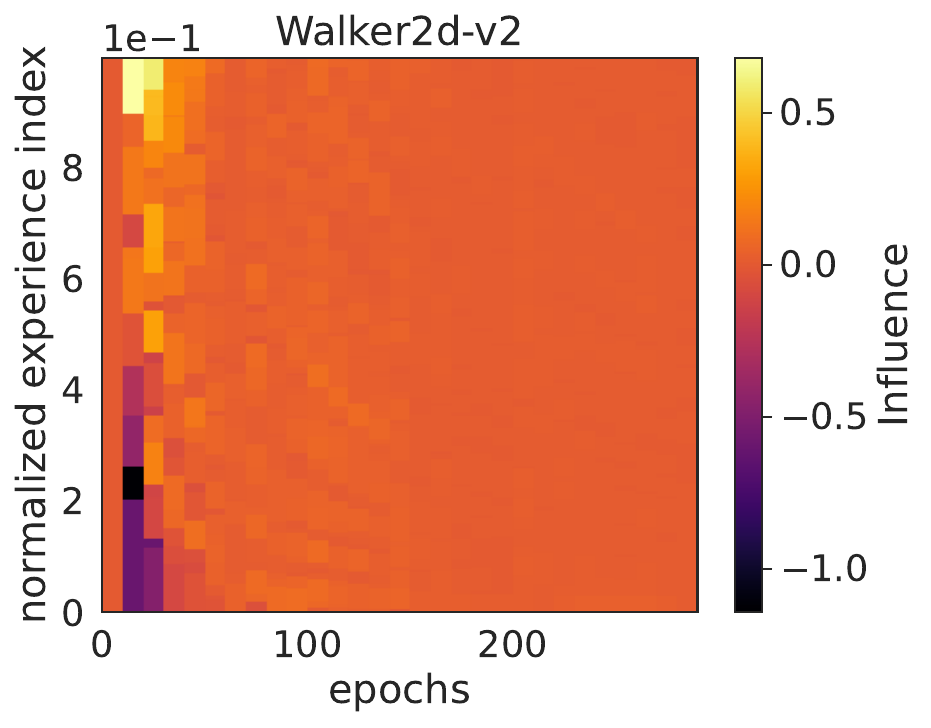}
\includegraphics[clip, width=0.245\hsize]{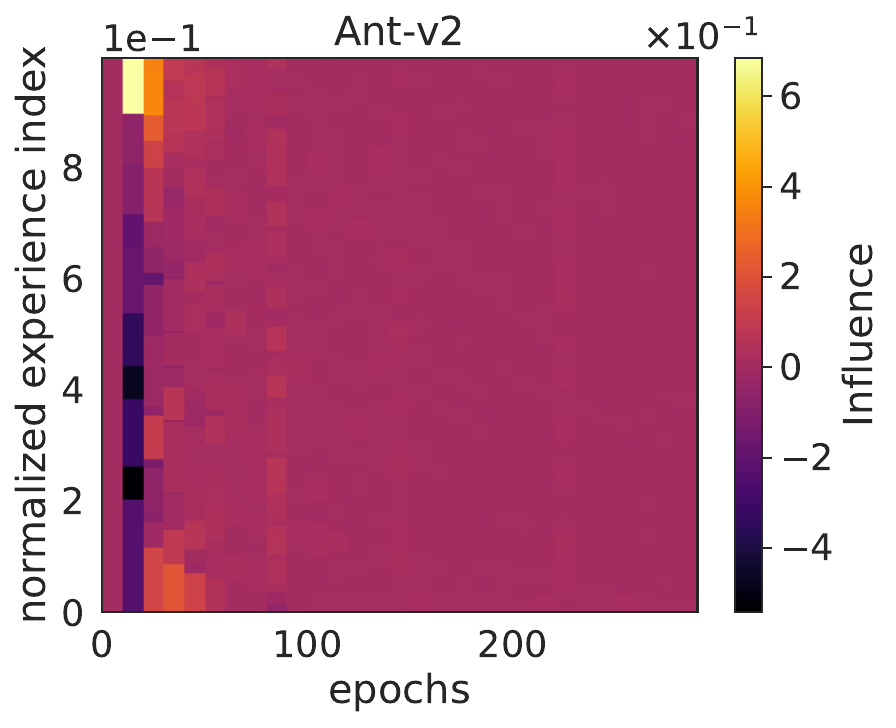}
\includegraphics[clip, width=0.245\hsize]{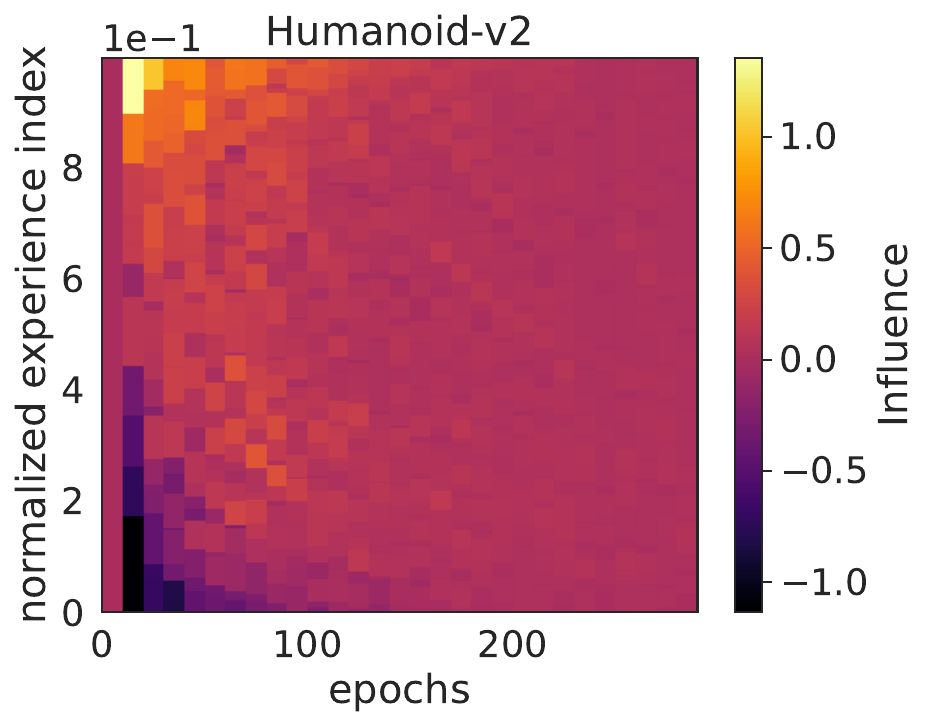}
\subcaption{Distribution of influence on Q-estimation bias (Eq.~\ref{eq:influence_bias}).}
\end{minipage}
%
\caption{
Distribution of influence on return and Q-estimation bias for all ten trials. 
}
\label{fig:distribution_of_bias_return_droq}
\end{figure*}

\clearpage
\subsection{Additional experimental results for ResetToD}\label{app:asstional_results_reset}
\begin{figure*}[h!]
\begin{minipage}{1.0\hsize}
\includegraphics[clip, width=0.49\hsize]{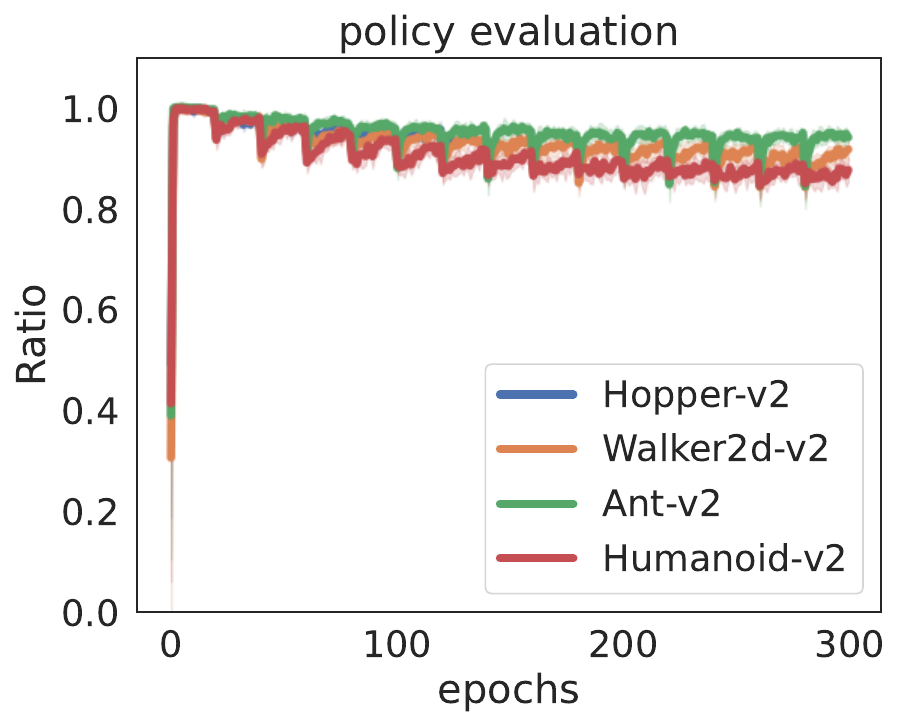}
\includegraphics[clip, width=0.49\hsize]{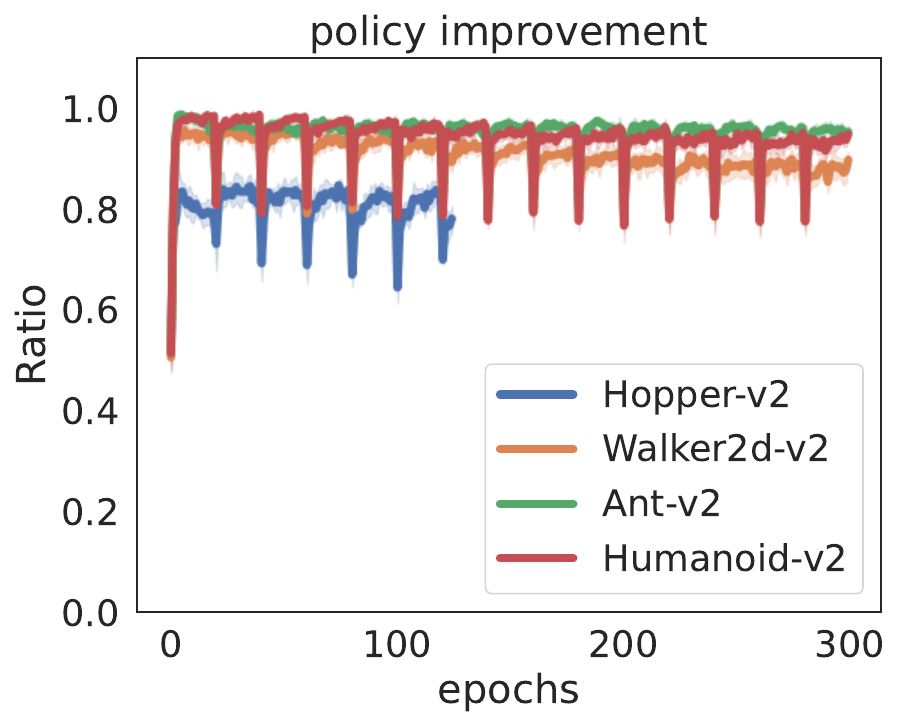}
\end{minipage}
\caption{
The ratio of experiences for which ResetToD correctly estimated their self-influence. 
}
\label{fig:raio_of_experiences_with_pos_neg_influence_reset}
\end{figure*}
\begin{figure*}[h!]
\begin{minipage}{1.0\hsize}
\includegraphics[clip, width=0.245\hsize]{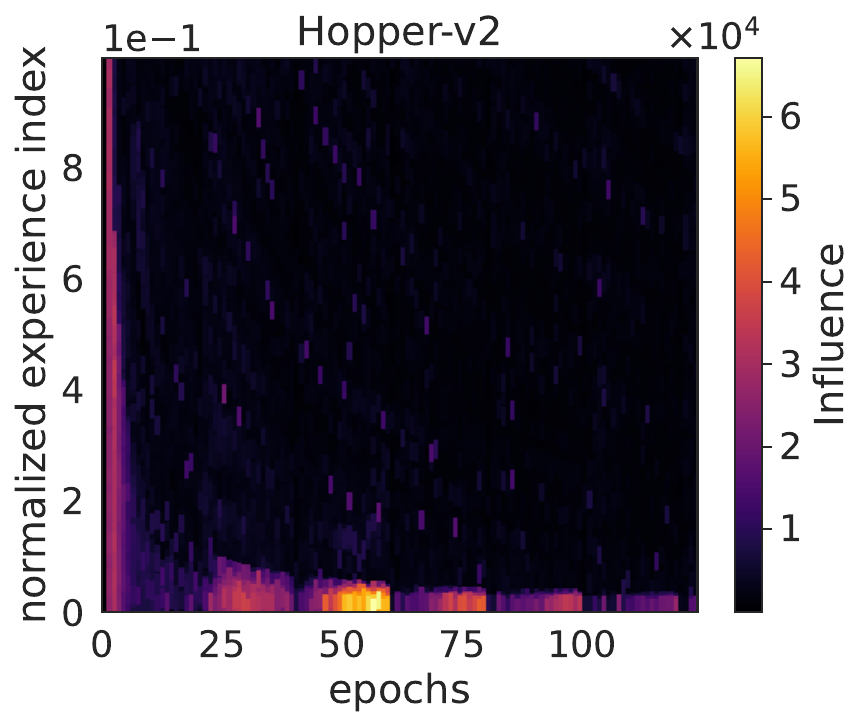}
\includegraphics[clip, width=0.245\hsize]{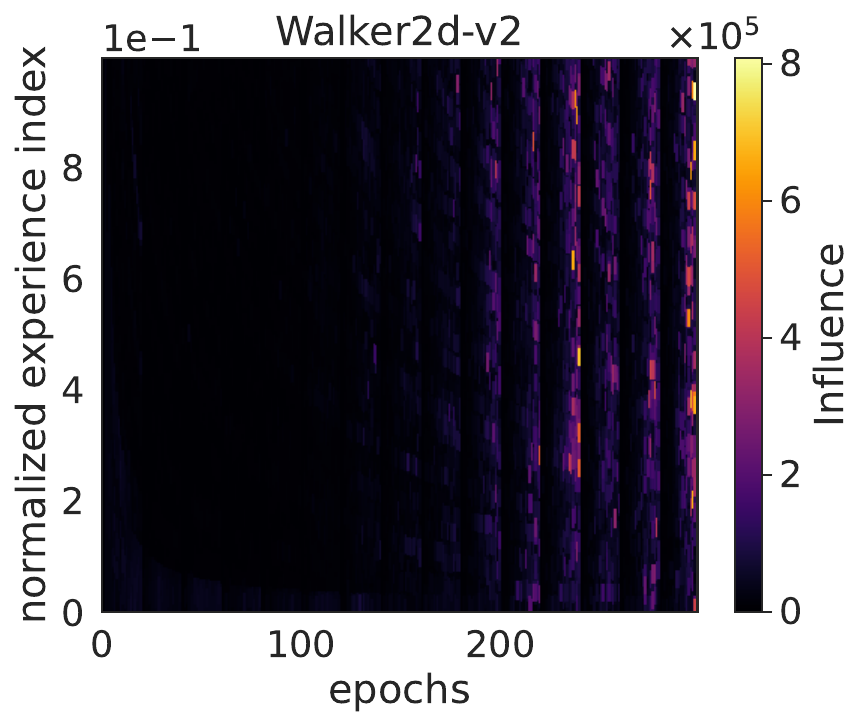}
\includegraphics[clip, width=0.245\hsize]{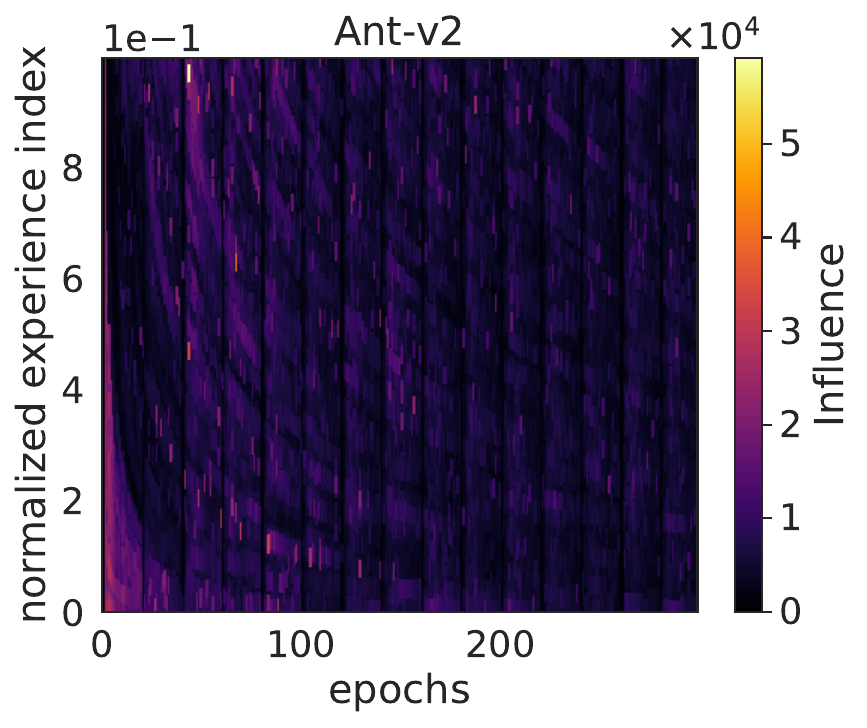}
\includegraphics[clip, width=0.245\hsize]{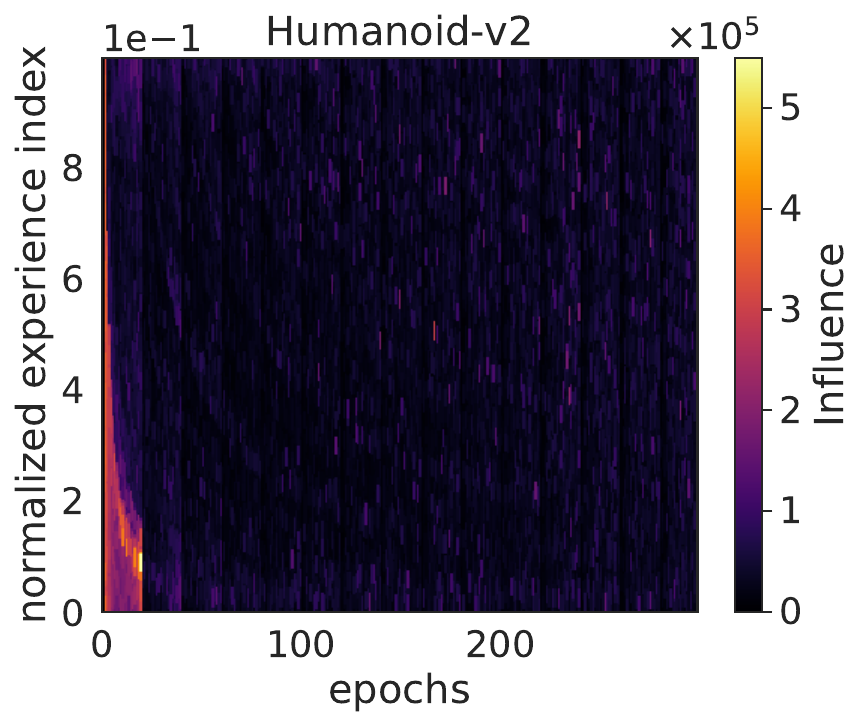}
\subcaption{Distribution of self-influence on policy evaluation (Eq.~\ref{eq:self_infl_q_func}).}
\end{minipage}
\begin{minipage}{1.0\hsize}
\includegraphics[clip, width=0.245\hsize]{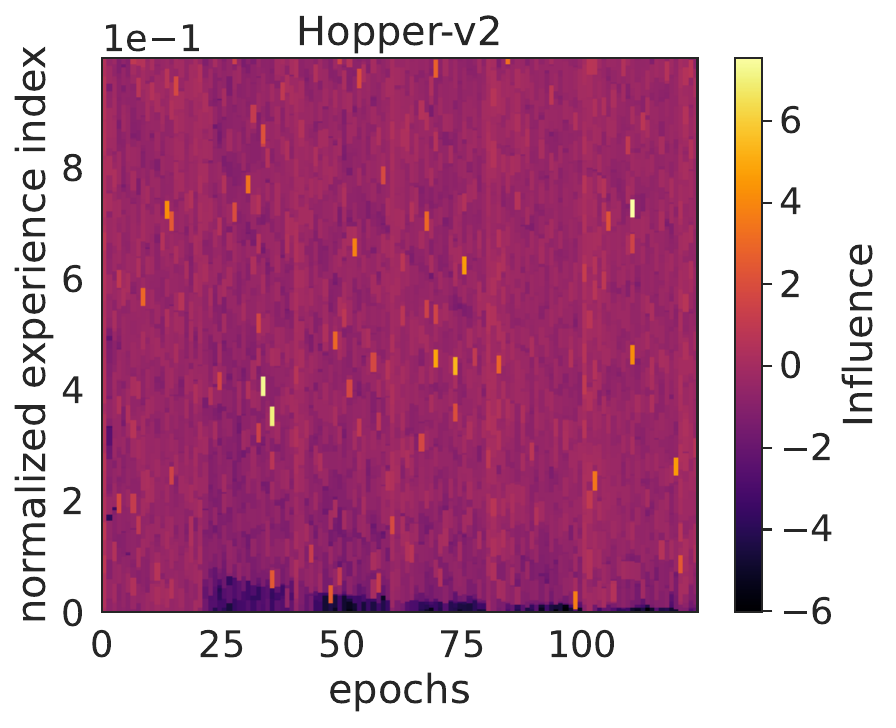}
\includegraphics[clip, width=0.245\hsize]{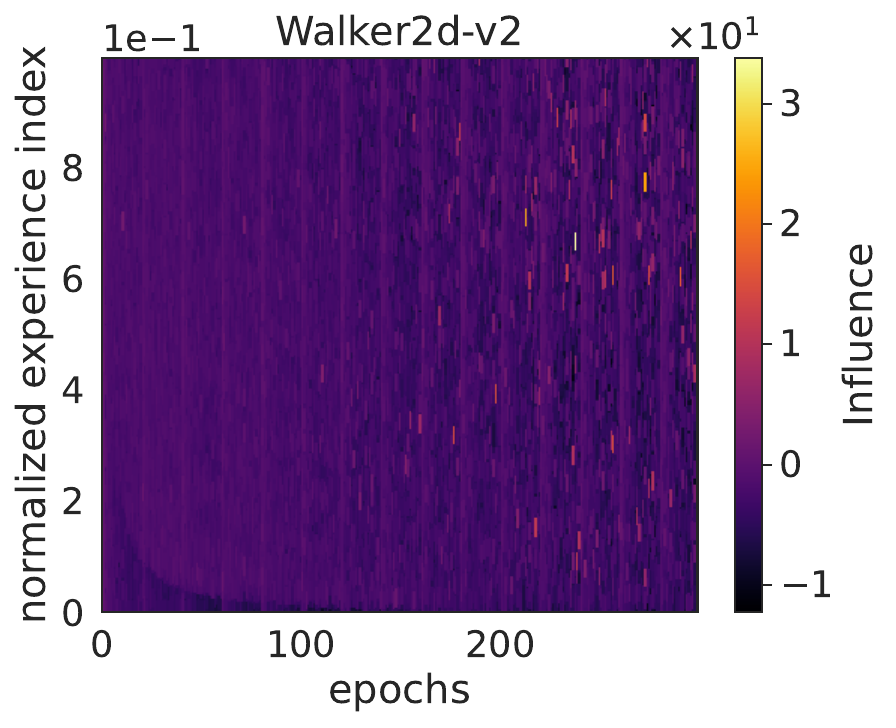}
\includegraphics[clip, width=0.245\hsize]{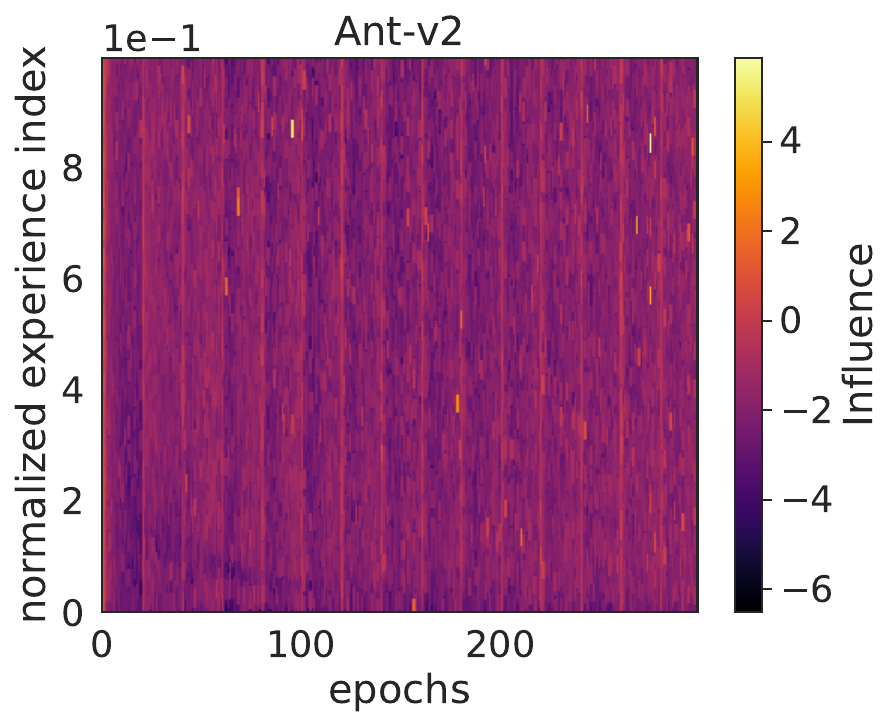}
\includegraphics[clip, width=0.245\hsize]{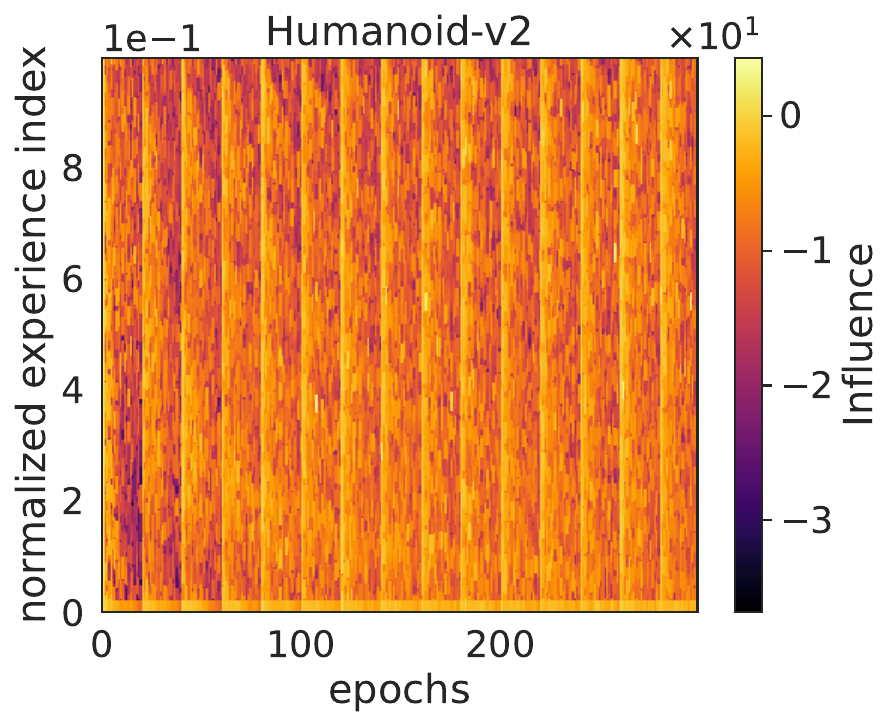}
\subcaption{Distribution of self-influence on policy improvement (Eq.~\ref{eq:self_infl_policy_func}).}
\end{minipage}
%
\caption{
Distribution of self-influence on policy evaluation and policy improvement. 
}
\label{fig:distribution_of_self_influences_reset}
\end{figure*}
\begin{figure*}[h!]
\begin{minipage}{1.0\hsize}
\includegraphics[clip, width=0.49\hsize]{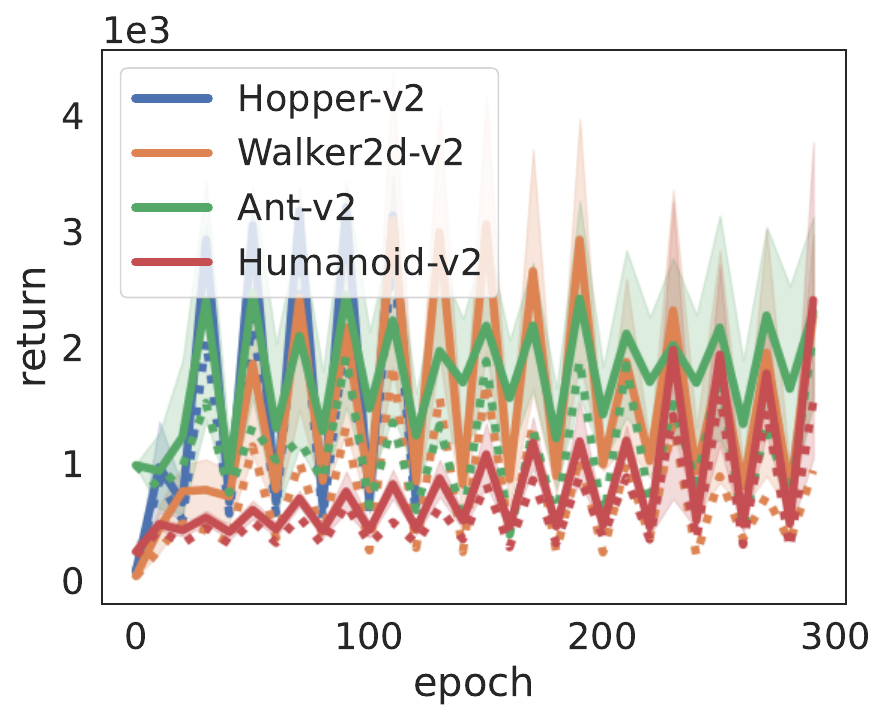}
\includegraphics[clip, width=0.49\hsize]{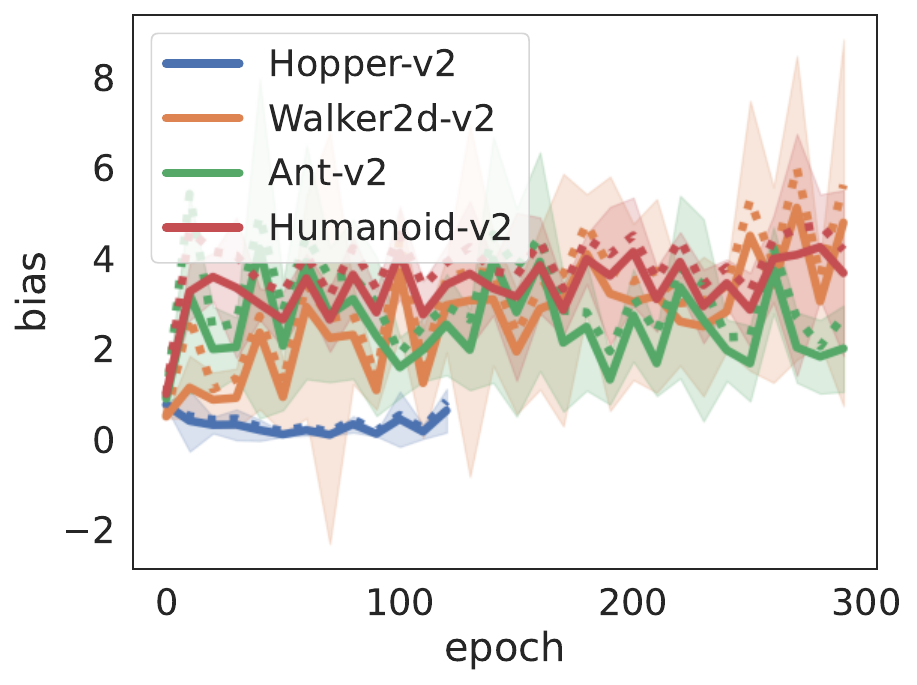}
\end{minipage}
\caption{
Results of policy amendments (left) and Q-function amendments (right) for all ten trials. 
}
\label{fig:cleansing_results_average_case_reset}
\end{figure*}
\begin{figure*}[h!]
\begin{minipage}{1.0\hsize}
\includegraphics[clip, width=0.245\hsize]{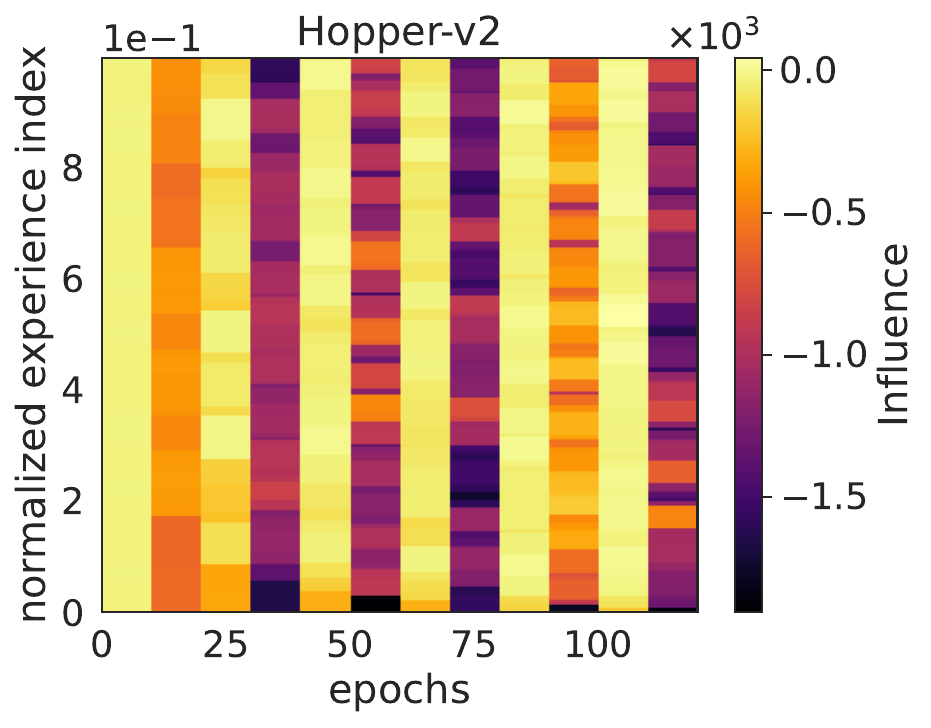}
\includegraphics[clip, width=0.245\hsize]{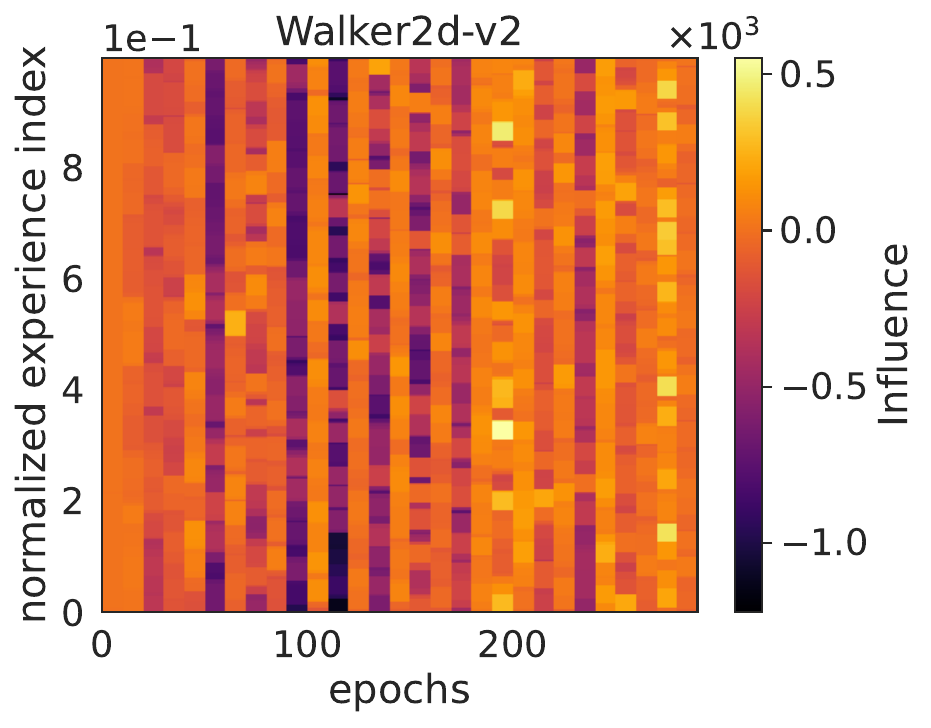}
\includegraphics[clip, width=0.245\hsize]{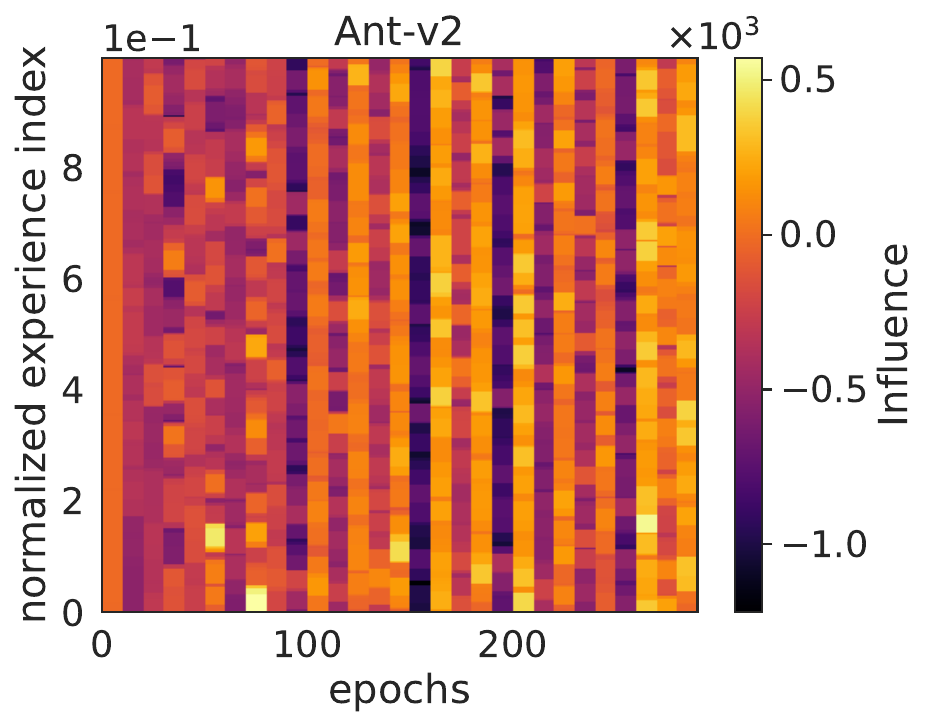}
\includegraphics[clip, width=0.245\hsize]{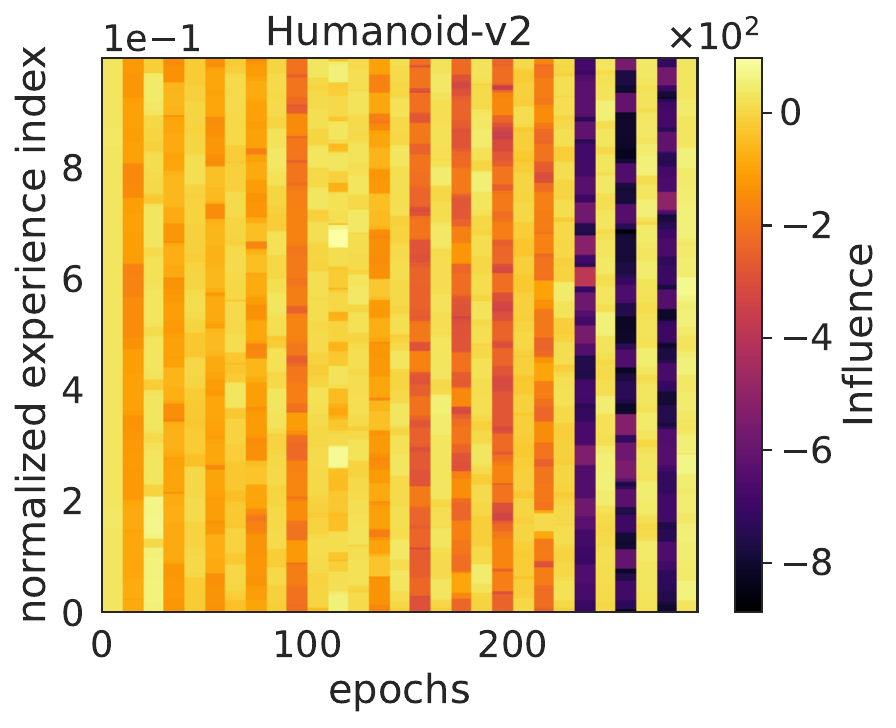}
\subcaption{Distribution of influence on return (Eq.~\ref{eq:influence_return}).}
\end{minipage}
\begin{minipage}{1.0\hsize}
\includegraphics[clip, width=0.245\hsize]{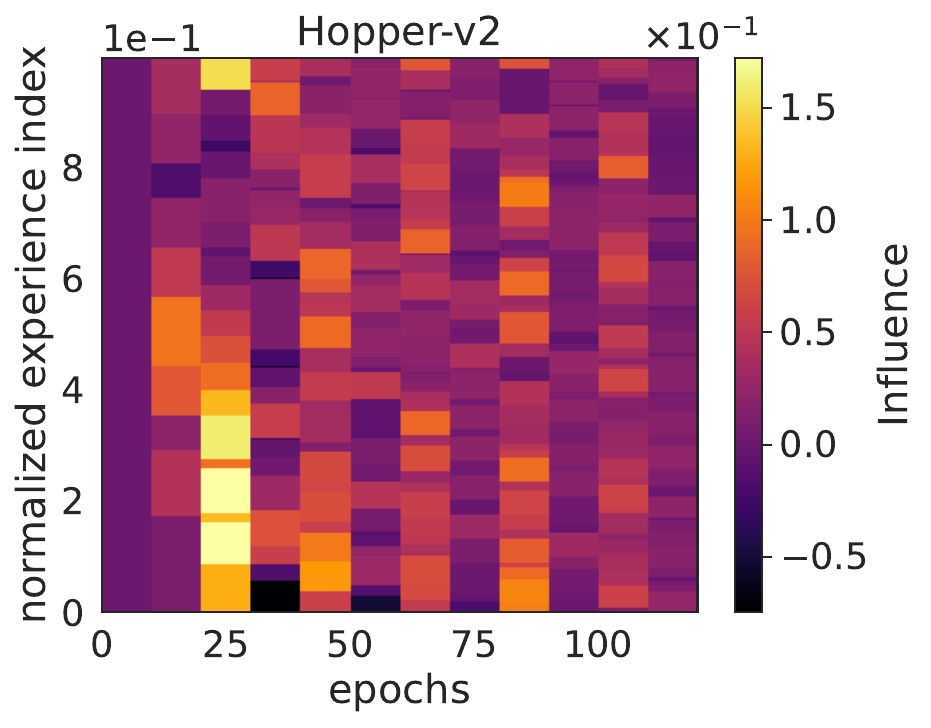}
\includegraphics[clip, width=0.245\hsize]{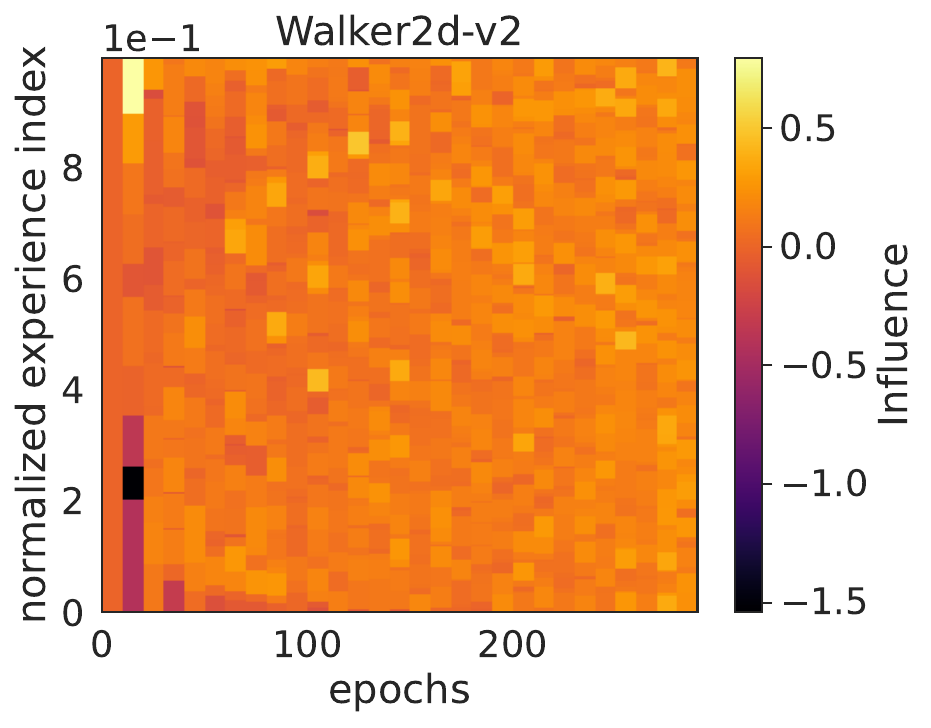}
\includegraphics[clip, width=0.245\hsize]{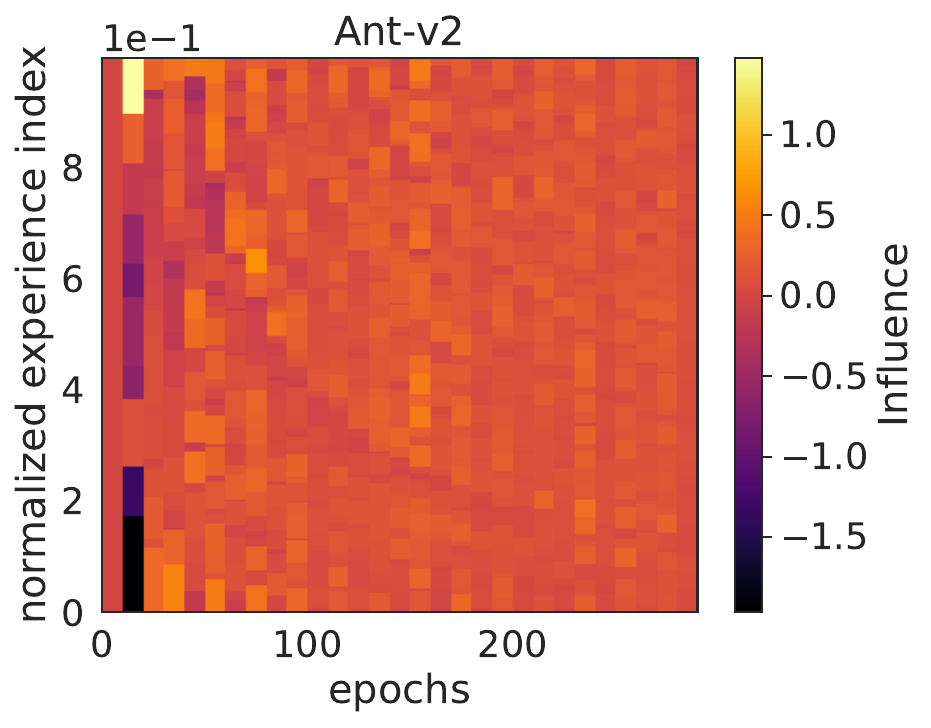}
\includegraphics[clip, width=0.245\hsize]{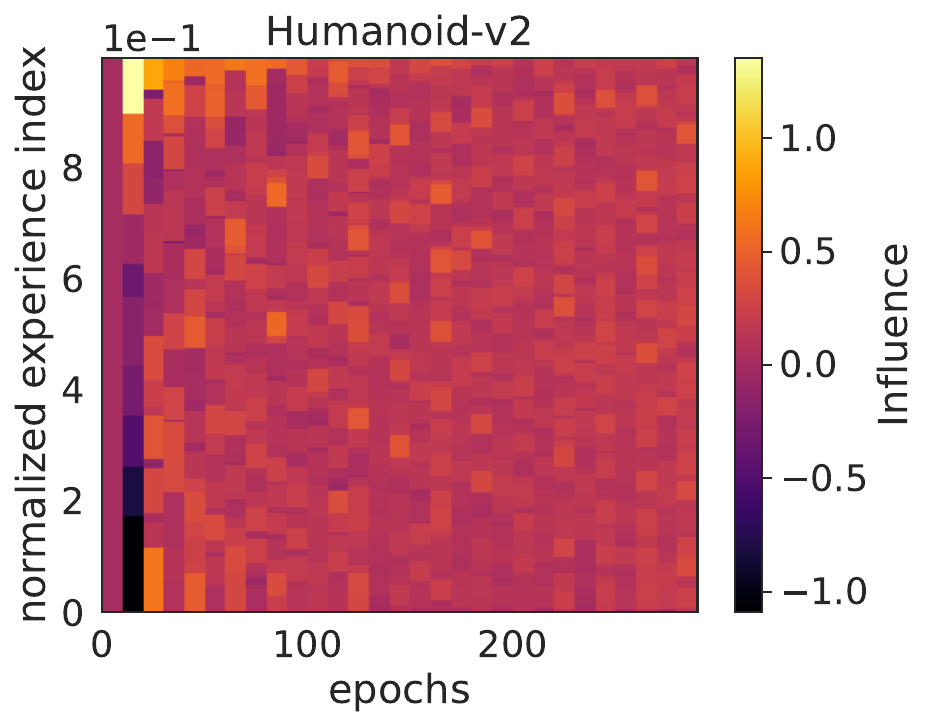}
\subcaption{Distribution of influence on Q-estimation bias (Eq.~\ref{eq:influence_bias}).}
\end{minipage}
%
\caption{
Distribution of influence on return and Q-estimation bias for all ten trials. 
}
\label{fig:distribution_of_bias_return_reset}
\end{figure*}

\clearpage
\section{Limitations and future work}\label{sec:limitations_future_work}
\textbf{Influence on exploration in RL algorithms.} In this paper, we do not consider the influence of removing experiences on the exploration process of RL algorithms (i.e., line 3 in Algorithm~\ref{alg2:PI_PIToD}). 
Considering such an influence would be an interesting direction for future research. 

\textbf{Refining implementation decisions for PIToD.} 
PIToD employs a dropout rate of $0.5$ (Section~\ref{sec:proposed_method} and Appendix~\ref{sec:analyzing_overlap_in_mask}), which can often lead to degradation in learning performance. 
To mitigate this issue, we have considered various design choices in the implementation of PIToD (Appendix~\ref{sec:practical_implementation}). 
However, further refinement may still be necessary to improve the practicality of PIToD.

\textbf{Overlap of masks.} 
PIToD assigns each experience a randomly generated binary mask (Section~\ref{sec:proposed_method}). 
When there is significant overlap between mask elements, applying the flipped mask to delete the influence of a specific experience also deletes the influence of other experiences. 
As a pathological example, if the masks $\mathbf{m}_i$ and $\mathbf{m}_{i'}$ corresponding to the experiences $e_i$ and $e_{i'}$ have a $100\%$ overlap, applying the flipped mask $\mathbf{w}_i$ completely deletes the influence of both $e_i$ and $e_{i'}$. 
Additionally, significant overlap between masks may hinder the fulfillment of Assumption~\ref{ass:sparsity} and thus compromise the theoretical property derived in Appendix~\ref{app:theory}. 
We set the dropout rate of the mask elements to minimize this overlap, but a $50\%$ overlap may still occur (Appendix~\ref{sec:analyzing_overlap_in_mask}). 
Developing practical methods to reduce mask overlap across experiences would be an important direction for future work. 

\textbf{Invasiveness of PIToD.} 
PIToD introduces invasive changes to the base PI method (e.g., DDPG or SAC) to equip it with efficient influence estimation capabilities (Section~\ref{sec:proposed_method}). 
Specifically, PIToD incorporates turn-over dropout, which may affect the learning outcomes of the base PI method. 
Consequently, PIToD may not be suitable for estimating the influence of experiences on the original learning outcomes of the base PI method. 
One direction for future work is to explore non-invasive influence estimation methods.

\textbf{Exploring surrogate evaluation metrics for amendments.} 
To amend RL agents in Section~\ref{sec:application}, we used the return-based evaluation metric $L_{\text{ret}}$, which requires additional environment interactions for evaluation. 
In our case, evaluating $L_{\text{ret}}$ required as many as $3 \cdot 10^6$ interactions (Figure~\ref{fig:additional_environment_interaction_for_amendment} in Appendix~\ref{app:additional_results}). 
These additional interactions may become a bottleneck in settings where interaction with environments is costly (e.g., real-world or slow simulator environments). 
Exploring surrogate evaluation metrics that do not require additional interactions is an interesting research direction.

\textbf{Exploring broader applications of PIToD.} 
In this paper, we applied PIToD to amend RL agents in single-task RL settings (Section~\ref{sec:application}, Appendix~\ref{app:adversarial_dmc}, and Appendix~\ref{app:amend-droq-reset}). 
However, we believe that the potential applications of PIToD extend beyond single-task RL settings. 
For instance, it could be applied to multi-task RL~\citep{vithayathil2020survey} (including multi-goal RL~\citep{liu2022goal} or meta RL~\citep{beck2023survey}), continual RL~\citep{khetarpal2022towards}, safe RL~\citep{gu2022review}, offline RL~\citep{levine2020offline}, or multi-agent RL~\citep{canese2021multi}. 
Investigating the broader applicability of PIToD in these settings is a interesting direction for future work. 
Additionally, in this paper, we estimated the influence of experiences by assigning masks to experiences. 
We may also be able to estimate the influence of specific hyperparameter values by assigning masks to them. Exploring such applications is another interesting direction for future work.

\clearpage
\section{Computational resources used in experiments}\label{app:computational_resources}
For our experiments in Section~\ref{sec:eval_computational_time}, we used a machine equipped with two Intel Xeon E5-2667 v4 CPUs and five NVIDIA Tesla K80 GPUs. 
For the experiments in Section~\ref{app:adversarial_dmc}, we used a machine equipped with
two Intel Xeon Gold 6148 CPUs and four NVIDIA V100 SXM2 GPUs.

\section{Hyperparameter settings}\label{app:hypara}
The hyperparameter settings for our experiments (Sections~\ref{sec:experiments} and \ref{sec:application}) are summarized in Table~\ref{tab:hyerparametsers}.
Basic hyperparameters, such as the learning rate and discount factor, are set as in \citet{chen2021randomized}. 
In contrast, the number of hidden units and the replay ratio are set to smaller values than in \citet{chen2021randomized} due to computational resource constraints. 
The replay buffer size is set large enough to store all experiences collected during training. 
The masking (dropout) rate is set to minimize overlap between masks, following the discussion in Section~\ref{sec:analyzing_overlap_in_mask}. 
We use different values of $I_{\text{ie}}$ in Sections~\ref{sec:experiments} and \ref{sec:application}.  
In Section~\ref{sec:experiments}, we employ computationally lighter implementations of the evaluation metric $L$ (i.e., $L_{\text{pe}, i}$ and $L_{\text{pi}, i}$), which allows influence estimation to be performed more frequently; therefore, we set $I_{\text{ie}} = 5000$.  
In contrast, in Section~\ref{sec:application}, we use heavier implementations of $L$ (i.e., $L_{\text{ret}}$ and $L_{\text{bias}}$), and thus set $I_{\text{ie}} = 50000$.
\begin{table}[h!]
\caption{Hyperparameter settings}
\label{tab:hyerparametsers}
\begin{center}
\scalebox{0.9}{
\begin{tabular}{l|l}\hline
 Parameter                               & Value          \\\hline\hline
 optimizer                               & Adam~\citep{kingma2014adam}           \\\hline
 learning rate                           & $0.0003$ \\\hline
 discount rate $\gamma$                & 0.99           \\\hline
 target-smoothing coefficient $\rho$   & 0.005          \\\hline
 replay buffer size                      & $2 \cdot 10^6$         \\\hline 
 number of hidden layers for all networks & 2              \\\hline
 number of hidden units per layer        & 128           \\\hline 
 mini-batch size                         & 256            \\\hline
 random starting data                    & 5000           \\\hline
 replay (update-to-data) ratio           & 4             \\\hline
 masking (dropout) rate & 0.5 \\\hline
 influence estimation interval $I_{\text{ie}}$ & 5000 for Section~\ref{sec:experiments} and 50000 for Section~\ref{sec:application} \\\hline
\end{tabular}
}
\end{center}
\end{table}


\end{document}